\newtheorem{theorem}{Theorem}
\newtheorem{lemma}[theorem]{Lemma}
\newtheorem{corollary}[theorem]{Corollary}
\newtheorem{conjecture}[theorem]{Conjecture}
\newtheorem{proposition}[theorem]{Proposition}
\def\*#1{\mathbf{#1}} \def\+#1{\mathcal{#1}} \def\-#1{\mathrm{#1}}\def\^#1{\mathbb{#1}}\def\!#1{\mathtt{#1}}
\newcommand{\set}[1]{\left\{#1\right\}}
\newcommand{\tuple}[1]{\left(#1\right)} 
\newcommand{\inner}[2]{\langle #1,#2\rangle} \newcommand{\tp}{\tuple}
\renewcommand{\mid}{\;\middle\vert\;} \newcommand{\cmid}{\,:\,}
\newcommand{\defeq}{:=} 
\newcommand{\ol}{\overline}
\newcommand{\argmin}{\mathop{\arg\min}}
\renewcommand{\Pr}[2][]{ \ifthenelse{\isempty{#1}}
  {\mathbf{Pr}\left[#2\right]} {\mathbf{Pr}_{#1}\left[#2\right]} }
\newcommand{\E}[2][]{ \ifthenelse{\isempty{#1}}
  {\mathbf{E}\left[#2\right]}
  {\mathbf{E}_{#1}\left[#2\right]} }
\newcommand{\Var}[2][]{ \ifthenelse{\isempty{#1}}
  {\mathbf{Var}\left[#2\right]}
  {\mathbf{Var}_{#1}\left[#2\right]} }
\newcommand{\Ent}[2][]{ \ifthenelse{\isempty{#1}}
  {\mathbf{Ent}\left[#2\right]}
  {\mathbf{Ent}_{#1}\left[#2\right]} }
\renewcommand{\emptyset}{\varnothing}
\newcommand{\Nout}{N_{\!{out}}}
\newcommand{\Nin}{N_{\!{in}}}
\newcommand{\wh}{\widehat}
\title[Improved Algorithms for Bandit with Graph Feedback]{Improved Algorithms for Bandit with Graph Feedback via Regret Decomposition}
\author{Yuchen He}
\address[Yuchen He]{Shanghai Jiao Tong University, China. \textnormal{E-mail: \url{yuchen\_he@sjtu.edu.cn}}}
\author{Chihao Zhang}
\address[Chihao Zhang]{Shanghai Jiao Tong University, China. \textnormal{E-mail: \url{chihao@sjtu.edu.cn}}}
\begin{document}

\maketitle

\begin{abstract}
  The problem of bandit with graph feedback generalizes both the multi-armed bandit (MAB) problem and the learning with expert advice problem by encoding in a directed graph how the loss vector can be observed in each round of the game. The mini-max regret is closely related to the structure of the feedback graph and their connection is far from being fully understood. We propose a new algorithmic framework for the problem based on a partition of the feedback graph. Our analysis reveals the interplay between various parts of the graph by decomposing the regret to the sum of the regret caused by small parts and the regret caused by their interaction. As a result, our algorithm can be viewed as an interpolation and generalization of the optimal algorithms for MAB and learning with expert advice. Our framework unifies previous algorithms for both strongly observable graphs and weakly observable graphs, resulting in improved and optimal regret bounds on a wide range of graph families including graphs of bounded degree and \emph{strongly observable graphs with a few corrupted arms}.
\end{abstract}
\section{Introduction}
\emph{Multi-armed bandit} (MAB) and \emph{learning with expert advice} are two canonical models in online learning and have been extensively studied in recent years. Both games proceed for $T$ rounds. In each round, the player can pull one of $N$ arms and the (adversarial) environment decides the loss of each arm. In MAB, the player can only observe the loss of the arm just pulled while in the model of learning with expert advice, the whole loss vector is visible. The goal of the player is to pull arms so that the cumulative loss in $T$ rounds is minimized. The performance of a player is usually measured by the notion of mini-max regret $R^*(T)$, the expected gap between the loss of the player's strategy and the loss of the best fixed arm against the worst loss vectors. 

Bandit with graph feedback generalizes both models in terms of the fraction of the loss vector that can be observed in each round. The $N$ arms can be viewed as the vertices in a directed feedback graph $G=(V,E)$, indexed by $\set{1,2,\dots,N}$ and an edge $(i,j)$ indicates if the arm $i$ is pulled, the loss at arm $j$ can be observed. Therefore, MAB corresponds to the case when $G$ consists of $N$ isolated vertices with self-loops, and learning with expert advice, sometimes called the full feedback model, corresponds to the case when $E=V^2$.

Tight bounds of the mini-max regret for both MAB and learning with expert advices are known. It was shown in~\cite{ACBFS02} and~\cite{FS97} that the optimal regret of two models are $\Theta\big(\tp{N\cdot T}^{\frac{1}{2}}\big)$ and $\Theta\big(\tp{\log N\cdot T}^{\frac{1}{2}}\big)$ respectively. The difference between the two regret bounds is clearly due to the amount of information the player can gather about the loss vectors. As a result, the work of~\cite{MS11} initialized the study of regret with graph feedback. 

This line of research was further extended in the work of~\cite{ACBDK15}, which classifies all graphs into three classes: non-observable graphs, strongly observable graphs and weakly observable graphs. A non-observable graph contains arms that can never be observed and thus suffers $\Theta(T)$ regret. Strongly observable graphs are interpolation of MAB and learning with expert advice so that each vertex either has a self-loop or can be observed by all other arms. The mini-max regrets of these graphs are $O\big(\tp{\alpha(G)\cdot T}^{\frac{1}{2}}\cdot \log\tp{NT}\big)$ where $\alpha(G)$ is the \emph{independence number} of $G$. The remaining graphs are called \emph{weakly observable} and it was shown that their regret is $O\big(\tp{\delta(G)\log N}^{\frac{1}{3}}\cdot T^{\frac{2}{3}}\big)$ where the $\delta(G)$ is the \emph{domination number} of $G$. The bound has been recently improved  to $O\big(\tp{\delta^*(G)\log N}^{\frac{1}{3}}\cdot T^{\frac{2}{3}}\big)$ in~\cite{CHLZ21} where $\delta^*(G)$ is the \emph{fractional dominating number} of $G$ satisfying $\delta^*(G)\le \delta(G)$. The ultimate goal in this line of research is to answer the following question:

\smallskip
\begin{quote}
	\emph{How the structure of the feedback graph affects the mini-max regret?}	
\end{quote}

\smallskip
Unfortunately, all previous results are not optimal even on very simple feedback graphs. Consider an undirected cycle with $2N$ vertices. We have $\delta(G)=\delta^*(G)=N$ and therefore previous algorithms have regret $O\big(\tp{N\log N}^{\frac{1}{3}}\cdot T^{\frac{2}{3}}\big)$. On the other hand, it was shown in~\cite{CHLZ21} that the lower bound on this family of graphs is $\Omega\big(N^{\frac{1}{3}}\cdot T^{\frac{2}{3}}\big)$.
	
Despite the gap between current upper and lower bounds on specific instances, there seems to be some technical barrier for the algorithm design. Almost all current algorithms for bandit with graph feedback in adversarial setting are variants of \emph{online stochastic mirror descent} (OSMD). The choice of the potential function is key to an optimal algorithm and relies on the feedback structure. An empirical fact is that, if the feedback graph is sparse (e.g., MAB), Tsallis entropy is the optimal choice while for dense feedback graphs (e.g., learning with expert advice, or the complete bipartite graphs studied in~\cite{CHLZ21}), the negative entropy results in optimal regret. Is there a uniform treatment for all graphs, or in other words, can we interpolate between various potential functions?
	
We propose to answer the above question via first understanding the following instance: Suppose there are $m$ graphs $G_1,G_2,\dots,G_m$ and we know the optimal algorithm for them respectively. What is the optimal algorithm for $G\defeq\bigcup_{\bar k\in [m]} G_{\bar k}$\footnote{We prefer to use $\bar k$ as the index for subgraphs throughout the paper.}, which is the \emph{disjoint union} of these $m$ graphs. This model interpolates between MAB (let each $G_{\bar k}$ be two singleton vertices with self-loops) and full feedback graph (let $m=1$ and $G_1$ be the full feedback graph). 
	
In this article, we study a more general setting. Let $G=(V,E)$ and $V_1,\dots,V_m$ be a partition of $V$. For every $\bar k\in[m]$, let $G_{\bar k}=G[V_{\bar k}]$ be the subgraph of $G$ induced by $V_{\bar k}$. We design an algorithm for $G$ by viewing it as a graph made up of small graphs. To this end, we define the \emph{incidence graph} $H=(V_H,E_H)$ where $V_H=[m]$ and $(i,j)\in E_H$ iff there are some $(u,v)\in E$ with $u\in V_i$ and $v\in V_j$. Given any sequence of the loss vectors $\ell^{(1)},\dots,\ell^{(T)}$ in $T$ rounds, we can define the \emph{projection instance}, namely the instance with feedback graph $H$ (along with carefully designed ``projected'' loss vectors $L^{(1)},\dots,L^{(T)}$) and $m$ \emph{restriction instances}, namely the instances with feedback graph $G_{\bar k}$ for all $\bar k\in[m]$ (along with the restriction of $\ell^{(1)},\dots,\ell^{(T)}$ on $G_{\bar k}$).
	 
We propose a new algorithmic framework for solving the problem. We simultaneously maintain $m+1$ OSMD algorithms for the projection instance and all the restriction instances. In each round, we first choose a subgraph $G_{\bar k}$ for $\bar k\in [m]$ according to the information provided by the projection instance, and then pick the arm in $G_{\bar k}$ following the information provided by the restriction instance on $G_{\bar k}$. Surprisingly, the regret of this two-level OSMD can be nicely decomposed into the sum of regret of the projection instance and the regret of the restriction instance containing the optimal arm (plus some exploration penalties). An informal statement of our regret decomposition theorem is \Cref{thm:decomp-informal} below and its formal statement is \Cref{thm:general regret} in \Cref{sec:regret-decomp}.
	
\begin{theorem}[Regret Decomposition Theorem, informal]\label{thm:decomp-informal} 
	There exists an algorithm such that the regret $R_G(T)$ on $G$ against any loss vector $\ell^{(1)},\dots,\ell^{(T)}$ can be decomposed as
	\[
		R_G(T) \le R_H(T) + R_{G_{\bar k^*}}(T)+ [\mbox{exploration penalty for $H$}] + [\mbox{exploration penalty for $G_{\bar k^*}$}],
	\]
	where $G_{\bar k^*}$ is the subgraph containing the optimal arm.
\end{theorem}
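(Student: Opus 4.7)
The plan is to isolate the contribution of the projection OSMD (running on $H$) and the restriction OSMD on the subgraph $G_{\bar k^*}$ through a telescoping identity, and then invoke two single-graph OSMD regret guarantees.

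Denote by $p^{(t)}\in\Delta([m])$ the distribution maintained by the projection OSMD and by $q^{(t)}_{\bar k}\in\Delta(V_{\bar k})$ the distribution maintained by the $\bar k$-th restriction OSMD. At round $t$ the algorithm draws $K^{(t)}\sim p^{(t)}$ and then $I^{(t)}\sim q^{(t)}_{K^{(t)}}$. Define the projected loss
\[
L^{(t)}_{\bar k} \defeq \sum_{i\in V_{\bar k}} q^{(t)}_{\bar k}(i)\,\ell^{(t)}_i,
\]
which is the conditional expected loss given $K^{(t)}=\bar k$. If $i^*$ is the best fixed arm and $\bar k^*$ is the (unique) index with $i^*\in V_{\bar k^*}$, then, conditioning on the history and telescoping through $\sum_t L^{(t)}_{\bar k^*}$,
\[
R_G(T) \;=\; \E{\sum_{t=1}^T\sum_{\bar k\in[m]} p^{(t)}_{\bar k}\, L^{(t)}_{\bar k} - \sum_{t=1}^T L^{(t)}_{\bar k^*}} + \E{\sum_{t=1}^T L^{(t)}_{\bar k^*} - \sum_{t=1}^T \ell^{(t)}_{i^*}}.
\]

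The first bracket is the regret of the projection OSMD on $H$ against the fixed arm $\bar k^*$, provided we feed it an unbiased estimator of $L^{(t)}$ built from the same observations as the outer algorithm: observing a vertex in $V_j$ reveals the summand of $L^{(t)}_j$, and the graph structure of these revelations is precisely $H$ by the definition of the incidence graph. A standard OSMD analysis on the feedback graph $H$ therefore upper-bounds the first term by $R_H(T)$ plus the variance-based exploration penalty on $H$. The second bracket is the regret of the restriction OSMD on $G_{\bar k^*}$ against $i^*$: by construction $L^{(t)}_{\bar k^*}$ is its instantaneous expected loss, and an observation of $i\in V_{\bar k^*}$ occurs whenever we pull any $j\in V$ with $(j,i)\in E$, so an unbiased loss estimator supported on $V_{\bar k^*}$ can be assembled in every round (not only those with $K^{(t)}=\bar k^*$). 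The OSMD regret guarantee for $G_{\bar k^*}$ then bounds the second term by $R_{G_{\bar k^*}}(T)$ plus the exploration penalty on $G_{\bar k^*}$.

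The main obstacle is engineering the two loss estimators so that each OSMD effectively sees the ``right'' observability graph. For the projection estimator this means being unbiased for $L^{(t)}$ conditional on the history while inheriting the observation pattern of $H$; the definition of $H$ via cross-edges of $G$ is exactly what makes this possible. For the restriction estimator the difficulty is that the effective observation probability of $i\in V_{\bar k^*}$ has the form $\sum_{(j,i)\in E} p^{(t)}_{\bar k(j)}\, q^{(t)}_{\bar k(j)}(j)$, where $\bar k(j)$ is the subgraph index of $j$; this mixing across $p^{(t)}$ and the other $q^{(t)}_{\bar k'}$ must be disentangled so that the resulting variance term behaves like a genuine exploration penalty on $G_{\bar k^*}$ alone. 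Once these estimators are correctly defined and their variances are controlled by the standard OSMD stability analysis, the two single-graph regret bounds combine via the displayed decomposition and yield the claimed inequality.
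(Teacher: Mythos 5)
Your high-level plan is the same as the paper's: define a block-level loss as the within-block average of the arm losses under the restriction distribution, telescope the regret through $\sum_t L^{(t)}_{\bar k^*}$ so that the first bracket is the projection-instance regret against the fixed block $\bar k^*$ and the second is the restriction-instance regret against $i^*$, and then apply the generic OSMD guarantee to each level. Your $L^{(t)}_{\bar k}$ is exactly the paper's $\wh L^{(t)}(\bar k)$ (in expectation), and the observation that this choice makes the middle term cancel is indeed the crux of the argument.

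There is, however, a genuine gap concerning the two exploration penalties, which are part of the statement you are proving but never actually materialize in your argument. You write the decomposition as an exact identity in terms of the unperturbed iterates $p^{(t)}$ and $q^{(t)}_{\bar k}$, and then assert that each bracket is bounded by the corresponding OSMD regret ``plus the variance-based exploration penalty.'' In the paper the exploration penalties are \emph{not} the OSMD variance terms (those are absorbed into $R_H(T)$ and $R_{G_{\bar k^*}}(T)$); they are separate additive bias terms $\sum_t\sum_{\bar k,j}\gamma^{(t)}((\bar k,j))$ and $\sum_t\sum_j\gamma^{(t)}_{\bar k^*}(j)$ that arise because the arm actually played is drawn from a \emph{perturbed} distribution $Z^{(t)}$, obtained by mixing forced exploration into both levels (globally via $\gamma^{(t)}$ and locally via $\gamma^{(t)}_{\bar k}$). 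The global penalty appears when you replace $\langle \ell^{(t)}, Z^{(t)}\rangle$ by the two-level product $\sum_{\bar k} Y^{(t)}(\bar k)\sum_j \tilde X^{(t)}_{\bar k}(j)\ell^{(t)}((\bar k,j))$; the local penalty appears when you replace the exploration-mixed $\tilde X^{(t)}_{\bar k^*}$ by the OSMD iterate $X^{(t)}_{\bar k^*}$ in the term $\E{\wh L^{(t)}(\bar k^*)}$. Without this forced exploration the observation probabilities $\sum_{a\in \Nin((\bar k,j))}Z^{(t)}(a)$ that you correctly flag as ``the main obstacle'' cannot be lower-bounded, so the variance terms you hope to control would be unbounded; the paper does not ``disentangle'' them but simply uses the full observation probability in the estimator and lets the exploration terms guarantee a floor on it. So your skeleton is right, but you need to (i) run the telescoping with the perturbed sampling distribution, which is what generates the two penalty terms, and (ii) not expect the penalties to fall out of the OSMD stability analysis itself.
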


	Our algorithm allows that the graphs $G_1,\dots,G_m$ are a mixture of strongly observable graphs and weakly observable graphs. Moreover, it allows to use different potential functions on the projection instance $H$ and on each restriction instance $G_{\bar k}$. This property is crucial to obtain optimal algorithms in a uniform way. 

	The regret decomposition theorem does not provide an explicit regret bound. For a specific instance, one needs to realize it with concrete potential functions and exploration rates. We therefore introduce some ways of the realizations of the regret decomposition theorem, depending on the partition and the graph structure. 
	
	A natural realization, with a heuristic on how to partition the graph, is described in \Cref{sec:realization}. The potential function we choose for the projection instance $H$ is a separable function $\Psi(\*y)=\sum_{\bar k\in [m]}\Psi_{\bar k}(\*y(\bar k))$ where if $\bar k$ is in the ``strong observable part'' (formally defined in \Cref{sec:decomp}) without self-loop, then $\Psi_{\bar k}$  is the negative entropy and otherwise $\Psi_{\bar k}$ is the Tsallis entropy. The potential functions we choose for restriction instances are negative entropies. This special realization results in a concrete upper bound stated in \Cref{thm:realized-upper-bound}, which is already better than previous algorithm on many instances. We then introduce a more sophisticated realization with \emph{adaptive} exploration. This realization outperforms the previous one on graphs with bounded degree and results in optimal regret in many cases. We also discuss the issue on how to find an optimal partition in general in \Cref{sec:realization}.

	We show that our new algorithmic framework accurately captures the regret of the bandit with graph feedback by introducing some applications of these realizations, . We first consider those $C$-corrupted strongly observable graphs. That is, the weakly observable graphs containing at most $C$ vertices that are not strongly observable. In~\cite{ACBDK15}, it was shown that as long as one vertex in a strongly observable graph becomes weakly observable (by removing the self-loop or an edge incident to it, say), the regret's dependency on $T$ suddenly changes from $T^{\frac{1}{2}}$ to $T^{\frac{2}{3}}$. However, it was not clear how the dependency on the graph $G$ is changed. We prove that
	
	\begin{theorem}\label{thm:c-corrupted intro}
	If $G$ is a weakly observable graph containing at most $C$ vertices which are not strongly observable, then for sufficiently large $T$, any loss sequence $\ell^{(1)},\dots,\ell^{(T)}$, the regret of our realization is at most $9\cdot\tp{4C}^{\frac{1}{3}}\cdot T^{\frac{2}{3}}$. 
	\end{theorem}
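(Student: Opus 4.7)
The plan is to apply the Regret Decomposition Theorem (\Cref{thm:decomp-informal}) with a partition that isolates the corrupted arms from the strongly observable bulk. Let $W\subseteq V$ denote the set of at most $C$ vertices that are not strongly observable and let $S = V\setminus W$. I would choose the partition in which each vertex of $W$ forms a singleton part and $S$ forms one additional part, so that $m\le C+1$ and the incidence graph $H$ has at most $C+1$ vertices. Since a vertex strongly observable in $G$ remains strongly observable once other vertices are deleted (the self-loop survives, and so does the property that every other vertex of $S$ has an edge to it), the induced subgraph $G[S]$ is strongly observable.

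With this partition the restriction regrets are cheap: each of the at most $C$ singleton restrictions contributes zero regret, and running OSMD with the negative entropy on $G[S]$ yields regret $\widetilde{O}\bigl(\sqrt{\alpha(G[S])\,T}\bigr) = o(T^{2/3})$. Therefore, by the decomposition, the total regret is dominated by the projection regret on $H$ plus the exploration penalty. On $H$, which has at most $C+1$ vertices, I would use the realization of \Cref{sec:realization}: apply Tsallis entropy on each singleton part corresponding to a vertex of $W$ without self-loop (this is exactly the case where the heuristic switches away from negative entropy, and is what lets us avoid the $\log$ factor) and the appropriate potential on the remaining part. Since $|V_H|\le C+1$, its fractional dominating number satisfies $\delta^*(H) = O(C)$, and the exploration distribution need only be supported on a dominating set of $W$ in $H$. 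Tuning the exploration rate $\gamma \sim (C/T)^{1/3}$ to balance the exploration penalty $O(\gamma T)$ against the OSMD contribution of order $\sqrt{C/\gamma}$ produces a leading term of order $(4C)^{1/3} T^{2/3}$.

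The main obstacle is twofold. First, eliminating the $\log N$ factor that appears in the $\delta^*$-based bound of \cite{CHLZ21} requires the Tsallis-entropy realization on the weakly observable singletons of $H$; this is only feasible because the partition has made the weak vertices explicit as their own parts of $H$, so one can apply the sharper Tsallis analysis locally without polluting the strongly observable part. Second, verifying that the two-level OSMD truly yields the clean decomposition of \Cref{thm:decomp-informal} without an extra cross-term between the $G[S]$ restriction and the projection on $H$ hinges on the formal statement \Cref{thm:general regret}, whose exploration-penalty accounting must be instantiated with the above potentials and $\gamma$. All remaining contributions --- the $\widetilde{O}(\sqrt{T})$ restriction regret on $S$, the zero-regret singleton restrictions, and lower-order decomposition overhead --- are absorbed into the leading term once $T$ is sufficiently large, yielding the claimed bound $9\cdot(4C)^{1/3} T^{2/3}$.
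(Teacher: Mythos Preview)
Your partition is the reverse of what the framework requires, and it is not a legal partition in the sense of \Cref{sec:decomp}. A singleton block $\{v\}$ with $v\in W$ weakly observable has no self-loop, so $G[\{v\}]$ is non-observable; this already violates the requirement that every $G[V_{\bar k}]$ be observable. Moreover such a singleton cannot sit in $U_1$ (which demands that the lone vertex be strongly observable in $G$) and cannot sit in $U_2$ (which demands $n_{\bar k}>1$). Hence none of \Cref{thm:general regret}, \Cref{thm:realized-upper-bound}, or \Cref{thm:adapt-realized-upper-bound} applies to your decomposition, and the Tsallis-on-singletons step you invoke for the projection instance is exactly the $U_1^{\ol S}$ mechanism, which is only available for \emph{strongly} observable loopless singletons.

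The paper's partition swaps the roles: every strongly observable vertex becomes its own singleton block (these populate $U_1$), and the corrupted set $W$ is gathered into a \emph{single} block placed in $U_2$. One subtlety you did not address is that $G[W]$ need not be observable; the paper fixes this by, for each $u\in W$ unobservable inside $G[W]$, adjoining one strongly observable in-neighbour of $u$ to the block, yielding a block of size at most $2C$ that is observable. With $|U_2|=1$, $n_{\bar k}\le 2C$, and $\delta^*_{\bar k}\le 2C$, the adaptive realization \Cref{thm:adapt-realized-upper-bound} gives the $T^{2/3}$ terms of order $(4C)^{1/3}T^{2/3}$ directly; the $\sqrt{|U_1^S|T}$ and $\sqrt{\log|U_1^{\ol S}|\cdot T}$ contributions from the strongly observable singletons are the only places $N$ enters, and they are absorbed for sufficiently large $T$. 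Your attempt to run a standalone strongly-observable analysis on $G[S]$ as a restriction instance is also not justified by the decomposition: the restriction estimators $\hat\ell^{(t)}_{\bar k}$ have denominators $\sum_{a\in\Nin((\bar k,j))}Z^{(t)}(a)$ involving the \emph{global} play distribution, so one cannot simply import the $\tilde O(\sqrt{\alpha T})$ bound.
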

	The upper bound contains no term in $\abs{G}$ and is tight in terms of $C$. It can be explained by our decomposition theorem as follows: We can decompose the graph into (at least) two parts, one containing strongly observable vertices and the other one containing those $C$ corrupted vertices. The regret from the first part is $\tilde O\big(\alpha(G)\cdot T^{\frac{1}{2}}\big)$ and the regret from the second part is $O\big(C^{\frac{1}{3}}\cdot T^{\frac{2}{3}}\big)$. It would be clear from the bounds in \Cref{sec:realization} that the regret of $G$ is dominated by the sum of the two, and therefore dominated by $O\big(C^{\frac{1}{3}}\cdot T^{\frac{2}{3}}\big)$ for sufficiently large $T$. This also explains the phenomenon of ``abrupt change in regret'' on \emph{loopy stars} discussed in~\cite{ACBDK15} and improves results therein.
	
	We then consider the disjoint union of graphs mentioned before. Generally speaking, one can always plug previous OSMD algorithm for each disjoint subgraph into our two-level algorithmic framework and obtain improved algorithm for the whole graph. For example, we prove that
	
	\begin{theorem}\label{thm:union-of-cliques intro}
	If $G$ is the disjoint union of $m\ge 2$ loop-less cliques and the $\bar k^{\!{th}}$ clique is of size $n_{\bar k}$. Then the mini-max regret of $G$ satisfies 
	\[
	R_G^*(T) = O\Big(\Big(\sum_{\bar k=1}^m \log n_{\bar k}\Big)^{\frac{1}{3}}\cdot T^{\frac{2}{3}}\Big).
	\]
\end{theorem}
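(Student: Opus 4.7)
The plan is to invoke the regret decomposition framework of \Cref{thm:decomp-informal} with the natural partition of $G$ into its $m$ constituent cliques. Set $V_{\bar k}$ to be the vertex set of the $\bar k$-th clique. Each restriction $G_{\bar k}$ is itself a loop-less clique on $n_{\bar k}$ vertices, hence strongly observable with $\alpha(G_{\bar k})=1$, since every vertex is observed from every other vertex in the clique. Because the cliques are vertex-disjoint with no inter-clique edges, the incidence graph $H$ consists of $m$ vertices each carrying a self-loop (coming from the internal edges of the corresponding clique) and no other edges; it is thus isomorphic to the MAB feedback graph on $m$ arms, strongly observable with $\alpha(H)=m$. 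The full graph $G$ is only weakly observable with $\delta^*(G)=m$, so a $T^{2/3}$ regret is to be expected.

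To realize the framework, I would follow the heuristic of \Cref{sec:realization}. Since every vertex of $H$ carries a self-loop, use Tsallis entropy in every component of the separable potential $\Psi_H(\*y)=\sum_{\bar k}\Psi_{\bar k}(y_{\bar k})$ on the projection instance, and use negative entropy on each restriction instance $G_{\bar k}$. The top-level OSMD thus runs MAB on $m$ arms with Tsallis entropy and contributes $O(\sqrt{mT})$ to the regret, while the bottom-level OSMD on $G_{\bar k^*}$ contributes $O\bigl(\sqrt{\log n_{\bar k^*}\cdot T}\bigr)$ before exploration. Since $G$ is weakly observable, forced exploration across the $m$ subgraphs must be mixed in; I would use the adaptive variant discussed after \Cref{thm:realized-upper-bound}, tuning the per-subgraph exploration rate $\gamma_{\bar k}$ so that the cumulative variance inflation across the bottom-level OSMDs is proportional to $\sum_{\bar k}\log n_{\bar k}$.

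Combining the bounds via the regret decomposition theorem should then yield an inequality of the shape
\[
R_G(T) \;\le\; O\!\bigl(\sqrt{mT}\bigr) + O\!\bigl(\sqrt{S\cdot T/\gamma}\bigr) + O(\gamma T),
\]
with $S=\sum_{\bar k}\log n_{\bar k}$ and $\gamma=\sum_{\bar k}\gamma_{\bar k}$. Optimizing $\gamma=\Theta\bigl(S^{1/3}T^{-1/3}\bigr)$ balances the last two terms at $O\bigl(S^{1/3}T^{2/3}\bigr)$; the $\sqrt{mT}$ term from $R_H$ is dominated by this once $T$ exceeds a constant multiple of $m$, since $n_{\bar k}\ge 2$ forces $S\ge m\log 2$.

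The main obstacle lies in the precise accounting sketched above: demonstrating that the exploration-induced variance on the bottom-level OSMDs accumulates to $S/\gamma$ across subgraphs, rather than the looser $mS/\gamma$ or $m\log N/\gamma$ one would obtain from a monolithic analysis via $\delta^*(G)=m$. This is exactly where the separable structure of the top-level potential is essential, since it localizes each subgraph's contribution to its own $\log n_{\bar k}$ rather than to the global $\log N$; the adaptive choice $\gamma_{\bar k}\propto\sqrt{\log n_{\bar k}}$ then optimally distributes the exploration budget across the $m$ subgraphs. I expect that once the per-subgraph regret bound from \Cref{thm:realized-upper-bound} is made explicit with these tunings, the claimed scaling follows by direct computation.
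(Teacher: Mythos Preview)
Your overall plan---partition $G$ into its $m$ cliques, place all blocks in $U_2$ (since none is a singleton), run Tsallis-entropy OSMD on the projection instance and negative-entropy OSMD on each restriction instance, then invoke the decomposition theorem---is exactly the paper's route. The paper just reads off \Cref{thm:realized-upper-bound} in the case $U_2\ne\emptyset$, $U_1^{\ol S}=\emptyset$ with $|U_2|=m$ and $\delta^*_{\bar k}=n_{\bar k}/(n_{\bar k}-1)\le 2$, obtaining the two leading terms $O\bigl(m^{1/3}T^{2/3}\bigr)+O\bigl((\sum_{\bar k}\log n_{\bar k})^{1/3}T^{2/3}\bigr)$, the first dominated because $n_{\bar k}\ge 2$.

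There is a real gap in your accounting, though. You claim the top-level OSMD ``contributes $O(\sqrt{mT})$'' as if the projection instance were standard MAB with bounded losses. It is not: the block-loss estimator $\wh L^{(t)}(\bar k)=\sum_{j}\tilde X_{\bar k}^{(t)}(j)\,\hat\ell^{(t)}_{\bar k}(j)$ has variance governed by the probability of observing arms \emph{inside} the clique $G_{\bar k}$, and in a loop-less clique this probability can collapse if $X^{(t)}_{\bar k}$ concentrates on one arm. The paper therefore uses \emph{two} exploration mechanisms, not one: a \emph{local} rate $\gamma_{\bar k}^{(t)}(j)=\frac{x^*_{\bar k,j}}{\delta^*_{\bar k}}\alpha$ within each block (this is what makes \Cref{lem:realized-regretY} go through) and the \emph{global} rate $\gamma^{(t)}((\bar k,j))\propto\beta$ you already have (this drives \Cref{lem:realized-regretX}). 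With local exploration in place, the projection-side bound becomes $\sqrt{|U_2|}/\eta+\eta T\sqrt{\sum_{\bar k}(\delta^*_{\bar k})^2}/\alpha+\alpha T$, which after tuning $\eta,\alpha$ is $O(m^{1/3}T^{2/3})$, not $O(\sqrt{mT})$. Your single-$\gamma$ optimization $O(\sqrt{mT})+O(\sqrt{ST/\gamma})+O(\gamma T)$ therefore does not follow from the decomposition theorem as stated. The correct projection term is still dominated by the restriction term, so the final scaling you claim is right, but for a different reason than the one you give. (Two smaller points: by the paper's definition the incidence graph $H$ has no self-loops, so it is not the MAB graph---though this is irrelevant since the algorithm never uses $H$'s feedback structure; and the adaptive realization is unnecessary here, the non-adaptive \Cref{thm:realized-upper-bound} already gives the bound.)
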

We further apply our algorithm to graphs of bounded degree and obtain optimal algorithms. This resolves an open problem in~\cite{CHLZ21} where they asked for the optimal algorithm for undirected cycles.

\begin{theorem}\label{thm:bounded-degree}
	If a directed weakly observable graph $G$ is of bounded in-degree with $N$ vertices, then for any sufficiently large $T>0$ and any loss vector $\ell^{(1)},\dots,\ell^{(T)}$, the regret is 
	$
	O\big(N^{\frac{1}{3}}\cdot T^{\frac{2}{3}}\big).
	$
\end{theorem}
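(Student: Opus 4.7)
The plan is to invoke the Regret Decomposition Theorem (\Cref{thm:decomp-informal}) with a partition tailored to the bounded in-degree structure of $G$, combined with the adaptive-exploration realization promised in \Cref{sec:realization}. The key feature of bounded in-degree $d$ is that we can partition $V$ into $m \le N$ blocks of size $O(d)=O(1)$ such that each induced subgraph $G_{\bar k}$ is \emph{strongly} observable, thereby pushing all the weak-observability difficulty into the incidence graph $H$.

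\emph{Partition.} Since $G$ is weakly observable, every vertex has at least one in-neighbor (possibly itself). I would build the partition greedily: for each vertex $u$ without a self-loop, place $u$ in a block together with one in-neighbor $v$ of $u$, designating $v$ as a ``king'' that observes $u$ inside the block. Vertices already serving as kings or bearing self-loops may form singleton blocks. Because the in-degree is bounded by the constant $d$, each block has $O(1)$ vertices and is strongly observable within itself (the king observes all non-self-loop members).

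\emph{Realization and assembly.} On each restriction instance $G_{\bar k}$, run OSMD with Tsallis entropy: since $|V_{\bar k}|=O(1)$ and $G_{\bar k}$ is strongly observable, the standard analysis gives $R_{G_{\bar k^*}}(T)=O(\sqrt{T})$. On the projection instance $H$, which has $m \le N$ vertices and inherits weak observability from $G$, I would apply the adaptive-exploration realization of \Cref{sec:realization} with Tsallis-type potentials on the weakly observable components and per-component exploration rates tuned around $\gamma=\Theta\bigl((m/T)^{1/3}\bigr)$. By the decomposition,
\[
  R_G(T) \le R_H(T) + R_{G_{\bar k^*}}(T) + (\text{exploration penalty for } H) + (\text{exploration penalty for } G_{\bar k^*}),
\]
and the adaptive realization on $H$ combined with its exploration cost yields $O(m^{1/3} T^{2/3})$, which is $O(N^{1/3} T^{2/3})$ since $m \le N$. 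The restriction-side terms contribute only $O(\sqrt{T})$, dominated for $T$ sufficiently large.

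\emph{Main obstacle.} The technical heart is showing that the adaptive-exploration realization on $H$ achieves $O(m^{1/3} T^{2/3})$ \emph{without} the $\log^{1/3} m$ factor present in the bound of~\cite{CHLZ21}: this is precisely the gain promised by using Tsallis-type potentials rather than negative entropy on weakly observable components, coupled with per-component rather than uniform exploration. Verifying this requires a careful OSMD analysis with mixed potentials and the fact that bounded in-degree keeps the fractional dominating structure of $H$ compatible with Tsallis-type variance bounds. A secondary verification is that the greedy partition can always be completed so that every block is strongly observable of bounded size, which follows from weak observability of $G$ together with the in-degree bound.
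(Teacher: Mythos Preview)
The paper's proof is far simpler than your proposal: it takes the \emph{trivial} partition consisting of a single block $V_1=V$, so that $U_2=\{1\}$ and the incidence graph $H$ is a single vertex. With this partition the projection instance contributes nothing, and the entire regret comes from the one restriction instance $G$ itself. The adaptive-exploration realization (\Cref{thm:adapt-realized-upper-bound}, via \Cref{lem:adapt-realized-regretX}) applied to this single block---Tsallis potential together with the adaptive global exploration $\gamma^{(t)}((\bar k,j))\propto\sum_{i\in N_{\mathrm{out}}(j)}\sqrt{X^{(t)}(i)}$---is what shaves the $\log$ factor and yields $O(N^{1/3}T^{2/3})$ directly. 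You have inverted the roles: the adaptive exploration of \Cref{sec:adaptive realization} acts on the restriction side, not on $H$.

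Your route has two genuine gaps. First, the partition construction is not sound under the stated hypothesis. If a king $v$ is chosen by several loopless vertices $u$, the resulting block size is bounded by the \emph{out}-degree of $v$, not by any in-degree, so bounded in-degree does not give $O(1)$-sized blocks. Moreover, unless $v$ has a self-loop or some $u$ in the block points back to $v$, the king is unobserved inside the block, so $G_{\bar k}$ is not even observable---violating the legal-partition requirement of \Cref{sec:decomp}.

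Second, the claim $R_{G_{\bar k^*}}(T)=O(\sqrt{T})$ treats the restriction instance as if it were run in isolation. It is not: in \Cref{alg:osmd-decomp} the estimator $\hat\ell^{(t)}_{\bar k^*}(j)$ has the \emph{global} observation probability $\sum_{a\in N_{\mathrm{in}}((\bar k^*,j))}Z^{(t)}(a)$ in the denominator, and when observations come only from within the block this probability carries a factor $Y^{(t)}(\bar k^*)$, which may be as small as $1/m\approx 1/N$. The variance term in \Cref{thm:general regret} then scales with $1/Y^{(t)}(\bar k^*)$, and no $O(\sqrt{T})$ bound follows. In the paper's framework the restriction-side regret is controlled precisely by the global exploration $\gamma^{(t)}$, and it is this term---not the projection instance---that carries the $T^{2/3}$ cost.
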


Note that any weakly observable graph contains a subgraph of bounded in-degree and removing edges never decreases its mini-max regret. As a result, $O\big(N^{\frac{1}{3}}\cdot T^{\frac{2}{3}}\big)$ is a universal upper bound of regret for \emph{any} weakly observable graph. This improves previous best universal upper bound $O\big(\tp{N\log N}^{\frac{1}{3}}\cdot T^{\frac{2}{3}}\big)$ in~\cite{ACBDK15,CHLZ21}.

We also prove that for every graph of bounded out-degree, there exists some loss vectors yielding $\Omega\big(N^{\frac{1}{3}}\cdot T^{\frac{2}{3}}\big)$ regret. Therefore, the regret of a graph with bounded out-degree is $\Theta\big(N^{\frac{1}{3}}\cdot T^{\frac{2}{3}}\big)$. 

\begin{theorem}\label{thm:out-bounded-theta}
	Let $G$ be a weakly observable graph of bounded out-degree with $N$ vertices. Then for sufficiently large $T>0$, the mini-max regret satisfies
	\[
	R^*(T) = \Theta\Big(N^{\frac{1}{3}}\cdot T^{\frac{2}{3}}\Big).
	\]
\end{theorem}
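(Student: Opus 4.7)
The plan is to prove matching upper and lower bounds, treating them separately.

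For the upper bound $R^*(T)=O(N^{1/3}T^{2/3})$, we invoke the universal upper bound for weakly observable graphs noted immediately after Theorem~\ref{thm:bounded-degree}. Concretely, we extract from $G$ a weakly observable spanning subgraph $G'\subseteq G$ of bounded in-degree by retaining, for each vertex $v$ without a self-loop, a single incoming edge from one of its dominators, and, for each vertex with a self-loop, only the self-loop itself. The resulting $G'$ is still observable (every vertex has at least one in-neighbor), still weakly observable (any vertex of $G$ that was not strongly observable survives), and has in-degree at most one. Applying Theorem~\ref{thm:bounded-degree} to $G'$ and using the standard fact that removing edges from a feedback graph can only increase the mini-max regret (the player may always ignore the extra observations available in $G$) gives $R^*_G(T)\le R^*_{G'}(T)=O(N^{1/3}T^{2/3})$.

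For the lower bound $R^*(T)=\Omega(N^{1/3}T^{2/3})$, the first step is a simple structural observation that converts the bounded out-degree hypothesis into a lower bound on the domination number. If every vertex has out-degree at most $d=O(1)$, then any dominating set $D\subseteq V$ satisfies $|D|+d\cdot|D|\ge N$, since each $u\in D$ dominates only itself and its at most $d$ out-neighbours; hence $\delta(G)\ge N/(d+1)=\Omega(N)$. The second step is to design a hard stochastic instance: for each vertex $v^*\in V$ consider a Bernoulli loss sequence where $v^*$ has mean $1/2-\varepsilon$ and every other vertex has mean $1/2$. Following the template of the lower bound for undirected cycles given in~\cite{CHLZ21}, any algorithm that hopes to identify $v^*$ among the $\Omega(\delta(G))$ candidates that are ``hard to observe'' must accumulate $\Omega(1/\varepsilon^2)$ observations of each of them; bounded out-degree means a single pull reveals only $O(1)$ losses, so the total number of exploration pulls is $\Omega(N/\varepsilon^2)$, each costing $\Theta(\varepsilon)$ against the best arm. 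A standard KL-divergence / Fano argument then yields expected regret $\Omega(N/\varepsilon)$ from exploration and $\Theta(T\varepsilon)$ from exploitation, and choosing $\varepsilon=\Theta((N/T)^{1/3})$ gives $R^*(T)=\Omega(N^{1/3}T^{2/3})$ for sufficiently large $T$.

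The main obstacle will be making the information-theoretic lower bound rigorous for an \emph{arbitrary} weakly observable graph of bounded out-degree, rather than for the highly symmetric instances (cycles, complete bipartite graphs) handled in prior work. The difficulty is to argue that no clever scheduling of pulls across the $\Omega(N)$ dominators can consolidate the exploration burden and beat the $N^{1/3}T^{2/3}$ threshold; this is precisely where the bounded out-degree hypothesis is essential, since it caps the ``observation bandwidth'' of each round at $O(1)$ irrespective of which arm is pulled. Once this is established via a suitable multi-hypothesis testing reduction on the $N$ Bernoulli instances above, combining it with the structural bound $\delta(G)=\Omega(N)$ closes the argument and yields the matching $\Theta(N^{1/3}T^{2/3})$ rate.
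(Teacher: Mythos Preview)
Your upper bound argument is correct and essentially matches the paper's: both reduce to a spanning subgraph of in-degree one and then invoke Theorem~\ref{thm:bounded-degree} (the paper phrases this as applying the adaptive realization of Theorem~\ref{thm:adapt-realized-upper-bound} to the trivial one-block partition, but the appendix proof of that theorem begins by trimming each block to in-degree one anyway).

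For the lower bound you are working much harder than necessary, and the ``main obstacle'' you flag is already resolved by a result you have available. The paper does not redo a Fano/KL argument from scratch; it simply invokes Proposition~\ref{prop:packing-lb} (the $t$-packing independent set lower bound from~\cite{CHLZ21}), which is stated for \emph{arbitrary} weakly observable graphs, not only for cycles or bipartite graphs. The key structural observation is that when every vertex has out-degree at most $d$, \emph{any} independent set $S$ is automatically a $d$-packing independent set, since $|\Nout(u)\cap S|\le |\Nout(u)|\le d$ for every $u\in V$. A greedy argument on the underlying undirected graph (which has at most $dN$ directed edges, hence average undirected degree $O(d)$) then produces an independent set of size $\Omega(N/d)=\Omega(N)$, and Proposition~\ref{prop:packing-lb} with $t=d$ immediately yields $\Omega\big((|S|/d)^{1/3}T^{2/3}\big)=\Omega(N^{1/3}T^{2/3})$. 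Your domination-number observation $\delta(G)=\Omega(N)$ is correct but is not the route the paper takes, and the multi-hypothesis reduction you sketch is precisely what Proposition~\ref{prop:packing-lb} already packages for you.
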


%

\bigskip
\paragraph{\bf Related Works}
\emph{Multi-armed bandit}(MAB) is a classic and well-explored problem of sequential decision introduced in~\cite{RH52}. The work of~\cite{ACBFS02} proved that the mini-max regret of MAB is $\tilde\Theta(\sqrt{NT})$ in adversarial setting and~\cite{LTGA21} gives a tighter upper bound $\sqrt{2NT}$ which is the best known result so far. Another well-known problem is \emph{learning with expert advice} which was studied in~\cite{LNWM94},~\cite{VG90},~\cite{FS97}, etc. The regret of learning with expert advice model was proved to be $\Theta(\sqrt{T\log N})$ in~\cite{FS97}.  Widely used traditional algorithms for sequential decision problems include \emph{Thompson sampling}, \emph{upper confidence bound} (UCB) and EXP3. The algorithm \emph{Online stochastic mirror descent} (OSMD) was developed by~\cite{Nem79} and~\cite{NY83} which reaches the tight bound for both MAB and learning with expert advice by choosing appropriate potential functions. The work of~\cite{MS11} introduced a more general feedback model using a graph which allows the player to observe the out-neighbors of the chosen arm. Studies of this model includes those on fixed graphs (e.g.,~\cite{MS11},~\cite{ACBDK15},~\cite{CHLZ21}), time-varying graphs (e.g.,~\cite{KNVM14},~\cite{ACBDK15}) and random graphs (e.g.,~\cite{NCSYO17},~\cite{LBS18},~\cite{LCWL20}). The work of~\cite{ACBDK15} add an exploration term into standard OSMD which is defined by domination number and reaches an upper bound of $O((\delta \log N)^{\frac{1}{3}}T^{\frac{2}{3}})$ where $\delta$ is the weak domination number of the feedback graph. The work of~\cite{CHLZ21} further improved the result to $O((\delta^* \log N)^{\frac{1}{3}}T^{\frac{2}{3}})$ where $\delta^*$ is the fractional weak domination number of the feedback graph. 


\section{Preliminaries}
Let $n\in \^N$ be a positive integer. We use $[n]$ denote the set $\set{1,2,\dots,n}$. $\Delta_{n-1}=\set{\*x\in \^R_{\geq 0}: \sum_{i=1}^n \*x(i)=1}$ is the $n-1$ dimension probability simplex. Let $(\*e_i^{[n]})_{i=1}^n$ be the standard basis of $\^R^n$ which means  for every $j\in [n]$, $\*e_i^{[n]}(j)=1$ if $j=i$ and $0$ otherwise.
Let $\*1^{[n]}\in \^R^n$ be a vector that every element is $1$ or equivalently $\*1^{[n]} = \sum_{i=1}^n\*e_i^{[n]} $.

\subsection{Graphs}
Let $G=\tp{V,E}$ be a directed graph with possibly self-loops where $|V|=N$. When we say $G$ is undirected, we understand an undirected edge $\set{u,v}$ as two directed edges $\tp{u,v}$ and $\tp{v,u}$. For every $S\subseteq V$, we use $G[S]$ to denote the subgraph of $G$ induced by $S$. Let $m\in \^N$ be a positive integer. Let $\set{V_1,\dots,V_m}$ be a partition of $V$. Define the \emph{incidence graph} $H=(V_H,E_H)$ w.r.t the partition as $V_H=[m]$ and $E_H=\set{(i,j)\in [m]^2\cmid i\ne j\land\exists u\in V_i, v\in V_j, (u,v)\in E}$. For every $\bar k\in [m]$, we usually use $G_{\bar k}=(V_{\bar k},E_{\bar k})$ to denote $G[V_{\bar k}]$. We call each $V_{\bar k}$ a block of the partition. Once we view $G$ as an instance of bandit with graph feedback, we call $H$ the \emph{projection instance} and each $G_{\bar k}$ a \emph{restriction instance}.

%

For every $v\in V$, we define $\Nin(v) = \set{u\in V: (u,v)\in E}$ and $\Nout(v)=\set{u\in V: (v, u)\in E}$ as the set of in-neighbors and out-neighbors of $v$ respectively. Then we use $|\Nin(v)|$ and $\abs{\Nout(v)}$ to denote the in-degree and out-degree of $v$ respectively. A set $S\subseteq V$ is an independent set if there is no edge between any two vertices in $S$. A set with a self-loop vertex can not be an independent set. The notion of $t$-packing independent set $S$ in a graph $G=(V,E)$ is defined as an independent set $S\subseteq V$ satisfying for every $u\in V$, $\abs{\Nout(u)\cap S}\le t$.

We say a vertex $v\in V$ is \emph{non-observable} if $\Nin(v)=\emptyset$, otherwise, it is \emph{observable}. A graph with non-observable vertices is called a non-observable graph, otherwise, it is an observable graph. A vertex $v$ is called strongly observable if either $v$ has a self-loop or $\Nin(v)=V\setminus\set{v}$. A graph is a strongly observable graph if every vertex of it is strongly observable. Weakly observable vertices refer to vertices which are neither non-observable nor strongly observable. Graphs which are neither non-observable nor strongly observable are called weakly observable graphs. 

Consider the following linear programming $\+P$ defined on $G_{\bar{k}}$ for every $\bar k\in [m]$ such that $|V_{\bar k}|\geq 2$: 
\[
\-{minimize} \quad \sum_{v\in V_{\bar{k}}} x_v, \mbox{ s.t. } \sum_{v\in N_{in}(u)\cap V_{\bar{k}}} x_v\geq 1, \forall u\in V_{\bar{k}} \quad\mbox{and}\quad 0\leq x_v\leq 1, \forall v\in V_{\bar{k}}.
\]

%

We use $\delta^*_{\bar k}(G_{\bar k})$ to denote the optimum of $\+P$. We call $\delta^*_{\bar k}(G_{\bar k})$ the local fractional weak domination number of $G_{\bar k}$ and when $G_{\bar k}$ is clear from the context, we use $\delta^*_{\bar k}$ for briefty. We use $x^*_{\bar k,j}$ to denote the corresponding solution of $\+P$ for $j\in [n_{\bar{k}}]$. Let $\ol\delta^*=\sum_{\bar{k}\in[m]\colon \abs{V_{\ol k}}\geq 2} \delta^*_{\bar{k}}$. 
Note that $\ol\delta^*$ here is different from $\delta^* = \delta^*(G)$ in~\cite{CHLZ21} which is the (global) fractional domination number. 

\subsection{Bandit with Graph Feedback}
Let $G=\tp{V,E}$ be a directed graph and $V=[N]$ be the collection of bandit arms. Let $T\in \^N$ be the time horizon. The structure of $G$ and the value of $T$ is known by the player. Bandit with graph feedback, or graph bandit for short, is an online decision problem. The player design an algorithm $\+A$ such that in each round $t=1,2,\dots T$:
\begin{enumerate}
	\item The algorithm $\+A$ computes a distribution $X^{(t)}\in \Delta_{N-1}$ and chooses an arm $A_t\in[N]$ by sampling from $X^{(t)}$;
	\item The adversary chooses a loss function $\ell^{(t)}:[N]\rightarrow [0,1]$;
	\item The player pays $\ell^{(t)}(A_t)$ and observes $\ell^{(t)}(j)$ for $j\in \Nout(A_t)$.
\end{enumerate}

For a fixed loss function sequence $\+L=\set{\ell^{(1)}, \ell^{(2)},\dots,\ell^{(T)}}$, let the best arm $a^* =$ $\argmin_{a\in [N]}$ $\sum_{t=1}^T$ $\ell^{(t)}(a)$. We can view the loss function $\ell^{(t)}$ as a vector and $\ell^{(t)}(j)$ is the value at its $j^{\-{th}}$ coordinate. The regret of the algorithm with respect to a fixed arm $a\in [N]$ is defined by $R_a(G,T,\+A,\+L)=\E{\sum_{t=1}^T \ell^{(t)}(A_t)} - \sum_{t=1}^T \ell^{(t)}(a)$ and the expectation is with respect to the randomness of the algorithm. When the context is clear, we write the regret as $R_a(T)$ for briefty. Furthermore, if not otherwise specified, the regret we refer to is $R_{a^*}(T)$ which is shortened to $R(T)$. The purpose of the game is to design a best algorithm against the worst adversary, that is, to achieve the mini-max regret $R_G^*(T)=\inf_{\+A}\sup_{\+L} R_{a^*}(G,T,\+A,\+L)$. We sometime drop the subscript $G$ and write $R^*(T)$ if $G$ is clear from the context.

Recall the notion of $t$-packing independent set $S$ defined before. The following lower bound of the mini-max regret was proved in~\cite{CHLZ21}:

\begin{proposition}\label{prop:packing-lb}
	For any algorithm, any weakly observable graph containing a $t$-packing independent set $S$ suffers $\Omega\Big(\max\set{\log\abs{S},\frac{\abs{S}}{t}}^{\frac{1}{3}}\cdot T^{\frac{2}{3}}\Big)$ regret on some loss vector sequences. 	
\end{proposition}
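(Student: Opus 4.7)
The plan is to construct a stochastic adversarial environment that exploits two structural consequences of the $t$-packing independent set $S$: first, since $S$ is independent and its vertices carry no self-loops, pulling any arm in $S$ yields no observation of any arm in $S$; second, by the $t$-packing property, pulling any $u \in V$ yields at most $t$ simultaneous observations of arms in $S$. Together these imply that any information about which arm of $S$ is best must be paid for by pulling arms outside $S$ (each at $\Omega(1)$ regret), and each such pull delivers at most $t$ bandit-style samples on $S$.

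Concretely, I would consider the family of hypotheses indexed by $a \in S$: under $P_a$, arm $a$ has loss distribution $\-{Bern}(\tfrac{1}{2} - \epsilon)$, every other $b \in S$ is $\-{Bern}(\tfrac{1}{2})$, and every $u \notin S$ is $\-{Bern}(\tfrac{3}{4})$, with $\epsilon>0$ to be tuned. Writing $T_{\mathrm{out}}$ for the number of rounds in which the learner pulls an arm outside $S$ and $T_a$ for the pulls of arm $a$, one has
\[
\E[P_a]{\mathrm{regret}}\; \gtrsim\; T_{\mathrm{out}}\; +\; \epsilon\cdot\bigl(T - T_{\mathrm{out}} - \E[P_a]{T_a}\bigr).
\]

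The two terms inside the maximum come from two information-theoretic calculations. For the $|S|/t$ term I would follow the Auer--Cesa-Bianchi--Freund--Schapire MAB lower bound: bound the KL divergence between $P_a$ and a symmetric reference by $O(\epsilon^2)\cdot(\text{observations of arm }a)$, apply Pinsker's inequality, and sum over $a\in S$ via Cauchy--Schwarz, using that the total observations on $S$ are at most $t\cdot T_{\mathrm{out}}$. This yields a regret lower bound of order $\min\set{|S|/(t\epsilon^2),\,\epsilon T}$, which is optimized at $\epsilon = (|S|/(tT))^{1/3}$ to give $\Omega((|S|/t)^{1/3} T^{2/3})$. For the $\log|S|$ term one uses Fano's inequality: identifying the planted arm among $|S|$ requires $\Omega(\log|S|)$ bits of mutual information, and each exploration pull contributes at most $O(\epsilon^2)$ bits, forcing $T_{\mathrm{out}} \gtrsim \log|S|/\epsilon^2$; balancing against $\epsilon T$ then gives $\Omega((\log|S|)^{1/3} T^{2/3})$. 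The regret is lower bounded by either of the two, hence by their maximum.

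The main obstacle is the careful bookkeeping of the information-theoretic inequalities under an adaptive learner: $T_{\mathrm{out}}$ and the sequence of observed arms are both functions of a random history, so the KL chain rule must be applied along the filtration generated by the observed feedback, and one must verify that pulls of arms in $S\setminus\set{a^*}$ (which carry regret $\epsilon$ per round but leak no information about the identity of $a^*$ under our construction) are separately accounted for. These manipulations are routine in the stochastic bandit literature but require some care to make rigorous for arbitrary, history-dependent algorithms.
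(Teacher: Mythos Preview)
The paper does not prove this proposition: it is stated in the preliminaries with the attribution ``The following lower bound of the mini-max regret was proved in~\cite{CHLZ21}'' and is then invoked as a black box (e.g., in the proof of \Cref{thm:directed-bounded-degree}). There is therefore no in-paper proof to compare your proposal against.

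That said, your sketch is the standard route for this type of lower bound and is essentially how such results are established in the cited line of work (and in \cite{ACBDK15} before it). The two ingredients you identify---(i) pulls inside $S$ reveal nothing about $S$ because $S$ is independent with no self-loops, and (ii) pulls outside $S$ reveal at most $t$ coordinates of $S$ by the $t$-packing condition---are exactly the structural facts that drive the argument. The Pinsker/averaging route for the $\abs{S}/t$ term and the Fano route for the $\log\abs{S}$ term are both correct in outline; the balancing you describe gives the stated exponents. The caveat you flag about tracking KL along the adaptive filtration is genuine but routine, and the separate accounting for pulls in $S\setminus\{a^*\}$ (which incur $\epsilon$ regret but leak no information) is handled by the usual decomposition of regret into ``exploration'' and ``mis-identification'' parts. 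So your plan is sound; it simply recreates a result the paper imports rather than proves.
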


\subsection{Optimization}

Let $V\in \^R^n$ be a convex set. For a convex function $F\colon\^R^n\rightarrow \^R\cup \set{\infty}$, the domain of $F$ is $\-{dom}(F)=\set{\*x\in \^R^n: F(\*x)<\infty}$. Assume $\-{dom}(F)$ is open and $F$ is differentiable in its domain. Given $\*x,\*y\in \-{dom}(F)$, the Bregman divergence with respect to $F$ is $B_{F}(\*x,\*y)=F(\*x)-F(\*y)-\nabla_{\*x-\*y}(\*y)$ where $\nabla_{\*v}(\*y)$ is the directional derivative of $F$ in direction $\*v$ at $\*y$. The diameter of $V$ with resepct to $F$ is $D_{F}(V)=\max_{\*x,\*y\in V}F(\*x)-F(\*y)$. Negative entropy refers to the function $\Phi\colon\^R^n_{\geq 0}\rightarrow \^R\cup  \set{\infty}$ that $\Phi(\*x)=\sum_{i=1}^n \*x(i)\log \*x(i)$. Given a constant $h\in(0,1)$, the Tsallis entropy $\Psi:\^R^n_{\geq 0}\rightarrow \^R\cup \infty$ with respect to $h$ is defined by $\Psi(\*x)=\sum_{j=i}^n -\*x(i)^{h}$. In this work, we take $h=\frac{1}{2}$.

Let $A\in \^R^n\times \^R^n$ be a semi-definite positive matrix and $\*x\in \^R^n$ be a column vector, the norm with respect to $A$ is defined by $\|\*x \|_{A}\defeq\sqrt{\*x^{\!T}A\*x}$. When $A=\nabla^2 \Psi$ is the Hessian matrix of some function $\Psi$, we use $\|\*x \|_{\nabla^{-2}\Psi}$ to denote  $\|\*x \|_{\tp{\nabla^2 \Psi}^{-1}}$.

\subsection{Online Stochastic Mirror Descent}
Given a convex potential function $\Psi$ and a convex set $\+X$, OSMD starts with a distribution $X^{(1)}=\arg\min_{\*x\in \+ X} \Psi(\*x)$. In every round $t\in [T]$, it plays $A_t\sim X^{(t)}$, pays corresponding loss and gains some observation of the arms. With a loss estimator $\hat\ell^{(t)}$ of the real loss vector $\ell^{(t)}$ and a uniform step size $\eta$, it updates by $X^{(t+1)}=\arg\min_{\*x\in \+ X}\eta\inner{\*x}{\hat \ell^{(t)}}+B_{\Psi}(\*x,X^{(t)}) $. 
\begin{proposition}\label{prop:osmd}
	The regret of OSMD satisfies that
	$	
		R_{a^*}(T)\leq \frac{D_{\Psi}(\+ X)}{\eta} + \frac{\eta}{2} \sum_{t=1}^T \sup_{\*y\in [\hat{X}^{(t)},X^{(t)}]}$ $ \|\hat\ell^{(t)} \|^2_{\grad^{-2}\Psi(\*y)},
	$
	where $\hat{X}^{(t)} = \argmin_{\*x\in \-{int}\tp{\-{dom(\Psi)}}} \eta\inner{\*x}{\hat \ell^{(t)}}+B_{\Psi}(\*x,X^{(t)})$.
\end{proposition}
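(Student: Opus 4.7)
The plan is to follow the classical analysis of online stochastic mirror descent, decomposing each round's regret into a ``potential-drop'' term handled by Bregman divergence telescoping and a ``stability'' term handled by a local quadratic expansion of $\Psi$. Assuming the standard conditional unbiasedness of the estimator, $\E{\hat\ell^{(t)}\mid X^{(t)}} = \ell^{(t)}$, the expected per-round regret equals $\E{\inner{X^{(t)} - \*e_{a^*}^{[N]}}{\hat\ell^{(t)}}}$, and I would split each instantaneous inner product as
\[
\inner{X^{(t)} - \*e_{a^*}^{[N]}}{\hat\ell^{(t)}} = \inner{X^{(t)} - \hat X^{(t)}}{\hat\ell^{(t)}} + \inner{\hat X^{(t)} - \*e_{a^*}^{[N]}}{\hat\ell^{(t)}}.
\]

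For the second summand, the first-order optimality of $\hat X^{(t)}$ on $\-{int}(\-{dom}(\Psi))$ yields $\nabla\Psi(\hat X^{(t)}) = \nabla\Psi(X^{(t)}) - \eta\hat\ell^{(t)}$. Substituting into the three-point identity for Bregman divergence gives
\[
\eta\inner{\hat X^{(t)} - \*e_{a^*}^{[N]}}{\hat\ell^{(t)}} = B_{\Psi}(\*e_{a^*}^{[N]}, X^{(t)}) - B_{\Psi}(\*e_{a^*}^{[N]}, \hat X^{(t)}) - B_{\Psi}(\hat X^{(t)}, X^{(t)}).
\]
Since $X^{(t+1)}$ is the Bregman projection of $\hat X^{(t)}$ onto $\+X$, the generalized Pythagorean inequality gives $B_{\Psi}(\*e_{a^*}^{[N]}, X^{(t+1)}) \le B_{\Psi}(\*e_{a^*}^{[N]}, \hat X^{(t)})$, and summing over $t\in[T]$ telescopes to
\[
\eta\sum_{t=1}^T \inner{\hat X^{(t)} - \*e_{a^*}^{[N]}}{\hat\ell^{(t)}} \le D_{\Psi}(\+X) - \sum_{t=1}^T B_{\Psi}(\hat X^{(t)}, X^{(t)}).
\]

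For the first summand, a second-order Taylor expansion of $\Psi$ along the segment $[\hat X^{(t)}, X^{(t)}]$ provides a point $\*y$ with $B_{\Psi}(\hat X^{(t)}, X^{(t)}) = \tfrac{1}{2}\|\hat X^{(t)} - X^{(t)}\|^2_{\nabla^2\Psi(\*y)}$. Rewriting $\inner{X^{(t)} - \hat X^{(t)}}{\hat\ell^{(t)}} = \tfrac{1}{\eta}\inner{X^{(t)} - \hat X^{(t)}}{\nabla\Psi(X^{(t)}) - \nabla\Psi(\hat X^{(t)})}$, then applying Cauchy--Schwarz in the local norm induced by $\nabla^2\Psi(\*y)$ followed by $ab\le \tfrac{a^2+b^2}{2}$, yields
\[
\inner{X^{(t)} - \hat X^{(t)}}{\hat\ell^{(t)}} \le \frac{1}{\eta}B_{\Psi}(\hat X^{(t)}, X^{(t)}) + \frac{\eta}{2}\|\hat\ell^{(t)}\|^2_{\nabla^{-2}\Psi(\*y)}.
\]
Adding the two contributions, the Bregman terms $\sum_t B_{\Psi}(\hat X^{(t)}, X^{(t)})$ cancel, dividing by $\eta$ produces the $D_\Psi(\+X)/\eta$ term, and passing to the supremum over $\*y$ on the segment matches the statement.

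The main obstacle will be making the mean-value step rigorous: the Taylor identity for $B_{\Psi}(\hat X^{(t)}, X^{(t)})$ and the implicit identity $\nabla\Psi(X^{(t)}) - \nabla\Psi(\hat X^{(t)}) = \int_0^1 \nabla^2\Psi(\*y_s)(X^{(t)}-\hat X^{(t)})\,ds$ may feature different mean-value points. The clean remedy is to carry out both arguments in integral form along the segment $\*y_s = \hat X^{(t)} + s(X^{(t)} - \hat X^{(t)})$, apply Cauchy--Schwarz inside the integral, and only at the end bound by $\sup_{\*y\in[\hat X^{(t)}, X^{(t)}]}\|\hat\ell^{(t)}\|^2_{\nabla^{-2}\Psi(\*y)}$, which is exactly the quantity appearing in the proposition.
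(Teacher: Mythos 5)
The paper itself gives no proof of this proposition---it is quoted as a standard fact with a pointer to the OSMD literature---and your argument is exactly the standard proof that citation refers to: the split of $\inner{X^{(t)}-\*e_{a^*}^{[N]}}{\hat\ell^{(t)}}$ into a Bregman-telescoping part (first-order optimality of $\hat X^{(t)}$, the three-point identity, the generalized Pythagorean inequality for the projection defining $X^{(t+1)}$, and $B_\Psi(\*e_{a^*}^{[N]},X^{(1)})\le D_\Psi(\+X)$) and a local-norm stability part, with the correct constants. The mean-value mismatch you flag at the end is real but is resolved even more simply than your integral-form remedy: take the single point $\*y$ furnished by the Taylor expansion $B_\Psi(\hat X^{(t)},X^{(t)})=\tfrac12\|\hat X^{(t)}-X^{(t)}\|^2_{\nabla^2\Psi(\*y)}$ and apply Cauchy--Schwarz to $\inner{X^{(t)}-\hat X^{(t)}}{\eta\hat\ell^{(t)}}$ in the norm induced by $\nabla^2\Psi(\*y)$ at that same point, so the rewriting of $\hat\ell^{(t)}$ as a gradient difference (the source of the mismatch) is never needed.
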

More details on OSMD can be found in e.g.~\cite{ZL19}.

\section{Regret Decomposition}\label{sec:regret-decomp}

In this section, we describe our algorithm based on a graph partition and state the regret decomposition theorem. We first define the notion of \emph{legal partition}, the main data structure that our algorithm relies on in \Cref{sec:decomp} and present the algorithm in \Cref{sec:algo}. We also provide the analysis of the algorithm and the proof of the main theorem in \Cref{sec:analysis}.

\subsection{Legal Partition}\label{sec:decomp}

Let $G=(V,E)$ be a directed graph with possible self-loops. Let $V_1,V_2,\dots,V_m$ be a partition of $V$. Recall that for every $\bar k\in [m]$, we let $G_{\bar k}=(V_{\bar k},E_{\bar k})\defeq G[V_{\bar k}]$ be the subgraph of $G$ induced by $V_{\bar k}$ and let $n_{\bar k}=\abs{V_{\bar k}}$. For every $\bar k\in [m]$, we call $V_{\bar k}$ a block of the partition.

We say a partition $\set{V_1,V_2,\dots,V_m}$ of $V$ is \emph{legal (for our algorithm)} if every subgraph $G[V_{\bar k}]$ is observable and it can be further partitioned into two groups $U_1=\set{1,2,\dots,s}$ and $U_2=\set{s+1,s+2,\dots,m}$ satisfying
	\begin{itemize}
		\item $n_{\bar k}=1$ for all $\bar k\in U_1$ and $n_{\bar k}>1$ for all $\bar k\in U_2$;
		\item For every $\bar k\in U_1$, the vertex $v_{\bar k}$ in the singleton set $V_{\bar k}$ is strongly observable in $G$.
	\end{itemize}
Note that we allow $U_1=\emptyset$ or equivalently $s=0$. We call $U_1$ (when referring to an index), or sometimes $\bigcup_{\bar k\in U_1} V_{\bar k}$ (when referring to an arm), the \emph{strongly observable part} of the partition.

In fact, our algorithm will treat $G\left[\bigcup_{\bar k\in U_1}V_{\bar k}\right]$ as a strongly observable instance and treat each $G[V_{\bar k}]$ for $\bar k\in U_2$ as a weakly observable instance (even though it is not). The intuition behind the definition is that the strongly observable graphs are more friendly to the player comparing to weakly observable graphs in terms of the mini-max regret ($\Theta(T^{\frac{1}{2}})$ v.s. $\Theta(T^{\frac{2}{3}})$). Therefore, our algorithm can take this advantage when a weakly observable graph contains a large strongly observable subgraph. This is crucial to some of the optimal algorithms in \Cref{sec:application}. An example of a legal partition and its corresponding incidence graph is illustrated in Figure~\ref{fig:partition}.
\begin{figure}[h!]
	\centering
        \begin{minipage}[t]{0.5\textwidth}
            \centering
            \begin{tikzpicture}
                \tikzset{mynode/.style=fill, circle, inner sep=2pt, minimum size=3pt}
                \node[mynode] (selfloop1) at (0.3,10){};
                \node[mynode] (selfloop2) at (1.7,10){};
                \node[mynode] (selfloop3) at (3.1,10){};
                \node[mynode] (selfloop4) at (5.9,10){};
                \node[color=black] at (4.5,10){$\cdots$};
                \foreach \j in {1,2,3,4}{
                    \path[-stealth] (selfloop\j) edge [loop] (selfloop\j);
                }
                \draw[red,dashed] (0.3,10) circle (0.4);
                \draw[red,dashed] (1.7,10) circle (0.4);
                \draw[red,dashed] (3.1,10) circle (0.4);
                \draw[red,dashed] (5.9,10) circle (0.4);
                \node[color=red,font=\fontsize{6}{6}\selectfont] at (0.3,10.55){$G_1$};
                \node[color=red,font=\fontsize{6}{6}\selectfont] at (1.7,10.55){$G_2$};
                \node[color=red,font=\fontsize{6}{6}\selectfont] at (3.1,10.55){$G_3$};
                \node[color=red,font=\fontsize{6}{6}\selectfont] at (5.9,10.55){$G_s$};
                \draw[rounded corners, color=blue,dash pattern= on 10pt off 5pt, opacity=0.5] (-0.2, 10.7) rectangle (6.45,9.5);
                \node[color=blue,font=\fontsize{6}{6}\selectfont] at (3,10.9){Strongly Observable Part};

                \node[mynode] (bip11) at (0,9){};
                \node[mynode] (bip21) at (1,9){};
                \node[mynode] (bip12) at (0,8){};
                \node[mynode] (bip22) at (1,8){};
                \node[mynode] (bip13) at (0,6){};
                \node[mynode] (bip23) at (1,6){};
                \node[color=black] at (0,7){$\vdots$};
                \node[color=black] at (1,7){$\vdots$};
                \foreach \j in {1,2,3}{
                    \foreach \i in {1,2,3}{
                        \path[-stealth] (bip1\j) edge (bip2\i);
                        \path[-stealth] (bip2\i) edge (bip1\j);
                    }
                }
                \draw[rounded corners, color=red,dashed] (-0.2, 9.2) rectangle (1.2,5.8);
                \node[color=red,font=\fontsize{6}{6}\selectfont] at (0.5,5.6){$G_{s+1}$};

                \node[mynode] (starcenter) at (2.8,8.5){};
                \path[-stealth] (starcenter) edge [loop] (starcenter); 
                
                \foreach \j in {0,1,2,4}{
                        \pgfmathparse{cos(30*(\j+1)};
                        \tikzmath{\x =\pgfmathresult;}
                        \pgfmathparse{sin(30*(\j+1)};
                        \tikzmath{\y =\pgfmathresult;}
                        \node[mynode] (star\j) at (2.8-\x,8.5-\y*2){};
                }
                \pgfmathparse{cos(60)};
                \tikzmath{\x =\pgfmathresult;}
                \pgfmathparse{sin(60)};
                \tikzmath{\y =\pgfmathresult;}
                \node[color=black] at (2.8+1.2*\x,8.5-\y*2.2){\begin{rotate}{85}$\ddots$\end{rotate}};
                \foreach \j in {0,1,2,4}{
                    \path[-stealth] (starcenter) edge (star\j);
                }
                \draw[dashed, red, rotate around={90:(2.8,7.5)}] (2.8,7.5) ellipse (1.5 and 1);
                \node[color=red,font=\fontsize{6}{6}\selectfont] at (2.8,5.6){$G_{s+2}$};

                \node[color=black] at (4.5,7.5){$\cdots$};

                \foreach \j in {0,1,2,3,4,5,6}{
                    \pgfmathparse{cos(45*(\j+4)};
                    \tikzmath{\x =\pgfmathresult;}
                    \pgfmathparse{sin(45*(\j+4)};
                    \tikzmath{\y =\pgfmathresult;}
                    \node[mynode] (cli\j) at (5.7-0.6*\x,7.5-1.5*\y){};
                }
                \pgfmathparse{cos(45*3)};
                \tikzmath{\x =\pgfmathresult;}
                \pgfmathparse{sin(45*3)};
                \tikzmath{\y =\pgfmathresult;}
                \node[color=black] at (5.7-0.9*\x,7.5-1.4*\y){\begin{rotate}{107}$\ddots$\end{rotate}};
                \foreach \j in {1,2,3,4,5,6}{
                    \path[-stealth] (cli0) edge (cli\j);
                }
                \foreach \j in {0,2,3,4,5,6}{
                    \path[-stealth] (cli1) edge (cli\j);
                }
                \foreach \j in {1,0,3,4,5,6}{
                    \path[-stealth] (cli2) edge (cli\j);
                }
                \foreach \j in {1,2,0,4,5,6}{
                    \path[-stealth] (cli3) edge (cli\j);
                }
                \foreach \j in {1,2,3,0,5,6}{
                    \path[-stealth] (cli4) edge (cli\j);
                }
                \foreach \j in {1,2,3,0,4,6}{
                    \path[-stealth] (cli5) edge (cli\j);
                }
                \foreach \j in {1,2,3,0,5,4}{
                    \path[-stealth] (cli6) edge (cli\j);
                }
                \draw[dashed, red, rotate around={90:(5.7,7.5)}] (5.7,7.5) ellipse (1.7 and 0.75);
                \node[color=red,font=\fontsize{6}{6}\selectfont] at (5.7,5.6){$G_{m}$};

                \path[-stealth] (selfloop1) edge (starcenter);
                \path[-stealth] (selfloop3) edge (cli3);
                \path[-stealth] (selfloop3) edge (cli0);
                \path[-stealth] (selfloop4) edge (cli5);
                \path[-stealth] (selfloop4) edge (cli6);
                \path[-stealth] (cli4) edge (selfloop3);
                \path[-stealth] (bip11) edge (selfloop2);
                \path[-stealth] (bip22) edge (star0);
                \path[-stealth] (bip23) edge (star0);
                \path[-stealth] (bip23) edge (star2);
                \path[-stealth] (star4) edge (selfloop4);

            \end{tikzpicture}
        \end{minipage}
        \begin{minipage}[t]{0.4\textwidth}
            \centering
            \begin{tikzpicture}
                \tikzset{mynode/.style=draw, circle, inner sep=1pt, minimum size=20pt, outer sep=1pt}
                \node[mynode, dashed, color=red, text=red,font=\fontsize{6}{6}\selectfont] (s1) at (0,4){$G_1$};
                \node[mynode, dashed, color=red, text=red,font=\fontsize{6}{6}\selectfont] (s2) at (1,4){$G_2$};
                \node[mynode, dashed, color=red, text=red,font=\fontsize{6}{6}\selectfont] (s3) at (2,4){$G_3$};
                \node[color=black] at (3,4){$\cdots$};
                \node[mynode, dashed, color=red, text=red,font=\fontsize{6}{6}\selectfont] (s4) at (4,4){$G_s$};

                \node[mynode, dashed, color=red, text=red,font=\fontsize{6}{6}\selectfont] (w1) at (0,2){$G_{s+1}$};
                \node[mynode, dashed, color=red, text=red,font=\fontsize{6}{6}\selectfont] (w2) at (1.33,2){$G_{s+2}$};
                \node[color=black] at (2.67,2){$\cdots$};
                \node[mynode, dashed, color=red, text=red,font=\fontsize{6}{6}\selectfont] (w3) at (4,2){$G_{m}$};

                \node[mynode,white] (a) at (0,0){};

                \path[-stealth] (s1) edge [double] (w2);
                \path[-stealth] (w1) edge [double] (s2);
                \path[-stealth] (w1) edge [double] (w2);
                \path[stealth-stealth] (s3) edge [double] (w3);
                \path[-stealth] (s4) edge [double] (w3);
                \path[-stealth] (w2) edge [double] (s4);
            \end{tikzpicture}
        \end{minipage}
    \caption{An example of a legal partition and its incidence graph}\label{fig:partition}
\end{figure}
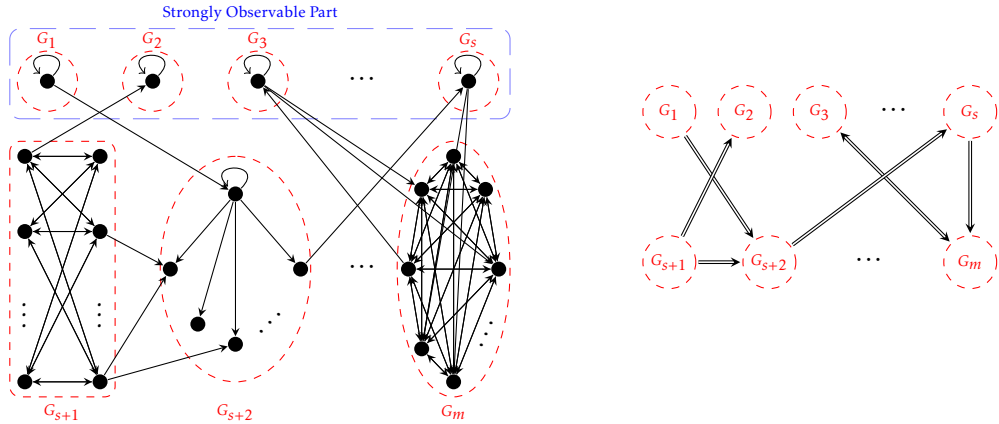

\subsection{The Algorithm}\label{sec:algo}

We assume settings in \Cref{sec:decomp}. That is, given a directed graph $G=(V,E)$, we fix a legal partition $V_1,V_2,\dots,V_m$ with $U_1$ and $U_2$. Each arm in $G$ is denoted by a pair $(\bar k,j)$ for $\bar k\in [m]$ and $j\in [n_{\bar k}]$. We further divide $U_1$ into $U_1^S$ and $U_1^{\bar S}$ where $U_1^S\subseteq U_1$ is the indices of those singleton sets containing an arm with a self-loop and $U_1^{\ol S}=U_1\setminus U_1^S$.

Speaking at a very high level, our algorithm is a two-level online stochastic mirror descent algorithm: We first pick a block $\bar k\in [m]$, and then pick an arm in $V_{\bar k}$. Therefore, in each round $t\in [T]$, we maintain two families of probability distributions:
\begin{itemize}
	\item We first maintain a distribution $Y^{(t)}\in \Delta_{m-1}$ on all $m$ blocks;
	\item For every $\bar k\in [m]$, we maintain a distribution $X_{\bar k}^{(t)}\in \Delta_{n_{\bar k}-1}$.
\end{itemize}

Since blocks in $U_1$ only contain one arm, for every $\bar k\in U_1$,  $X^{(t)}_{\bar k}$ is a distribution on a singleton. As a result, those arms belong to  $\bigcup_{\bar k\in U_1}V_{\bar k}$ are essentially explored by the rule $Y^{(t)}$. We introduce a convex potential function $\Psi:\^R^{m}\to\^R$ for $Y^{(t)}$. 

Those arms in $V_{\bar k}$ with $\bar k \in U_2$ are explored in a two-stage manner. For every such $X_{\bar k}^{(t)}$, we introduce a convex potential function $\Phi_{\bar k}:\^R^{n_{\bar k}}\to \^R$. 

We also define some exploration terms, locally and globally, as follows:
\begin{itemize}
	\item We define the \emph{global exploration factor}, denoted by $\gamma^{(t)}(\cdot)$, over all arms in $V$. That is, $\gamma^{(t)}:(\bar k,j)\mapsto \gamma^{(t)}((\bar k,j))\in [0,1]$ assigns each arm some chance to be explored at the \emph{first stage}.  Let $\ol\gamma^{(t)}\defeq \sum_{\bar k\in [m]}\sum_{j\in [n_{\bar k}]} \gamma^{(t)}((\bar k,j))$ be the total global exploration rate.
	\item For every block $\bar k\in U_2$, we define the \emph{local exploration factor} in $V_{\bar k}$, denoted by $\gamma^{(t)}_{\bar k}(\cdot)$, over all arms in $V_{\bar k}$. Similarly, $\gamma^{(t)}_{\bar k}: j\mapsto \gamma^{(t)}_{\bar k}(j)\in [0,1]$ assigns each arm in $V_{\bar k}$ some chance to be explored at the \emph{second stage}. We also let $\ol\gamma^{(t)}_{\bar k}\defeq \sum_{j\in [n_{\bar k}]}\gamma^{(t)}_{\bar k}(j)$ be the total local exploration rate in $V_{\bar k}$.
\end{itemize}

Assuming notations above, the implementation details can be found in \Cref{alg:osmd-decomp}.
Assume $Y^{(1)}$ and $X^{(1)}_{\bar k}$ for all $\bar k\in [m]$ are well initialized. In each round $t=1,2,\dots, T$, the behavior of the player includes:
\begin{itemize}
    \item Sampling:
    \begin{itemize}
        \item For each block $\bar k\in U_2$, we take into account the local exploration factor and define
	\[
		\tilde X_{\bar k}^{(t)} = (1-\ol\gamma^{(t)}_{\bar k})\cdot X_{\bar k}^{(t)} + \gamma^{(t)}_{\bar k}.
	\]
	    \item For those arms $(\bar k,j)\in 	\bigcup_{\bar k\in U_2}V_{\bar k}$, we take into account the global exploration factor and play it with probability
	\[
		Z^{(t)}((\bar k,j)) = (1-\ol\gamma^{(t)})\cdot Y^{(t)}(\bar k)\cdot \tilde X^{(t)}_{\bar k}(j) + \gamma^{(t)}((\bar k,j)).
	\]
	    \item For those arms $(\bar k,j)\in 	\bigcup_{\bar k\in U_1}V_{\bar k}$, we play it with probability
	\[
		Z^{(t)}((\bar k,j)) = (1-\ol\gamma^{(t)})\cdot Y^{(t)}(\bar k)\cdot X^{(t)}_{\bar k}(j) + \gamma^{(t)}((\bar k,j)).
	\]
    \end{itemize}
    \item Observing:
    \begin{itemize}
        \item For every $(\bar k,j)\in \Nout(A_t)$ where $A_t$ is the chosen arm, observe $\ell^{(t)}((\bar k,j))$.
        \item For every $(\bar k,j)\in V$, define the unbiased loss estimator $\hat\ell^{(t)}_{\bar k}(j)$ (see \Cref{alg:line:omd-X} of \Cref{alg:osmd-decomp}).
        \item Define the loss of the block $\wh L^{(t)}(\bar k)$ for all $\ol k\in[m]$ (see \Cref{alg:line:L-1} and \Cref{alg:line:L-2} of \Cref{alg:osmd-decomp}).
    \end{itemize}
	\item Updating:
    \begin{itemize}
        \item For every $\bar k$, we update $X^{t+1}_{\bar k}$ using OSMD with $\hat\ell^{(t)}_{\bar k}$ and potential function $\Phi_{\bar k}$:
	\[
	X^{(t+1)}_{\bar k}=\argmin_{\*x\in\Delta_{n_{\bar k}-1}} \eta_{\bar k}\cdot \inner{\*x}{\hat\ell^{(t)}_{\bar k}}+B_{\Phi_{\bar k}}(\*x,X^{(t)}_{\bar k}),
	\]
	where $\eta_{\bar k}$ is the step size to be set.

	\item Update $Y^{(t)}$ with $\wh L^{(t)}$ and the potential function $\Psi$: 
	\[
	Y^{(t+1)}=\argmin_{\*y\in\Delta_{m-1}} \inner{\*y}{\wh L^{(t)} - c^{(t)}\cdot\*1^{[m]}}+B_{\Psi}(\*y,Y^{(t)}),
	\]
	where $c^{(t)}$ is a constant defined in \Cref{alg:line:Lconst}.
    \end{itemize}
	
\end{itemize}

We remark that the value of $\wh L^{(t)}(\bar k)$ is the expectation of $\hat\ell^{(t)}_{\bar k}(j)$ under the distribution $\tilde X^{(t)}_{\bar k}$ over $j\in [n_{\bar k}]$. It would be clear from the analysis that this choice is the key to make everything work. 

\begin{algorithm}
\TitleOfAlgo{Online StochasticMirror Descent for Composite Graphs}
	\Input{A feedback graph $G=(V,E)$ and a legal partition $\set{V_{\bar k}}_{\bar k\in [m]}$; sets of indices $U_1=U_1^S\cup U_1^{\ol{S}}$, $U_2$.}
	\BlankLine
	\Begin{
		\For {$\bar k\in U_2$} {
			$X^{(1)}_{\bar k}\gets \argmin_{\*x\in\Delta_{n_{\bar k}-1}}\Phi_{\bar k}(\*x)$\; 
		}
		\For {$\bar k\in U_1$} {
			$X^{(1)}_{\bar k}\gets 1$\;
		}
		$Y^{(1)}\gets \argmin_{\*y\in\Delta_{m-1}}\Psi(\*y)$\;
	\For {$t=1,2,\dots,T$} {
		\For {$\bar k\in U_2$}{
			Define the vector $\tilde X^{(t)}_{\bar k}$ as $\tilde X^{(t)}_{\bar k}(j) = (1-\ol\gamma^{(t)}_{\bar k})\cdot X^{(t)}_{\bar k}(j)+\gamma^{(t)}_{\bar k}(j)$\;
			Define the vector $Z^{(t)}$ as $Z^{(t)}((\bar k,j))\gets (1-\ol\gamma^{(t)})\cdot Y^{(t)}(\bar k)\cdot \tilde X^{(t)}_{\bar k}(j)+\gamma^{(t)}((\bar k,j))$\;
		}
		\For {$\bar k\in U_1$}{
			Define the vector $Z^{(t)}$ as $Z^{(t)}((\bar k,j))\gets (1-\ol\gamma^{(t)})\cdot Y^{(t)}(\bar k)\cdot X^{(t)}_{\bar k}(j) + \gamma^{(t)}((\bar k,j))$\;
		}
		Play the arm $A^{(t)}\sim Z^{(t)}$ and observe $\ell^{(t)}((\bar k,j))$ for all $(\bar k,j)\in \Nout(A_t)$\;
		\For {$\bar k\in [m]$ and $j\in [n_{\bar k}]$}{ 
			$\hat\ell^{(t)}_{\bar k}(j)\gets \frac{\*1[(\bar k,j)\in \Nout(A_t)]}{\sum_{a\in \Nin((\bar k,j))}Z^{(t)}(a)}\cdot \ell^{(t)}((\bar k,j))$\;
		}
		\For {$\bar k\in U_2$} {
			$\wh L^{(t)}(\bar k) = \sum_{j\in [n_{\bar k}]}\tilde X^{(t)}_{\bar k}(j)\cdot \hat\ell^{(t)}_{\bar k}(j)$\; \label{alg:line:L-1}
		}
		\For {$\bar k\in U_1$} {
			$\wh L^{(t)}(\bar k) = \hat\ell^{(t)}_{\bar k}(1)$\; \label{alg:line:L-2}
		}
		\For {$\bar k\in U_2$} {
			$X^{(t+1)}_{\bar k}\gets \argmin_{\*x\in\Delta_{n_{\bar k}-1}} \eta_{\bar k}\cdot \inner{\*x}{\hat\ell^{(t)}_{\bar k}}+B_{\Phi_{\bar k}}(\*x,X^{(t)}_{\bar k})$\;\label{alg:line:omd-X}
		}
		$c^{(t)}\gets \sum_{\bar k\in U_1^{\ol S}} \wh L^{(t)}(\bar k) \cdot Y^{(t)}(\bar k)$\; \label{alg:line:Lconst} \tcc{We shift  $\wh L^{(t)}$ by $c^{(t)}\cdot\*1^{[m]}$ to reduce its variance}
		$Y^{(t+1)}\gets \argmin_{\*y\in\Delta_{m-1}} \inner{\*y}{\wh L^{(t)} - c^{(t)}\cdot \*1^{[m]}}+B_{\Psi}(\*y,Y^{(t)})$\;
		\tcc{We hide the choice of ``learning rate'' in $\Psi$}

		}
	}
	\caption{Online Stochastic Mirror Descent for Composite Graphs}
	\label{alg:osmd-decomp}
\end{algorithm}

\subsection{Regret Decomposition Theorem}\label{sec:analysis}

The main result of this section is the following regret decomposition theorem.

Assume notations in \Cref{sec:algo}. We let $(\wh L^{(t)})'\defeq \wh L^{(t)} - c^{(t)}\cdot \*1^{[m]}$ where $c^{(t)}=\sum_{\ol k\in U_1^{\ol S}} \wh L^{(t)}(\ol k) \cdot Y^{(t)}(\ol k)$ is defined in \Cref{alg:line:Lconst} of \Cref{alg:osmd-decomp}.

\begin{theorem}[Regret Decomposition Theorem] \label{thm:general regret}
	Let $(\bar k^*,j^*)$ be a fixed arm. If $\bar k^*\in U_2$, then the regret of \Cref{alg:osmd-decomp} with respect to $(\bar k^*,j^*)$ is
	\begin{align*}
		&\quad R_{(\bar k^*,j^*)}(T) \le \Bigg\{D_{\Psi}(\Delta_{m-1}) + \frac{1}{2}\sum_{t=1}^T\E{\sup_{\*y\in [W^{(t)},Y^{(t)}]}\|(\wh L^{(t)})'\|_{\nabla^{-2}\Psi(\*y)}}\Bigg\}
		+ \sum_{t=1}^T\sum_{\bar k\in [m]}\sum_{j\in [n_{\bar k}]} \gamma^{(t)}((\bar k,j))\\
		&+\Bigg\{\frac{D_{\Phi_{\bar k^*}}(\Delta_{n_{\bar k^*-1}})}{\eta_{\bar k^*}}+\frac{\eta_{\bar k^*}}{2}\cdot\sum_{t=1}^T \E{\sup_{\*x\in [Q^{(t)}_{\bar k^*},X^{(t)}_{\bar k^*}]} \|\hat\ell^{(t)}_{\bar k^*}\|_{\grad^{-2}\Phi_{\bar k^*}(\*x)}}\Bigg\}
		+ \sum_{t=1}^T\sum_{j\in [n_{\bar k^*}]}\gamma^{(t)}_{\bar k^*}(j);
	\end{align*}
	and if $(\bar k^*,j^*)\in U_1$, then the regret of \Cref{alg:osmd-decomp} with respect to $(\bar k^*,j^*)$ is
	\begin{align*}
		R_{(\bar k^*,j^*)}(T) 
		&\le D_{\Psi}(\Delta_{m-1}) + \sum_{t=1}^T \tp{\frac{1}{2}\E{\sup_{\*y\in [W^{(t)},Y^{(t)}]}\|(\wh L^{(t)})'\|_{\nabla^{-2}\Psi(\*y)}}
		+ \sum_{\bar k\in [m]}\sum_{j\in [n_{\bar k}]} \gamma^{(t)}((\bar k,j))},
	\end{align*}
	where  $W^{(t)} = \arg\min_{\*w\in \-{int}(\-{dom}(\Psi))} \inner{\*w}{(\wh{L}^{(t)})'} + B_\Psi(\*w, Y^{(t)})$ and\\ $Q^{(t)}_{\bar{k}^*}=\arg\min_{\*q\in \-{int}(\-{dom}(\Phi_{\bar{k}^*}))}$ $\eta_{\bar{k}^*} \cdot \inner{\*q}{\hat{l}^{(t)}_{\bar{k}^*}} + B_{\Phi_{\bar{k}^*}}(\*q, X^{(t)}_{\bar{k}^*})$.
\end{theorem}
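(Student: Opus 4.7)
My plan is to apply the OSMD regret bound of \Cref{prop:osmd} separately to the outer OSMD on $Y^{(t)}$ and the inner OSMD on $X^{(t)}_{\bar k^*}$, after first writing the total regret as a sum of a block-level regret, a within-block regret, and the two exploration terms. Concretely, conditioning on the algorithm's history through round $t-1$, I would start from $\E{\ell^{(t)}(A_t)} = \sum_{(\bar k,j)} Z^{(t)}((\bar k,j))\,\ell^{(t)}((\bar k,j))$ and substitute the definition of $Z^{(t)}$ from \Cref{alg:osmd-decomp}. Bounding the global-exploration mass trivially using $\ell^{(t)}\in[0,1]^N$, this yields
\[
\E{\ell^{(t)}(A_t)} \le \sum_{\bar k\in [m]} Y^{(t)}(\bar k)\,\tilde\ell^{(t)}(\bar k) + \sum_{\bar k\in[m]}\sum_{j\in[n_{\bar k}]} \gamma^{(t)}((\bar k,j)),
\]
where $\tilde\ell^{(t)}(\bar k)\defeq \sum_j \tilde X^{(t)}_{\bar k}(j)\,\ell^{(t)}((\bar k,j))$ for $\bar k\in U_2$ and $\tilde\ell^{(t)}(\bar k)\defeq \ell^{(t)}((\bar k,1))$ for $\bar k\in U_1$. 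Adding and subtracting $\tilde\ell^{(t)}(\bar k^*)$ then splits $R_{(\bar k^*,j^*)}(T)$ into a block regret $\sum_t(\inner{Y^{(t)}}{\tilde\ell^{(t)}}-\tilde\ell^{(t)}(\bar k^*))$ and, when $\bar k^*\in U_2$, a within-block regret $\sum_t(\tilde\ell^{(t)}(\bar k^*)-\ell^{(t)}((\bar k^*,j^*)))$; the identity $\tilde X^{(t)}_{\bar k^*}=(1-\ol\gamma^{(t)}_{\bar k^*})X^{(t)}_{\bar k^*}+\gamma^{(t)}_{\bar k^*}$ lets me further bound the within-block regret by the analogous expression for $X^{(t)}_{\bar k^*}$ plus the local exploration sum $\sum_t\sum_j \gamma^{(t)}_{\bar k^*}(j)$.

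For the block regret I would invoke \Cref{prop:osmd} for the outer OSMD with potential $\Psi$, loss vector $(\wh L^{(t)})' = \wh L^{(t)} - c^{(t)}\*1^{[m]}$, on the simplex $\Delta_{m-1}$. The key observation is that $\inner{\*y}{\*1^{[m]}} = 1$ for any $\*y\in\Delta_{m-1}$, so $c^{(t)}$ cancels in $\inner{Y^{(t)}}{(\wh L^{(t)})'} - (\wh L^{(t)})'(\bar k^*) = \inner{Y^{(t)}}{\wh L^{(t)}} - \wh L^{(t)}(\bar k^*)$. The unbiasedness $\E{\wh L^{(t)}(\bar k)\mid \mathcal F_{t-1}} = \tilde\ell^{(t)}(\bar k)$, which follows from $\E{\hat\ell^{(t)}_{\bar k}(j)\mid \mathcal F_{t-1}} = \ell^{(t)}((\bar k,j))$ together with the definitions in \Cref{alg:line:L-1}--\Cref{alg:line:L-2}, then identifies the expected block regret with the OSMD regret against $(\wh L^{(t)})'$, producing the $D_\Psi(\Delta_{m-1})$ and $\frac{1}{2}\|(\wh L^{(t)})'\|^2_{\nabla^{-2}\Psi(\*y)}$ terms (with the learning rate absorbed into $\Psi$). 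For the within-block regret in the case $\bar k^*\in U_2$, I would apply \Cref{prop:osmd} to the inner OSMD on $\Delta_{n_{\bar k^*}-1}$ with potential $\Phi_{\bar k^*}$, step size $\eta_{\bar k^*}$, and unbiased estimator $\hat\ell^{(t)}_{\bar k^*}$; unbiasedness converts estimator-level regret to true-loss regret, producing the $D_{\Phi_{\bar k^*}}(\Delta_{n_{\bar k^*}-1})/\eta_{\bar k^*}$ and gradient-norm terms. Summing the outer bound, the inner bound, and the two exploration sums yields the inequality for $\bar k^*\in U_2$; for $\bar k^*\in U_1$ the singleton-ness of $V_{\bar k^*}$ trivializes the within-block term and only the outer bound plus global exploration remain.

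The main obstacle I anticipate is the careful bookkeeping around the joint sampling and the random shift. Since $Y^{(t)}$ and the $X^{(t)}_{\bar k}$'s are used together to form $Z^{(t)}$ and hence $A_t$, and both OSMDs share the same filtration $\mathcal F_{t-1}$, the unbiasedness statements for $\hat\ell^{(t)}_{\bar k}$ and $\wh L^{(t)}$ must be verified conditional on this shared filtration. Moreover $c^{(t)}$ is not deterministic but depends on $Y^{(t)}$ and on the observed losses in the singleton blocks $U_1^{\ol S}$; one has to check that (i) the OSMD update for $Y^{(t+1)}$ is identical whether one uses $\wh L^{(t)}$ or $(\wh L^{(t)})'$, because the shift is a multiple of $\*1^{[m]}$ on the simplex, and (ii) the resulting gradient-norm bound in $(\wh L^{(t)})'$ is legitimate and sharper than the one in $\wh L^{(t)}$ (this is the ``variance reduction'' role of using $U_1^{\ol S}$ losses in $c^{(t)}$). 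Once these verifications are in place, the rest is a mechanical application of \Cref{prop:osmd}.
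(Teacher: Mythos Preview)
Your proposal is correct and follows essentially the same route as the paper: decompose the regret into a block-level term, a within-block term, and the two exploration sums (this is the paper's Lemma~\ref{lem:regret}), then apply \Cref{prop:osmd} to the outer OSMD using the shift invariance $\inner{Y^{(t)}-\*e^{[m]}_{\bar k^*}}{c^{(t)}\*1^{[m]}}=0$ (Lemma~\ref{lem:regretY}) and to the inner OSMD on $X^{(t)}_{\bar k^*}$ (Lemma~\ref{lem:regretX}). The only cosmetic difference is that the paper first replaces $\ell^{(t)}$ by the unbiased estimator $\hat\ell^{(t)}$ and carries out the decomposition entirely at the estimator level, whereas you work with true losses and invoke unbiasedness at the moment of applying \Cref{prop:osmd}; both orderings yield the same bounds, and your anticipated ``obstacles'' about the shared filtration and the random shift $c^{(t)}$ dissolve exactly as you outline.
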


The regret decomposition theorem essentially says that the regret of the whole instance comes from four parts: the regret of the projection instance, the regret of the restriction instance, the cost of global exploration and the cost of local exploration.

The remain of this section outlines a proof of the theorem. The complete proof is in \Cref{sec:proof}.


Let us fix an arm $a^*=(\bar k^*,j^*)$. To ease the presentation, for every $t=1,2,\dots, T$, we define an $N$-dimensional vector $\hat\ell^{(t)}$ indexed by $(\bar k,j)$ pairs for every $\bar k\in [m], j\in [n_{\bar k}]$ satisfying $\hat \ell^{(t)}((\bar k,j)) = \hat\ell^{(t)}_{\bar k}(j)$. Clearly $\E{\hat\ell^{(t)}} = \ell^{(t)}$.

\begin{lemma}	\label{lem:regret}
	The regret of \Cref{alg:osmd-decomp} with respect to $(\bar k^*,j^*)$ is
	\begin{align*}
	R_{(\bar k^*,j^*)}(T)\le \sum_{t=1}^T\E{\inner{\wh L^{(t)}}{Y^{(t)}-\*e^{[m]}_{\bar k^*}}+\sum_{\bar k\in [m]}\sum_{j\in [n_{\bar k}]}\gamma^{(t)}((\bar k,j)) + \tp{\inner{\hat\ell^{(t)}_{\bar k^*}}{X^{(t)}_{\bar k^*}-\*e^{[n_{\bar k^*}]}_{j^*}}+\sum_{j\in [n_{\bar k^*}]}\gamma^{(t)}_{\bar k^*}(j)}\cdot\*1[\bar k^*\in U_2]}.
	\end{align*}
\end{lemma}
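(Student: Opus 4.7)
The plan is to bound the per-round expected loss from above by the projection-layer proxy $\inner{Y^{(t)}}{\wh L^{(t)}}$ plus the global exploration cost, then lower-bound the comparator $\ell^{(t)}(a^*)$ with $a^*=(\bar k^*,j^*)$ by the same proxy evaluated at $\*e^{[m]}_{\bar k^*}$ minus a restriction-layer correction (nonzero only when $\bar k^*\in U_2$), and finally subtract and sum over $t$. Every step relies only on the unbiasedness of the loss estimator, which is valid because legality of the partition ensures each arm is observable. Throughout, write $\E[t]{\cdot}$ for the conditional expectation given the history $\+F_{t-1}$, so $Y^{(t)}$ and every $X^{(t)}_{\bar k}$ are $\+F_{t-1}$-measurable while $A_t$ and $\hat\ell^{(t)}$ are not.

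\textbf{Player side.} Since $A_t\sim Z^{(t)}$, substituting the two cases in the definition of $Z^{(t)}$ from \Cref{alg:osmd-decomp} and using $1-\ol\gamma^{(t)}\le 1$ together with $\ell^{(t)}\in[0,1]$ to absorb the global exploration contribution gives
\[
\E[t]{\ell^{(t)}(A_t)} \le \sum_{\bar k\in [m]}Y^{(t)}(\bar k)\sum_{j\in [n_{\bar k}]}\widetilde X^{(t)}_{\bar k}(j)\,\ell^{(t)}((\bar k,j)) + \sum_{\bar k\in[m]}\sum_{j\in[n_{\bar k}]}\gamma^{(t)}((\bar k,j)),
\]
with the convention $\widetilde X^{(t)}_{\bar k}=X^{(t)}_{\bar k}$ when $\bar k\in U_1$. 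Unbiasedness $\E[t]{\hat\ell^{(t)}_{\bar k}(j)}=\ell^{(t)}((\bar k,j))$, applied to the two definitions of $\wh L^{(t)}(\bar k)$ in Lines~\ref{alg:line:L-1}--\ref{alg:line:L-2}, yields $\E[t]{\wh L^{(t)}(\bar k)}=\sum_j\widetilde X^{(t)}_{\bar k}(j)\,\ell^{(t)}((\bar k,j))$ uniformly in $\bar k$, so the first sum above equals $\E[t]{\inner{Y^{(t)}}{\wh L^{(t)}}}$.

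\textbf{Comparator side.} If $\bar k^*\in U_1$, then $\wh L^{(t)}(\bar k^*)=\hat\ell^{(t)}_{\bar k^*}(1)$, so $\ell^{(t)}(a^*)=\E[t]{\inner{\*e^{[m]}_{\bar k^*}}{\wh L^{(t)}}}$ by unbiasedness; subtracting from the player-side bound and summing over $t$ reproduces the statement with $\*1[\bar k^*\in U_2]$ suppressing the restriction terms. If $\bar k^*\in U_2$, expand $\widetilde X^{(t)}_{\bar k^*}=(1-\ol\gamma^{(t)}_{\bar k^*})X^{(t)}_{\bar k^*}+\gamma^{(t)}_{\bar k^*}$ and drop the factor $(1-\ol\gamma^{(t)}_{\bar k^*})\le 1$ to obtain
\[
\E[t]{\wh L^{(t)}(\bar k^*)} \le \sum_{j\in [n_{\bar k^*}]} X^{(t)}_{\bar k^*}(j)\,\ell^{(t)}((\bar k^*,j)) + \sum_{j\in [n_{\bar k^*}]}\gamma^{(t)}_{\bar k^*}(j),
\]
and unbiasedness rewrites the first sum on the right minus $\ell^{(t)}(a^*)$ as $\E[t]{\inner{X^{(t)}_{\bar k^*}-\*e^{[n_{\bar k^*}]}_{j^*}}{\hat\ell^{(t)}_{\bar k^*}}}$. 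Rearranging gives a lower bound on $\ell^{(t)}(a^*)$ that, subtracted from the player-side bound and summed over $t$, reproduces the stated inequality exactly.

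\textbf{Expected obstacle.} The only nontrivial issue is the measurability bookkeeping: one must invoke unbiasedness only on $\hat\ell^{(t)}$, while treating $Y^{(t)}, X^{(t)}_{\bar k}, \widetilde X^{(t)}_{\bar k}$ and the exploration factors as $\+F_{t-1}$-measurable constants that pass through $\E[t]{\cdot}$. Once this conditioning is pinned down, the two cases assemble via the indicator $\*1[\bar k^*\in U_2]$ and the remaining algebra is routine.
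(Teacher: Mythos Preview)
Your proposal is correct and follows essentially the same route as the paper's proof: expand $Z^{(t)}$ according to its two-case definition, drop the factor $(1-\ol\gamma^{(t)})\le 1$ and bound the exploration contribution by $\sum_{\bar k,j}\gamma^{(t)}((\bar k,j))$, recognize the resulting weighted loss as $\inner{Y^{(t)}}{\wh L^{(t)}}$, and finally handle the comparator by splitting on $\bar k^*\in U_1$ versus $\bar k^*\in U_2$. The only cosmetic difference is ordering: the paper first rewrites the regret as $\sum_t\E{\inner{\hat\ell^{(t)}}{Z^{(t)}-\*e^{[N]}_{a^*}}}$ via unbiasedness and then expands, whereas you expand with the true loss $\ell^{(t)}$ and invoke unbiasedness afterward; both are equivalent and your version is arguably slightly cleaner since it uses $\ell^{(t)}\le 1$ directly rather than pulling the same bound back through $\E[t-1]{\hat\ell^{(t)}}$.
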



The key to prove Lemma~\ref{lem:regret} is to decompose the regret $\E{\inner{\hat\ell^{(t)}}{Z^{(t)}-\*e^{[N]}_{a^*}}}$ with appropriate choices of loss functions defined for the projection instance and restriction instances. By the definition of $Z^{(t)}$, we can verify that
\begin{align*}
    \E{\inner{\hat\ell^{(t)}}{Z^{(t)}-\*e^{[N]}_{a^*}}}&\le\E{\sum_{\bar k\in U_2}Y^{(t)}(\bar k)\sum_{j\in[n_{\bar k}]}\tilde X^{(t)}_{\bar k}(j)\cdot \hat\ell^{(t)}((\bar k,j)) + \sum_{\bar k\in U_1}Y^{(t)}(\bar k) \cdot \hat\ell^{(t)}((\bar k,1))-\hat\ell^{(t)}(a^*)}+\sum_{\bar k\in [m]}\sum_{j\in[n_{\bar k}]}\gamma^{(t)}((\bar k,j)).
\end{align*}
Recall that we let $\wh L^{(t)}(\bar k) = \sum_{j\in [n_{\bar k}]}\tilde X^{(t)}_{\bar k}(j)\cdot \hat\ell^{(t)}_{\bar k}(j)$\ for $\ol k \in U_2$ and $\wh L^{(t)}(\bar k) = \hat\ell^{(t)}_{\bar k}(1)$ for $\ol k \in U_1$. We can then write
\begin{align*}
  \E{\sum_{\bar k\in U_2}Y^{(t)}(\bar k)\sum_{j\in[n_{\bar k}]}\tilde X^{(t)}_{\bar k}(j)\cdot \hat\ell^{(t)}((\bar k,j)) + \sum_{\bar k\in U_1}Y^{(t)}(\bar k) \cdot \hat\ell^{(t)}((\bar k,1))-\hat\ell^{(t)}(a^*)}=\E{\inner{\wh L^{(t)}}{Y^{(t)}-\*e^{[m]}_{\bar k^*}}}+\E{\inner{\wh L^{(t)}}{\*e^{[m]}_{\bar k^*}}-\inner{\hat\ell^{(t)}_{\bar k^*}}{\*e^{[n_{\bar k^*}]}_{j^*}}}.
\end{align*} 
Finally by observing that if $\ol k^*\in U_2$,
\begin{align*}
    \E{\inner{\wh L^{(t)}}{\*e^{[m]}_{\bar k^*}}} &= \E{\sum_{j\in [n_{\bar k^*}]} \tilde X^{(t)}_{\bar k^*}(j)\cdot \hat\ell^{(t)}_{\bar k}(j)}\le \E{\inner{\hat\ell^{(t)}_{\bar k^*}}{X^{(t)}_{\bar k^*}}} + \sum_{j\in [n_{\bar k^*}]}\gamma^{(t)}_{\bar k^*}(j),
\end{align*}
and if $\bar k\in U_1$,
\[
\E{\inner{\wh L^{(t)}}{\*e^{[m]}_{\bar k^*}}} = \E{\wh L^{(t)}(\bar k^*)} = \E{\hat\ell^{(t)}_{\bar k^*}(1)}=\E{\inner{\hat\ell^{(t)}_{\bar k^*}}{\*e^{[n_{\bar k^*}]}_{\bar j^*}}}.
\]
See \Cref{proof:lem regret} for details of the calculation.

\bigskip
We then bound the regrets contributed by the projection instance and the restriction instance appeared in \Cref{lem:regret}. They are treated in Lemma~\ref{lem:regretY} and Lemma~\ref{lem:regretX} respectively. Both lemmas are consequences of Proposition~\ref{prop:osmd} via setting appropriate parameters. The details can be found in \Cref{proof:lem regretY} and \Cref{proof:lem regretX} respectively.

\begin{lemma}\label{lem:regretY}
    It holds that	
    \begin{align*}
        \sum_{t=1}^T \E{\inner{\wh L^{(t)}}{Y^{(t)}-\*e^{[m]}_{\bar k^*}}}&\le D_{\Psi}(\Delta_{m-1}) + \frac{1}{2}\sum_{t=1}^T\E{\sup_{\*y\in [W^{(t)}, Y^{(t)}]}\tp{\|(\wh L^{(t)})'\|_{\grad^{-2}\Psi(\*y)} }},
    \end{align*}
        where $W^{(t)} = \arg\min_{\*w\in \-{int}(\-{dom}(\Psi))} \inner{\*w}{(\wh{L}^{(t)})'} + B_\Psi(\*w, Y^{(t)})$. 
\end{lemma}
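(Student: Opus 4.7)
The plan is to reduce \Cref{lem:regretY} to a direct application of the OSMD regret bound (\Cref{prop:osmd}) to the ``outer'' sequence $(Y^{(t)})$ maintained by \Cref{alg:osmd-decomp}. First I would establish a ``free shift'' identity: since both $Y^{(t)}$ and the basis vector $\*e^{[m]}_{\bar k^*}$ lie in $\Delta_{m-1}$, we have $\inner{\*1^{[m]}}{Y^{(t)}-\*e^{[m]}_{\bar k^*}}=0$, and hence
\begin{equation*}
  \inner{\wh L^{(t)}}{Y^{(t)}-\*e^{[m]}_{\bar k^*}}
  = \inner{\wh L^{(t)} - c^{(t)}\cdot\*1^{[m]}}{Y^{(t)}-\*e^{[m]}_{\bar k^*}}
  = \inner{(\wh L^{(t)})'}{Y^{(t)}-\*e^{[m]}_{\bar k^*}}.
\end{equation*}
Thus the variance-reducing shift by $c^{(t)}\*1^{[m]}$ introduced in \Cref{alg:line:Lconst} of \Cref{alg:osmd-decomp} does not change the regret against any fixed block; its effect will only materialise inside the norm term in subsequent lemmas.

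Next I would recognise the update $Y^{(t+1)}=\argmin_{\*y\in\Delta_{m-1}} \inner{\*y}{(\wh L^{(t)})'} + B_{\Psi}(\*y,Y^{(t)})$ together with the initialisation $Y^{(1)}=\argmin_{\*y\in\Delta_{m-1}}\Psi(\*y)$ as precisely the OSMD procedure on $\Delta_{m-1}$ with potential $\Psi$, loss sequence $(\wh L^{(t)})'$, and unit learning rate (absorbing the rate into $\Psi$ per the paper's convention noted in \Cref{alg:osmd-decomp}). Under this identification, the auxiliary iterate $W^{(t)}$ defined in the lemma statement is exactly the unconstrained minimiser $\hat X^{(t)}$ that appears in \Cref{prop:osmd}.

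Now I would invoke \Cref{prop:osmd} pathwise, i.e.\ conditionally on the algorithm's randomness, to obtain
\begin{equation*}
\sum_{t=1}^T \inner{(\wh L^{(t)})'}{Y^{(t)}-\*e^{[m]}_{\bar k^*}}
\le D_{\Psi}(\Delta_{m-1}) + \frac{1}{2}\sum_{t=1}^T \sup_{\*y\in [W^{(t)},Y^{(t)}]}\|(\wh L^{(t)})'\|_{\nabla^{-2}\Psi(\*y)},
\end{equation*}
then take expectations on both sides and combine with the first identity to conclude.

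The main obstacle is essentially bookkeeping: lining up the conventions of \Cref{prop:osmd} (unconstrained minimiser, absorbed learning rate, Bregman-based local norm) with the outer dynamics of \Cref{alg:osmd-decomp}, and ensuring that \Cref{prop:osmd} is applied pathwise so that the subsequent expectation can be pushed inside both the sum and the supremum. The elementary identity $\inner{c^{(t)}\*1^{[m]}}{Y^{(t)}-\*e^{[m]}_{\bar k^*}}=0$ is precisely what makes the variance-reducing shift compatible with the OSMD guarantee ``for free'' at this stage of the argument.
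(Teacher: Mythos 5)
Your proposal is correct and follows essentially the same route as the paper's own proof: observe that the shift by $c^{(t)}\cdot\*1^{[m]}$ contributes nothing against the difference of two points of $\Delta_{m-1}$, identify the update of $Y^{(t)}$ as OSMD on $\Delta_{m-1}$ with potential $\Psi$, losses $(\wh L^{(t)})'$ and unit learning rate (absorbed into $\Psi$), and apply \Cref{prop:osmd} with $W^{(t)}$ playing the role of $\hat X^{(t)}$ before taking expectations. Your explicit remark about applying the bound pathwise before taking expectations is a slightly more careful phrasing of what the paper does implicitly, but the argument is the same.
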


\begin{lemma}\label{lem:regretX}
	If $\bar k^*\in U_2$,
\[
	\sum_{t=1}^T\E{\inner{\hat\ell^{(t)}_{\bar k^*}}{X^{(t)}_{\bar k^*}-\*e^{[n_{\bar k^*}]}_{j^*}}} \le \frac{D_{\Phi_{\bar k^*}(\Delta_{n_{\bar k^*-1}})}}{\eta_{\bar k^*}}+\frac{\eta_{\bar k^*}}{2}\cdot\sum_{t=1}^T \E{\sup_{\*x\in [Q^{(t)}_{\bar{k}^*}, X^{(t)}_{\bar k^*}]} \|\hat\ell^{(t)}_{\bar k^*}\|_{\grad^{-2}\Phi_{\bar k^*}(\*x)}},
\]
where $Q^{(t)}_{\bar{k}^*}=\arg\min_{\*q\in \-{int}(\-{dom}(\Phi_{\bar{k}^*}))} \eta_{\bar{k}^*}\cdot \inner{\*q}{\hat{l}^{(t)}_{\bar{k}^*}} + B_{\Phi_{\bar{b}^*}}(\*q, X^{(t)}_{\bar{k}^*})$.
\end{lemma}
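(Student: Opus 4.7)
\medskip
\noindent\textbf{Proof proposal for \Cref{lem:regretX}.} The plan is to recognize that the sequence $(X^{(t)}_{\bar k^*})_{t\ge 1}$ is exactly the iterate produced by running OSMD on the simplex $\Delta_{n_{\bar k^*}-1}$ with potential $\Phi_{\bar k^*}$, step size $\eta_{\bar k^*}$, and losses $\hat\ell^{(t)}_{\bar k^*}$: indeed, the initialization in \Cref{alg:osmd-decomp} is $X^{(1)}_{\bar k^*}=\arg\min_{\*x\in \Delta_{n_{\bar k^*}-1}}\Phi_{\bar k^*}(\*x)$ and the update on \Cref{alg:line:omd-X} is precisely the standard mirror descent update. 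This allows us to invoke \Cref{prop:osmd} as a black box, rather than redoing the mirror descent analysis from scratch.

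First I would fix an arbitrary realization of the random bits of the algorithm up through round $T$ (equivalently, condition on the filtration generated by $A_1,\dots,A_T$ and the corresponding observations). For this realization the losses $\hat\ell^{(1)}_{\bar k^*},\dots,\hat\ell^{(T)}_{\bar k^*}$ become a fixed deterministic sequence in $\mathbb{R}^{n_{\bar k^*}}_{\ge 0}$ (they are nonnegative because $Z^{(t)}$ and the indicator are nonnegative, so the quantity $\|\hat\ell^{(t)}_{\bar k^*}\|_{\nabla^{-2}\Phi_{\bar k^*}(\*x)}^2$ is well defined). Applying \Cref{prop:osmd} to this deterministic sequence with $\+X=\Delta_{n_{\bar k^*}-1}$ and the comparator $\*e^{[n_{\bar k^*}]}_{j^*}\in \Delta_{n_{\bar k^*}-1}$ yields
\begin{align*}
\sum_{t=1}^T \inner{\hat\ell^{(t)}_{\bar k^*}}{X^{(t)}_{\bar k^*}-\*e^{[n_{\bar k^*}]}_{j^*}}
\le \frac{D_{\Phi_{\bar k^*}}(\Delta_{n_{\bar k^*}-1})}{\eta_{\bar k^*}}
+\frac{\eta_{\bar k^*}}{2}\sum_{t=1}^T \sup_{\*x\in [Q^{(t)}_{\bar k^*},X^{(t)}_{\bar k^*}]}\|\hat\ell^{(t)}_{\bar k^*}\|^{2}_{\nabla^{-2}\Phi_{\bar k^*}(\*x)},
\end{align*}
where $Q^{(t)}_{\bar k^*}$ is exactly the unconstrained minimizer defined in the statement of the lemma. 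Finally I would take expectations of both sides over the randomness of $\+A$; the right-hand side produces exactly the bound claimed (the first term is deterministic, and the expectation passes inside the sum in the second term by linearity).

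The only subtlety I foresee is confirming that \Cref{prop:osmd} is applicable in the sense required; in particular, \Cref{prop:osmd} as stated assumes a deterministic loss sequence, whereas here $\hat\ell^{(t)}_{\bar k^*}$ is random and adapted to the algorithm's filtration. This is handled by the ``condition then take expectation'' argument above: for each realization of $\+A$'s randomness the inequality of \Cref{prop:osmd} holds pointwise, and taking expectations preserves the inequality. A second minor point is that \Cref{prop:osmd} is stated with a comparator equal to the overall minimizer $a^*$, but the standard proof actually gives the inequality against any fixed point in $\+X$; I would briefly remark on this so that specializing the comparator to $\*e^{[n_{\bar k^*}]}_{j^*}$ is justified. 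Neither of these is a real obstacle, and the proof is essentially one line of bookkeeping on top of the off-the-shelf OSMD guarantee.
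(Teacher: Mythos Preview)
Your proposal is correct and follows exactly the same approach as the paper: recognize that the updates of $X^{(t)}_{\bar k^*}$ in \Cref{alg:osmd-decomp} are precisely the OSMD iterates with potential $\Phi_{\bar k^*}$, step size $\eta_{\bar k^*}$, and losses $\hat\ell^{(t)}_{\bar k^*}$, and then invoke \Cref{prop:osmd} (taking expectations afterward). Your additional remarks about conditioning on a realization and about the comparator being an arbitrary point in $\Delta_{n_{\bar k^*}-1}$ are standard and only make the argument more explicit than the paper's one-line justification.
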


\section{Realization of the Regret Decomposition Theorem}\label{sec:realization}

The regret upper bound stated in \Cref{thm:general regret} relies on a given legal partition, the choices of potential functions and the value of various parameters (e.g., those ``exploration rates'' and ``learning rates''). In this section, we introduce two different realizations, depending on the graph structure and yielding improved and optimal regret bound in various settings. At last, we discuss the issue of ``optimal realization''.

\subsection{Realization for Well-Clustered Graphs}\label{sec:first-realization}


Motivated by the case when $G$ consists of disjoint union of subgraphs, we make the following heuristic assumption on a good legal partition for graphs that can be partitioned into well-clustered parts.

\begin{enumerate}
	\item It isolates a large ``strongly observable part'' from the graph, since the strongly observable graphs have small mini-max regret in general;
	\item Each of the remaining blocks is dense, so we can choose ``dense graph friendly'' potential functions to obtain small regret on restriction instances;
	\item The incidence graph is sparse, so we can choose a ``sparse graph friendly'' potential function to obtain small regret on the projection instance.
\end{enumerate}

We will see in \Cref{sec:application} that the rule of partition can yield improved regret when $G$ is the disjoint union of loop-less cliques and we make a heuristic step to assume that the rule generalizes to other graphs of similar structure. Our choice for potential functions is then clear: We let the potential function $\Psi$ for the projection instance be a separable one ($\Psi(\*y) = \sum_{\bar k\in [m]}\Psi_{\bar k}(\*y(\bar k))$), and each $\Psi_{\bar k}$ and $\Phi_{\bar k}$ is chosen in the following way.
\begin{enumerate}
	\item For a block $V_{\bar k}$ in the ``strongly observable part'', if it contains a self-loop, we let $\Psi_{\bar k}$ be \emph{Tsallis entropy}.
	\item For a block $V_{\bar k}$ in the ``strongly observable part'', if it does not contain a self-loop, we let $\Psi_{\bar k}$ be \emph{negative entropy}.
	\item For a block $V_{\bar k}$ not in the ``strongly observable part'', we let $\Psi_{\bar k}$ be \emph{Tsallis entropy}.
	\item For each restriction instance $V_{\bar k}$, we let $\Phi_{\bar k}$ be \emph{negative entropy}.
\end{enumerate}

We give a complete characterization of the regret bounds of this realization.
%

\begin{theorem}\label{thm:realized-upper-bound}
	Let $G=(V,E)$ be a directed graph instance. Let $V_1,V_2,\dots,V_m$ be a legal partition of $V$ with $U_1$ and $U_2$.  Let $U_1^{S}\subseteq U_1$ be the indices of those singleton sets containing an arm with a self-loop and $U_1^{\ol S}=U_1\setminus U_1^S$. Then for sufficiently large $T>0$, any loss sequence $\ell^{(1)},\dots,\ell^{(T)}$ and any arm $a^*=(\bar k^*,j^*)$ in $V$, the regret of \Cref{alg:osmd-decomp} with respect to $a^*$ satisifies
  \begin{equation*}
		R_{(\bar k^*,j^*)}(T)\leq \left\{
		\begin{aligned}
			&2\sqrt{2|U_1^S|}T^{\frac{1}{2}},& U_2=\emptyset {\ and\ } U_1^{\ol S}= \emptyset; \\
			&4\sqrt{6|U_1^S|}T^{\frac{1}{2}} + 2\sqrt{10\log(|U_1^{\ol S}|)} T^{\frac{1}{2}}+ T^{\frac{1}{2}},& U_2=\emptyset{\ and\ } U_1^{\ol S}\neq \emptyset; \\ 
			&3\cdot 2^{\frac{2}{3}}\tp{|U_2|\sum_{\bar{k}\in U_2} (\delta^*_{\bar{k}})^2}^{\frac{1}{6}} T^{\frac{2}{3}} + \frac{3}{2^{\frac{1}{3}}} \cdot \tp{\sum_{\bar{k}\in U_2} \delta_{\bar{k}}^* \log{n_{\bar{k}}}}^{\frac{1}{3}} T^{\frac{2}{3}}& \\
			&\quad\quad + 4\sqrt{|U_1^S|}T^{\frac{1}{2}},& U_2\neq \emptyset {\ and\ } U_1^{\ol S}= \emptyset; \\ 
			&6\cdot 2^{\frac{1}{3}}\tp{|U_2|\sum_{\bar{k}\in U_2} (\delta^*_{\bar{k}})^2}^{\frac{1}{6}} T^{\frac{2}{3}} + \frac{3}{2^{\frac{1}{3}}} \cdot \tp{\sum_{\bar{k}\in U_2} \delta_{\bar{k}}^* \log{n_{\bar{k}}}}^{\frac{1}{3}} T^{\frac{2}{3}} + \frac{\sqrt{6}}{3}T^{\frac{1}{2}}&  \\
			&\quad\quad + 4\sqrt{6|U_1^S|}T^{\frac{1}{2}}+2\sqrt{10\log(|U_1^{\ol S}|)} T^{\frac{1}{2}} + \frac{4T^{\frac{1}{3}} |U_2|^{\frac{5}{6}}}{2^{\frac{1}{3}}\tp{\sum_{\bar{k}\in U_2} (\delta^*_{\bar{k}})^2}^{\frac{1}{6}}},& U_2\neq \emptyset {\ and\ } U_1^{\ol S}\neq \emptyset. 
		\end{aligned}
		\right.
	\end{equation*}
\end{theorem}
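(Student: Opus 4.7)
The plan is to apply the Regret Decomposition Theorem (\Cref{thm:general regret}) with the potential functions prescribed by the realization: on the projection instance use the separable potential $\Psi(\*y) = \sum_{\bar k \in [m]} \Psi_{\bar k}(\*y(\bar k))$ where $\Psi_{\bar k}$ is Tsallis entropy (with $h=\tfrac{1}{2}$) when $\bar k \in U_1^S \cup U_2$ and negative entropy when $\bar k \in U_1^{\ol S}$, and on each restriction instance indexed by $\bar k \in U_2$ take $\Phi_{\bar k}$ to be the negative entropy. The four-case structure of the bound will then arise by picking the exploration rates $\gamma^{(t)}(\cdot)$, $\gamma^{(t)}_{\bar k}(\cdot)$ and the learning rates $\eta_{\bar k}$ (together with the learning rate absorbed into $\Psi$) separately for each combination of whether $U_2$ and $U_1^{\ol S}$ are empty.

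First I would handle the restriction-instance term for $\bar k^* \in U_2$. I would set the local exploration $\gamma^{(t)}_{\bar k}(j) \defeq \beta_{\bar k} \cdot x^*_{\bar k,j}$, where $x^*_{\bar k}$ is the optimal solution of $\+P$ restricted to $G_{\bar k}$, so that $\ol\gamma^{(t)}_{\bar k} = \beta_{\bar k} \delta^*_{\bar k}$. The fractional dominating property then guarantees that every arm $(\bar k^*,j)$ is observed with probability $\Omega(Y^{(t)}(\bar k^*)\beta_{\bar k^*})$, so the negative-entropy dual norm $\|\hat\ell^{(t)}_{\bar k^*}\|^2_{\nabla^{-2}\Phi_{\bar k^*}(\*x)} = \sum_j \*x(j)\,\hat\ell^{(t)}_{\bar k^*}(j)^2$ is $O(1)$ after taking expectation. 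Combined with $D_{\Phi_{\bar k^*}}(\Delta_{n_{\bar k^*}-1}) \le \log n_{\bar k^*}$ and the local exploration cost $\beta_{\bar k^*}\delta^*_{\bar k^*}T$, balancing $\eta_{\bar k^*}$ and $\beta_{\bar k^*}$ yields the $\bigl(\delta^*_{\bar k^*}\log n_{\bar k^*}\bigr)^{1/3}T^{2/3}$ contribution. Summing over $\bar k \in U_2$ (using that the worst case absorbs the optimal arm's block) produces the $\bigl(\sum_{\bar k\in U_2}\delta^*_{\bar k}\log n_{\bar k}\bigr)^{1/3}T^{2/3}$ term.

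Next I would bound the projection-instance term. By separability, $\nabla^{-2}\Psi(\*y)$ is block diagonal, so $\|(\wh L^{(t)})'\|^2_{\nabla^{-2}\Psi(\*y)}$ decomposes into a sum of per-coordinate contributions: the Tsallis coordinates contribute $2\,\*y(\bar k)^{3/2}\bigl((\wh L^{(t)})'(\bar k)\bigr)^2$ and the negative-entropy coordinates contribute $\*y(\bar k)\bigl((\wh L^{(t)})'(\bar k)\bigr)^2$. The choice $\wh L^{(t)}(\bar k) = \sum_j \tilde X^{(t)}_{\bar k}(j)\hat\ell^{(t)}_{\bar k}(j)$ for $\bar k \in U_2$ makes its conditional second moment comparable to $1/Y^{(t)}(\bar k)$ once the local exploration is in place, and the shift by $c^{(t)}$ zeroes out the $U_1^{\ol S}$ contribution in expectation, which is what lets the negative-entropy part scale with $\log|U_1^{\ol S}|$ (standard expert advice bound) rather than with $|U_1^{\ol S}|$. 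Summing the Tsallis coordinates gives a $\sqrt{|U_1^S|T}$ term and a $\bigl(|U_2|\sum_{\bar k\in U_2}(\delta^*_{\bar k})^2\bigr)^{1/6}T^{2/3}$ term after tuning the global exploration $\gamma^{(t)}((\bar k,j))$ so that every weakly observable arm has a floor observation probability. The resulting global exploration penalty contributes the residual $T^{1/3}|U_2|^{5/6}/\bigl(\sum(\delta^*_{\bar k})^2\bigr)^{1/6}$ term visible in the last case.

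The main obstacle will be the coupled parameter tuning: for the mixed case $U_2 \ne \emptyset$ and $U_1^{\ol S} \ne \emptyset$ one must simultaneously balance three families of parameters ($\eta_{\bar k}$ and $\beta_{\bar k}$ inside each $\bar k \in U_2$, the Tsallis learning rate inside $\Psi$, and the global exploration rate $\ol\gamma^{(t)}$) so that the Tsallis variance on $U_2$ coordinates, the negative-entropy variance on $U_1^{\ol S}$ coordinates, and both exploration costs come out at the claimed scales. The secondary obstacle is arguing that the centering by $c^{(t)}$ does not inflate the Tsallis dual-norm estimate on coordinates in $U_1^S \cup U_2$, which I would verify by bounding $|(\wh L^{(t)})'(\bar k)|$ coordinate-wise via $|\wh L^{(t)}(\bar k)| + c^{(t)}$ and then using $\E{c^{(t)}} \le 1$. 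Once these tunings are fixed, each of the four cases reduces to summing explicit $\sqrt{T}$ and $T^{2/3}$ contributions: the two $U_2 = \emptyset$ cases collapse to MAB-type and expert-type regrets on $U_1^S$ and $U_1^{\ol S}$ respectively, while the two $U_2 \ne \emptyset$ cases add the Tsallis projection-instance term together with the restriction-instance bound derived above.
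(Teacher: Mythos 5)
Your overall architecture matches the paper's: instantiate \Cref{thm:general regret} with the separable potential $\Psi$ (Tsallis on $U_1^S\cup U_2$, negative entropy on $U_1^{\ol S}$), negative entropy for each $\Phi_{\bar k}$, and tune parameters case by case. However, there is a genuine gap in your treatment of the restriction instance. You propose to control the variance of $\hat\ell^{(t)}_{\bar k^*}$ using only the \emph{local} exploration $\gamma^{(t)}_{\bar k}(j)=\beta_{\bar k}x^*_{\bar k,j}$, arguing that each arm $(\bar k^*,j)$ is then observed with probability $\Omega\big(Y^{(t)}(\bar k^*)\beta_{\bar k^*}\big)$ and hence the expected dual norm $\E{\sum_j \*x(j)\hat\ell^{(t)}_{\bar k^*}(j)^2}$ is $O(1)$. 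It is not: that expectation is of order $\big((1-\ol\gamma)Y^{(t)}(\bar k^*)\beta_{\bar k^*}\big)^{-1}$, and $Y^{(t)}(\bar k^*)$ has no lower bound --- the projection-level OSMD can drive the probability of block $\bar k^*$ arbitrarily close to $0$, so the restriction-instance variance term blows up. This is precisely why the paper assigns the roles of the two exploration families the opposite way: the \emph{global} exploration $\gamma^{(t)}((\bar k,j))=\frac{x^*_{\bar k,j}\log n_{\bar k}}{\ol\delta^*}\beta$ gives a $Y$-independent floor of $\frac{\beta\log n_{\bar k^*}}{\ol\delta^*}$ on the observation probability, which is what makes Lemma~\ref{lem:realized-regretX} go through; the \emph{local} exploration $\gamma^{(t)}_{\bar k}(j)=\frac{x^*_{\bar k,j}}{\delta^*_{\bar k}}\alpha$ is reserved for the projection instance, where the resulting floor $\frac{1}{2}Y^{(t)}(\bar k)\frac{\alpha}{\delta^*_{\bar k}}$ is proportional to $Y^{(t)}(\bar k)$ and that factor cancels against the $Y^{(t)}(\bar k)^{3/2}$ coming from the inverse Tsallis Hessian (yielding $\sum_{\bar k}Y^{(t)}(\bar k)^{1/2}\delta^*_{\bar k}/\alpha\le\sqrt{\sum_{\bar k}(\delta^*_{\bar k})^2}/\alpha$ by Cauchy--Schwarz). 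Your plan, which uses an absolute floor for the projection instance and a $Y$-proportional floor for the restriction instance, has these exactly backwards, and the restriction-instance half does not close.

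A secondary omission: you never address how to control $\sup_{\*y\in[W^{(t)},Y^{(t)}]}$ in the projection-instance dual norm. The paper needs the monotonicity lemma $W^{(t)}(i)\le 4Y^{(t)}(i)$ (Lemma~\ref{lem: monotone}), whose hypothesis $\min_i(\wh L^{(t)})'(i)\cdot\max\{\eta,\eta_S,\eta_{\ol S}\}\ge-\frac{1}{4}$ forces a specific choice of global exploration on the $U_1$ singletons ($\gamma((\bar k,1))=\frac{4\eta_S}{|U_1^S|}$, resp.\ $\frac{4\eta_{\ol S}}{|U_1^{\ol S}|-1}$) and a verification that the shifted losses are not too negative. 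Without some such argument the supremum over the segment cannot be replaced by evaluation at $Y^{(t)}$, so this step needs to be supplied as well.
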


\Cref{thm:realized-upper-bound} is proved in the following way. We realize the regret of the projection instance in \Cref{sec:regret-project} and the regret of restriction instances in \Cref{sec:regret-restrict} by picking appropriate parameters respectively. Equipped with these two lemmas, we apply \Cref{thm:general regret} on various types of partitions. The full proof of \Cref{thm:realized-upper-bound} is in \Cref{sec:proof-main}.

\subsubsection{Regret of the Projection Instance}\label{sec:regret-project}
In this section, we bound the regret contributed by the ``projection instance'', namely the term $\sum_{t=1}^T \E{\inner{\wh L^{(t)}}{Y^{(t)}-\*e^{[m]}_{\bar k^*}}}$. Remember that we delay the choice of step sizes for the projection instance here. In fact, we choose the potential function $\Psi(\*y)$ as a separable function so that it is Tsallis entropy on blocks indexed by $U_1^S$ and $U_2$ (with different learning rate), and it is negative entropy on blocks indexed by $U_1^{\ol S}$.
\begin{lemma}\label{lem:realized-regretY}
	Let $\Psi(\*y) = \sum_{\bar k\in U_2} \frac{-\sqrt{\*y(\bar k)}}{\eta} + \sum_{\bar k\in U_1^S} \frac{-\sqrt{\*y(\bar k)}}{\eta_S} + \sum_{\bar k\in U_1^{\ol S}} \frac{\*y(\bar k)\log(\*y(\bar k))}{\eta_{\ol S}}$ where $\eta$, $\eta_S$ and $\eta_{\bar S}$ are constants such that $\min_{i\in[m]}(\hat L^{(t)})'(i)\cdot \max\set{\eta, \eta_S, \eta_{\bar S}}\geq -\frac{1}{4}$ for every $t\in[T]$. Choose $\gamma_{\ol k}^{(t)}(j)=\frac{x^*_{\ol k, j}}{\delta^*_{\ol k}}\alpha$ for any $t\in[T]$, $\ol k\in U_2$ and $j\in [n_{\ol k}]$.
	\begin{itemize}
\item If $U_1^{\ol S}\neq \emptyset$ we have
	\begin{align*}
		D_{\Psi}(\Delta_{m-1}) + \frac{1}{2}\sum_{t=1}^T\E{\sup_{\*y\in [W^{(t)}, Y^{(t)}]}\tp{\|(\wh L^{(t)})'\|_{\grad^{-2}\Psi(\*y)} }}
		&\le \frac{\sqrt{|U_1^S|}}{\eta_S} + \frac{\log(|U_1^{\ol S}|+1)}{\eta_{\ol S}}
		+ \frac{\sqrt{|U_2|}}{\eta} +  16\eta T \frac{\sqrt{\sum_{\bar k\in U_2}(\delta^*_{\bar k})^2}}{\alpha} \\
		& + 8\tp{1+\frac{1}{1-\bar {\gamma}}}\eta_S T \sqrt{|U_1^S|}+ {2}\eta_{\ol S}T  + 8\eta T \sqrt{|U_2 |}.
	\end{align*}
		
	\item If $U_1^{\ol S}= \emptyset$ we have
	\begin{align*}
	 D_{\Psi}(\Delta_{m-1}) + \frac{1}{2}\sum_{t=1}^T\E{\sup_{\*y\in [W^{(t)}, Y^{(t)}]}\tp{\|(\wh L^{(t)})'\|_{\grad^{-2}\Psi(\*y)} }}
		\le\frac{\sqrt{|U_2|}}{\eta} + \frac{\sqrt{|U_1^S|}}{\eta_S}+  \eta T \frac{4\sqrt{\sum_{\bar k\in U_2}(\delta^*_{\bar k})^2}}{\alpha} + \frac{2}{1-\bar {\gamma}}\eta_S T \sqrt{|U_1^S|}.
	\end{align*}
	\end{itemize}
\end{lemma}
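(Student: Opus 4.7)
The plan is to exploit the separability of $\Psi = \sum_{\bar k \in U_2} \Psi_{\bar k}^{U_2} + \sum_{\bar k \in U_1^S} \Psi_{\bar k}^{S} + \sum_{\bar k \in U_1^{\ol S}} \Psi_{\bar k}^{\ol S}$ across the three families of block indices, which makes $\nabla^2\Psi$ diagonal and lets us decompose both $D_{\Psi}(\Delta_{m-1})$ and $\|(\wh L^{(t)})'\|_{\nabla^{-2}\Psi(\*y)}^2$ as sums of per-block contributions, each analyzable independently using only its own potential.

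For the diameter I would apply a Cauchy--Schwarz argument to each Tsallis component, yielding $\sum_{\bar k \in U}\sqrt{\*y(\bar k)}/\eta_U \le \sqrt{|U|}/\eta_U$ and hence $\sqrt{|U_2|}/\eta + \sqrt{|U_1^S|}/\eta_S$, plus the standard $\log(|U_1^{\ol S}|+1)/\eta_{\ol S}$ bound for the negative-entropy part (the $+1$ accounting for mass possibly sitting outside $U_1^{\ol S}$). For the norm term, direct differentiation gives diagonal inverse-Hessian entries $4\eta \*y(\bar k)^{3/2}$ for $\bar k\in U_2$, $4\eta_S \*y(\bar k)^{3/2}$ for $\bar k\in U_1^S$, and $\eta_{\ol S}\*y(\bar k)$ for $\bar k\in U_1^{\ol S}$. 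The hypothesis $\min_i(\wh L^{(t)})'(i)\cdot\max\{\eta,\eta_S,\eta_{\ol S}\}\ge -1/4$ then yields a standard OSMD stability estimate showing that $\*y(\bar k) = O(Y^{(t)}(\bar k))$ uniformly along $[W^{(t)},Y^{(t)}]$, so up to an absolute constant the supremum may be replaced by evaluation at $Y^{(t)}$. What remains is to take expectations of per-block terms of the form $Y^{(t)}(\bar k)^{3/2}((\wh L^{(t)})'(\bar k))^2$ (for Tsallis blocks) and $Y^{(t)}(\bar k)((\wh L^{(t)})'(\bar k))^2$ (for negative-entropy blocks).

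Each family is handled using its observability structure. For $\bar k\in U_2$, the chosen local exploration $\gamma^{(t)}_{\bar k}(j) = x^*_{\bar k,j}\alpha/\delta^*_{\bar k}$ together with the LP constraint on $x^*_{\bar k,\cdot}$ gives $\sum_{a\in \Nin((\bar k,j))} Z^{(t)}(a) \ge (1-\bar\gamma^{(t)})\,Y^{(t)}(\bar k)\,\alpha/\delta^*_{\bar k}$, which after expectation produces a per-block bound of $O(\delta^*_{\bar k}/\alpha)$; a Cauchy--Schwarz step across $U_2$ then yields the $\sqrt{\sum_{\bar k\in U_2}(\delta^*_{\bar k})^2}/\alpha$ factor. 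For $\bar k\in U_1^S$ the self-loop at $(\bar k,1)$ ensures $\sum_{a\in \Nin((\bar k,1))}Z^{(t)}(a)\ge (1-\bar\gamma^{(t)})Y^{(t)}(\bar k)$, producing an $O(\sqrt{Y^{(t)}(\bar k)}/(1-\bar\gamma))$ per-block contribution whose sum over $\bar k$ is $O(\sqrt{|U_1^S|})$. The main obstacle is the $U_1^{\ol S}$ family: since each $(\bar k,1)$ is strongly observable but has \emph{no} self-loop, $\sum_{a\in\Nin((\bar k,1))}Z^{(t)}(a) = 1 - Z^{(t)}((\bar k,1))$ becomes arbitrarily small when $Y^{(t)}(\bar k)\to 1$, so the raw estimator $\hat\ell^{(t)}_{\bar k}(1)$ has divergent variance. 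The shift $c^{(t)} = \sum_{\bar k\in U_1^{\ol S}}\wh L^{(t)}(\bar k)\,Y^{(t)}(\bar k)$ introduced in \Cref{alg:line:Lconst} is designed precisely to cancel this divergence: a direct covariance identity, analogous to the variance-reduction step in the classical experts analysis, shows that the \emph{aggregate} quantity $\sum_{\bar k\in U_1^{\ol S}} Y^{(t)}(\bar k)((\wh L^{(t)})'(\bar k))^2$ has expectation $O(1)$, summing to $O(T)$ across rounds and giving the $2\eta_{\ol S}T$ term.

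Finally, when expanding $((\wh L^{(t)})'(\bar k))^2 = (\wh L^{(t)}(\bar k) - c^{(t)})^2$ for $\bar k\in U_2\cup U_1^S$ there is an additional cross term driven by $c^{(t)}$; using $\abs{c^{(t)}}=O(1)$ in expectation gives the auxiliary $8\eta T\sqrt{|U_2|}$ and the $1+1/(1-\bar\gamma)$ factor in the $U_1^S$ contribution. These cross terms vanish when $U_1^{\ol S}=\emptyset$ (since then $c^{(t)}\equiv 0$), which explains the cleaner constants in that case and recovers the second displayed inequality. Assembling the diameter bound with the three per-family variance bounds in the two cases $U_1^{\ol S}\neq\emptyset$ and $U_1^{\ol S}=\emptyset$ concludes the proof.
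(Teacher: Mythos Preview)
Your proposal is essentially correct and mirrors the paper's proof: separability of $\Psi$, the monotonicity lemma giving $W^{(t)}(\bar k)\le 4Y^{(t)}(\bar k)$ from the $-\tfrac14$ hypothesis, the per-family variance analysis via the three observability mechanisms, and the role of the shift $c^{(t)}$ are all exactly as in the paper.

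One small imprecision: for the cross terms on $U_2$ and $U_1^S$ you invoke ``$\abs{c^{(t)}}=O(1)$ in expectation,'' but what is actually needed is a bound on $\E[t-1]{(c^{(t)})^2}\cdot Y^{(t)}(\bar k)^{3/2}$, and the second moment of $c^{(t)}$ alone can blow up when some $Y^{(t)}(\bar i)$ with $\bar i\in U_1^{\ol S}$ approaches $1$. The paper handles this not by bounding $c^{(t)}$ in isolation but by using the simplex constraint $Y^{(t)}(\bar k)\le 1-Y^{(t)}(\bar i)$ (for $\bar k\notin U_1^{\ol S}$, $\bar i\in U_1^{\ol S}$) to cancel the $1/(1-Y^{(t)}(\bar i))$ factor coming from the observation probability of $(\bar i,1)$; this is what produces the clean $\sqrt{|U_2|}$ and $\sqrt{|U_1^S|}$ bounds in \eqref{eqn:regretY3} and \eqref{eqn:regretY4}.
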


The key to prove Lemma~\ref{lem:realized-regretY} is to give an upper bound to $\wh L^{t}(\ol k)$ for each $t\in[T]$ and $\ol k\in[m]$. In fact, it is sufficient to lower bound the minimum observing probability of the arms in $\ol k$, that is, $\min_{j\in[n_{\ol k}]} \sum_{a\in\Nin((\ol k, j))} Z^{(t)}(a)$ (see \Cref{proof:lem realized-regretY} for detailed deduction). The case when $\ol k\in U_1$ is easier since the observing probability in the denominator can be cancelled out with some terms in the numerator (see \Cref{eqn:regretY6} and \Cref{eqn:regretY7} in \Cref{proof:lem realized-regretY}). By choosing $\gamma_{\ol k}^{(t)}(j)=\frac{x^*_{\ol k, j}}{\delta^*_{\ol k}}\alpha$ for any $t\in[T]$, $\ol k\in U_2$ and $j\in [n_{\ol k}]$, for those $\ol k\in U_2$, we can verify that
\begin{align*}
    \min_{j\in[n_{\ol k}]} \sum_{a\in\Nin((\ol k, j))} Z^{(t)}(a)&\geq \frac{1}{2}\min_{j\in[n_{\ol k}]} \sum_{(\bar k,j')\in \Nin((\bar k,j))}Y^{(t)}(\bar k)\cdot \gamma_{\bar k}(j')
    \geq \frac{Y^{(t)}(\ol k)}{2}\cdot \frac{\alpha}{\delta^*_{\ol k}}.
\end{align*} 
Then the $Y^{(t)}(\ol k)$ can be further cancelled out with $\nabla ^{-2}\Psi(Y^{(t)}(\ol k))$ in the numerator. The complete proof of this lemma is postponed in \Cref{proof:lem realized-regretY}.

\subsubsection{Regret of the Restriction Instances}\label{sec:regret-restrict}

For those restriction instances, we choose negative entropy as their potential functions.

\begin{lemma}\label{lem:realized-regretX}
	Assume $k^*\in U_2$. Let $\Phi_{\bar k}(\*x)=\sum_{j=1}^{n_{\bar{k}}} \*x(j) \log \*x(j)$. By choosing $\gamma^{(t)}(({\bar{k},j}))=\frac{x_{\bar{k},j}^*\log{n_{\bar{k}}}}{\ol \delta^*}\cdot \beta$ for every $\bar k\in U_2$ and $j\in [n_{\bar k}]$ with some  $\beta$ satisfying $1-\bar{\gamma}^{(t)}\geq \frac{1}{2}$, we have
	\[
		\frac{D_{\Phi_{\bar k^*}(\Delta_{n_{\bar k^*-1}})}}{\eta_{\bar k^*}}+\frac{\eta_{\bar k^*}}{2}\cdot\sum_{t=1}^T \E{\sup_{\*x\in [Q^{(t)}_{\bar{k}^*}, [X^{(t)}_{\bar k^*}]} \|\hat\ell^{(t)}_{\bar k^*}\|_{\grad^{-2}\Phi_{\bar k^*}(\*x)}}\le \frac{\log{n_{\bar{k}^*}}}{\eta_{\bar{k}^*}} + \frac{\eta_{\bar{k}^*}\ol\delta^*}{2\beta \log{n_{\bar{k}^*}}} T.
	\]
\end{lemma}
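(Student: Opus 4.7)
The plan is to apply \Cref{prop:osmd} to the inner OSMD subroutine on the restriction instance indexed by $\bar k^*$ with potential $\Phi_{\bar k^*}$ and step size $\eta_{\bar k^*}$, then compute each of the two resulting terms using the special structure of the negative entropy. The diameter term is immediate: since $\Phi_{\bar k^*}$ attains its minimum $-\log n_{\bar k^*}$ at the uniform distribution and its maximum $0$ at the vertices of $\Delta_{n_{\bar k^*}-1}$, we have $D_{\Phi_{\bar k^*}}(\Delta_{n_{\bar k^*}-1})/\eta_{\bar k^*}=\log n_{\bar k^*}/\eta_{\bar k^*}$, which matches the first term in the claim.

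For the stability term I would first note that the Hessian of the negative entropy on $\^R^{n_{\bar k^*}}_{>0}$ is diagonal with $\grad^2\Phi_{\bar k^*}(\*x)_{jj}=1/\*x(j)$, so $\|\hat\ell^{(t)}_{\bar k^*}\|^2_{\grad^{-2}\Phi_{\bar k^*}(\*x)}=\sum_{j\in[n_{\bar k^*}]}\*x(j)(\hat\ell^{(t)}_{\bar k^*}(j))^2$. The main conceptual step is reducing the supremum over $[Q^{(t)}_{\bar k^*},X^{(t)}_{\bar k^*}]$ to a single point; here I would exploit the closed-form first-order condition on $\-{int}(\^R^{n_{\bar k^*}}_{>0})$, namely $Q^{(t)}_{\bar k^*}(j)=X^{(t)}_{\bar k^*}(j)\exp(-\eta_{\bar k^*}\hat\ell^{(t)}_{\bar k^*}(j))$. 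Because $\hat\ell^{(t)}_{\bar k^*}(j)\ge 0$ we obtain $Q^{(t)}_{\bar k^*}(j)\le X^{(t)}_{\bar k^*}(j)$ coordinate-wise, hence every $\*x$ on the segment satisfies $\*x(j)\le X^{(t)}_{\bar k^*}(j)$ and the supremum is bounded above by $\sum_j X^{(t)}_{\bar k^*}(j)(\hat\ell^{(t)}_{\bar k^*}(j))^2$.

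Next, I would take expectations by the tower rule, conditioning on the history up to round $t$ (so that $X^{(t)}_{\bar k^*}$, $Y^{(t)}$, and $Z^{(t)}$ are determined). A direct calculation for the importance-weighted estimator gives
\[
\E[A_t]{(\hat\ell^{(t)}_{\bar k^*}(j))^2}=\frac{(\ell^{(t)}((\bar k^*,j)))^2}{P^{(t)}_{\bar k^*,j}}\le\frac{1}{P^{(t)}_{\bar k^*,j}},\qquad P^{(t)}_{\bar k^*,j}\defeq\sum_{a\in\Nin((\bar k^*,j))}Z^{(t)}(a).
\]
The decisive inequality is a lower bound on $P^{(t)}_{\bar k^*,j}$ coming from the global exploration: using $Z^{(t)}(a)\ge\gamma^{(t)}(a)$ for every arm $a$ together with the LP feasibility of $(x^*_{\bar k^*,j'})_{j'}$ for $\+P$ on $G_{\bar k^*}$,
\[
P^{(t)}_{\bar k^*,j}\ge\sum_{(\bar k^*,j')\in\Nin((\bar k^*,j))\cap V_{\bar k^*}}\gamma^{(t)}((\bar k^*,j'))=\frac{\beta\log n_{\bar k^*}}{\ol\delta^*}\sum_{(\bar k^*,j')\in\Nin((\bar k^*,j))\cap V_{\bar k^*}}x^*_{\bar k^*,j'}\ge\frac{\beta\log n_{\bar k^*}}{\ol\delta^*}.
\]

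Combining these estimates with $\sum_j X^{(t)}_{\bar k^*}(j)=1$ gives $\E{\sum_j X^{(t)}_{\bar k^*}(j)(\hat\ell^{(t)}_{\bar k^*}(j))^2}\le\ol\delta^*/(\beta\log n_{\bar k^*})$; summing over $t\in[T]$ and multiplying by $\eta_{\bar k^*}/2$ yields the second term of the claim. The hypothesis $1-\bar\gamma^{(t)}\ge 1/2$ does not enter these estimates directly; its role is only to ensure $\bar\gamma^{(t)}\le 1$ so that the exploration mixture defines a valid probability distribution (and to align with the sibling assumptions in \Cref{lem:realized-regretY}). The main obstacle is the line-segment-to-point reduction in the second paragraph, where nonnegativity of $\hat\ell^{(t)}_{\bar k^*}$ is essential; once that is in place, the remainder is routine bookkeeping.
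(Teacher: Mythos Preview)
Your proposal is correct and follows essentially the same route as the paper: bound the sup over the segment by evaluating at $X^{(t)}_{\bar k^*}$ (via $Q^{(t)}_{\bar k^*}(j)\le X^{(t)}_{\bar k^*}(j)$ from nonnegativity of the losses), take the conditional expectation of the squared importance-weighted estimator, and lower-bound the observation probability by the global-exploration mass inside $V_{\bar k^*}$ together with LP feasibility of $x^*_{\bar k^*,\cdot}$. Your remark that the hypothesis $1-\bar\gamma^{(t)}\ge\tfrac12$ is not used in this lemma is also accurate; the paper invokes it only later when combining with \Cref{lem:realized-regretY} in the proof of \Cref{thm:realized-upper-bound}.
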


The main idea to prove Lemma~\ref{lem:realized-regretX} is similar to that of Lemma~\ref{lem:realized-regretY}. The proof is provided in \Cref{proof:lem realized-regretX}. 

\subsection{Adaptive Realization}\label{sec:adaptive realization}
The realization in \Cref{thm:realized-upper-bound} is based on the heuristic that the negative entropy performs well on dense restriction instances. In case the graph is ``nowhere dense'', say is of bounded in-degree, we can use Tsallis entropy as the potential function for blocks along with \emph{adaptive exploration rates} in each round to obtain \emph{optimal} regret. 

To the best of our knowledge, the idea of using adaptive exploration rate, i.e., the choice of exploration rate at each round is not uniform and depends on the distribution of the actions, is new in algorithms for bandit with graph feedback. It is also the key idea to obtain an optimal algorithm for very simple feedback graphs, e.g. directed cycles. 

The main lemma is the following one to bound the regrets contributed by restriction instances. It is instructive to compare it with Lemma~\ref{lem:realized-regretX}.

\begin{lemma}\label{lem:adapt-realized-regretX}
	Assume $k^*\in U_2$. Let $\Phi_{\bar k}(\*x)=\sum_{j=1}^{n_{\bar{k}}} -\sqrt{\*x(j)}$. By choosing $\gamma^{(t)}(({\bar{k},j}))=\frac{x_{\bar{k},j}^*}{\ol \delta^*}\cdot \beta\sum_{(\bar k,i)\in N_{\-{out}}((\bar k,j))} \sqrt{X^{(t)}_{\ol k}(i)}$ for every $\bar k\in U_2$ and $j\in [n_{\bar k}]$ with some  $\beta$ satisfying $1-\bar{\gamma}^{(t)}\geq \frac{1}{2}$, we have
	\[
		\frac{D_{\Phi_{\bar k^*}(\Delta_{n_{\bar k^*-1}})}}{\eta_{\bar k^*}}+\frac{\eta_{\bar k^*}}{2}\cdot\sum_{t=1}^T \E{\sup_{\*x\in [Q^{(t)}_{\bar{k}^*}, X^{(t)}_{\bar k^*}]} \|\hat\ell^{(t)}_{\bar k^*}\|_{\grad^{-2}\Phi_{\bar k^*}(\*x)}}\le \frac{\sqrt{n_{\bar{k}^*}}}{\eta_{\bar{k}^*}} + 2\eta_{\bar{k}^*}T\frac{\ol\delta^*}{\beta }.
	\]
\end{lemma}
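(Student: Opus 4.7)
The plan is to mirror the structure of \Cref{lem:realized-regretX}, replacing the negative entropy by the Tsallis entropy $\Phi_{\bar k}(\*x)=-\sum_j\sqrt{\*x(j)}$ and exploiting the adaptive choice of $\gamma^{(t)}$ to remove the $\log n_{\bar k^*}$ factor that appears in the negative-entropy version. First I would compute the diameter: on $\Delta_{n_{\bar k^*}-1}$, $\Phi_{\bar k^*}$ attains its minimum $-\sqrt{n_{\bar k^*}}$ at the uniform distribution and its maximum $-1$ at every vertex, giving $D_{\Phi_{\bar k^*}}(\Delta_{n_{\bar k^*}-1})\le \sqrt{n_{\bar k^*}}$ and hence the first term $\sqrt{n_{\bar k^*}}/\eta_{\bar k^*}$.

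Next I would handle the Hessian. Since $\nabla^2\Phi_{\bar k^*}(\*x)$ is diagonal with entries $\tfrac{1}{4}\*x(j)^{-3/2}$, one has $\|\hat\ell^{(t)}_{\bar k^*}\|^2_{\nabla^{-2}\Phi_{\bar k^*}(\*x)}=4\sum_{j} \*x(j)^{3/2}(\hat\ell^{(t)}_{\bar k^*}(j))^2$. To remove the supremum on $[Q^{(t)}_{\bar k^*}, X^{(t)}_{\bar k^*}]$, I would use the first-order optimality of $Q^{(t)}_{\bar k^*}$: differentiating $-\sqrt{\cdot}$ gives $1/\sqrt{Q^{(t)}_{\bar k^*}(j)}=1/\sqrt{X^{(t)}_{\bar k^*}(j)}+2\eta_{\bar k^*}\hat\ell^{(t)}_{\bar k^*}(j)$, and since $\hat\ell^{(t)}_{\bar k^*}(j)\ge 0$, $Q^{(t)}_{\bar k^*}(j)\le X^{(t)}_{\bar k^*}(j)$ coordinatewise. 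Consequently every $\*y$ on the segment satisfies $\*y(j)\le X^{(t)}_{\bar k^*}(j)$, and monotonicity of $\*y(j)\mapsto \*y(j)^{3/2}$ implies the supremum is attained at $X^{(t)}_{\bar k^*}$.

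The heart of the argument is the lower bound on the observation probability $\sum_{a\in\Nin((\bar k^*,j))}Z^{(t)}(a)$. A conditional calculation gives $\E{(\hat\ell^{(t)}_{\bar k^*}(j))^2\mid \mathcal F_{t-1}}\le 1/\sum_{a\in\Nin((\bar k^*,j))}Z^{(t)}(a)$ using $\ell^{(t)}\in[0,1]$. Restricting the sum to in-neighbors inside $V_{\bar k^*}$ and using $Z^{(t)}((\bar k^*,i))\ge \gamma^{(t)}((\bar k^*,i))$, I would plug in the adaptive formula. Because $(\bar k^*,i)\in\Nin((\bar k^*,j))$ is equivalent to $(\bar k^*,j)\in\Nout((\bar k^*,i))$, the factor $\sqrt{X^{(t)}_{\bar k^*}(j)}$ appears as one summand of $\sum_{(\bar k^*,\ell)\in\Nout((\bar k^*,i))}\sqrt{X^{(t)}_{\bar k^*}(\ell)}$, and keeping only that term yields
\[
\sum_{a\in\Nin((\bar k^*,j))} Z^{(t)}(a) \;\ge\; \frac{\beta\sqrt{X^{(t)}_{\bar k^*}(j)}}{\ol\delta^*}\cdot \sum_{(\bar k^*,i)\in\Nin((\bar k^*,j))} x^*_{\bar k^*,i} \;\ge\; \frac{\beta\sqrt{X^{(t)}_{\bar k^*}(j)}}{\ol\delta^*},
\]
where the last inequality invokes the LP constraint $\sum_{(\bar k^*,i)\in\Nin((\bar k^*,j))} x^*_{\bar k^*,i}\ge 1$. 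Substituting back and taking expectations gives $\E{4\sum_j X^{(t)}_{\bar k^*}(j)^{3/2}(\hat\ell^{(t)}_{\bar k^*}(j))^2}\le \frac{4\ol\delta^*}{\beta}\sum_j X^{(t)}_{\bar k^*}(j)=\frac{4\ol\delta^*}{\beta}$: one of the three $\sqrt{X^{(t)}_{\bar k^*}(j)}$ factors from the Hessian is cancelled by the denominator, so the surviving $X^{(t)}_{\bar k^*}(j)$ simply sum to $1$. This is the essential gain over \Cref{lem:realized-regretX}. Summing over $t\in[T]$ contributes $2\eta_{\bar k^*}T\ol\delta^*/\beta$, completing the bound.

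The main obstacle is engineering the $\sqrt{X^{(t)}_{\bar k^*}(j)}$ factor in the denominator: it forces us to isolate a single in-neighbor $(\bar k^*,i)$ whose out-neighborhood contains $(\bar k^*,j)$, so that $\sqrt{X^{(t)}_{\bar k^*}(j)}$ appears as a summand of $\gamma^{(t)}((\bar k^*,i))$. This is precisely what the adaptive exploration rate is designed for, and it is what distinguishes the proof from that of \Cref{lem:realized-regretX}; the negative-entropy Hessian produces $X_j$ rather than $X_j^{3/2}$, so it can absorb only a constant (up to the uniform $\log n_{\bar k^*}$ factor from the non-adaptive exploration) rather than $\sqrt{X_j}$, and cannot exploit this finer cancellation.
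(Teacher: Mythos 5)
Your proposal is correct and follows essentially the same route as the paper's proof: bound the diameter by $\sqrt{n_{\bar k^*}}$, reduce the supremum over the segment to $X^{(t)}_{\bar k^*}$, and use the adaptive exploration rate together with the LP constraint $\sum_{i\in\Nin((\bar k^*,j))}x^*_{\bar k^*,i}\ge 1$ to lower-bound the observation probability by $\frac{\beta}{\ol\delta^*}\sqrt{X^{(t)}_{\bar k^*}(j)}$, which cancels one of the three half-powers from the Tsallis Hessian and leaves $\sum_j X^{(t)}_{\bar k^*}(j)=1$. Your explicit justification that $Q^{(t)}_{\bar k^*}\le X^{(t)}_{\bar k^*}$ coordinatewise (via first-order optimality and nonnegativity of the loss estimator) fills in a step the paper leaves implicit, but the argument is otherwise identical.
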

\begin{proof}
	Since $X^{(t)}_{\bar k^*}$ and $Z^{(t)}$ is $\+F_{t-1}$ measurable, we have
	\begin{align*}
		\E{\sup_{\*x\in [Q^{(t)}_{\bar k^*},X^{(t)}_{\bar k^*}]} \|\hat\ell^{(t)}\|_{\grad^{-2}\Phi(\*x)}} &\leq \E{\sum_{j=1}^{n_{\bar k^*}} \frac{4X^{(t)}_{\bar k^*}(j)^{\frac{3}{2}} \*1[(\ol k^*,j)\in \Nout(A_t)]}{(\sum_{(\ol k^*,i)\in \Nin((\ol k^*,j))} Z^{(t)}((\ol k^*,i)))^2} } \\
		&= \E{\sum_{j=1}^{n_{\bar k^*}} \frac{4X^{(t)}_{\bar k^*}(j)^{\frac{3}{2}}}{(\sum_{(\ol k^*,i)\in \Nin((\ol k^*,j))} Z^{(t)}((\ol k^*,i)))^2} \E[t-1]{\*1[(\ol k^*,j)\in \Nout(A_t)]}} \\
		&= \E{\sum_{j=1}^{n_{\bar k^*}} \frac{4X^{(t)}_{\bar k^*}(j)^{\frac{3}{2}}}{\sum_{(\ol k^*,i)\in \Nin((\ol k^*,j))} Z^{(t)}((\ol k^*,i))} } \\
		&\leq \E{\sum_{j=1}^{n_{\bar k^*}}\frac{4X^{(t)}_{\bar k^*}(j)^{\frac{3}{2}}}{\sum_{(\ol k^*,i)\in \Nin((\ol k^*,j))} \beta\frac{x^*_i}{\ol\delta^*}\cdot \sqrt{X^{(t)}_{\bar k^*}(j) }} }\\
		&\leq \E{\sum_{j=1}^{n_{\bar k^*}}\frac{4X^{(t)}_{\bar k^*}(j)^{\frac{3}{2}}}{\frac{\beta}{\ol \delta^*}\sqrt{X^{(t)}_{\bar k^*}(j)}} }
		= \frac{4\ol\delta^*}{\beta }.
	\end{align*}
	By direct calculation, $\frac{D_{\Phi_{\bar k^*}(\Delta_{n_{\bar k^*-1}})}}{\eta_{\bar k^*}}\leq \frac{\sqrt{n_{\bar{k}^*}}}{\eta_{\bar{k}^*}}$.
	Thus, we have
	\[
		\frac{D_{\Phi_{\bar k^*}(\Delta_{n_{\bar k^*-1}})}}{\eta_{\bar k^*}}+\frac{\eta_{\bar k^*}}{2}\cdot\sum_{t=1}^T \E{\sup_{\*x\in [Q^{(t)}_{\bar{k}^*}, X^{(t)}_{\bar k^*}]} \|\hat\ell^{(t)}_{\bar k^*}\|_{\grad^{-2}\Phi_{\bar k^*}(\*x)}}\le \frac{\sqrt{n_{\bar{k}^*}}}{\eta_{\bar{k}^*}} + 2\eta_{\bar{k}^*}T\frac{\ol\delta^*}{\beta }.
	\]
\end{proof}

Equipped with Lemma~\ref{lem:adapt-realized-regretX} and Lemma~\ref{lem:realized-regretY}, we prove another realization of \Cref{thm:general regret}. We assume in \Cref{thm:adapt-realized-upper-bound} that the partition of the graph $G$ satisfies $U_2\neq \emptyset$. We remark that the bounds in Theorem~\ref{thm:adapt-realized-upper-bound} outperform ones in Theorem~\ref{thm:realized-upper-bound} when $G[V_{\bar k}]$ for $\bar k\in V_2$ is of bounded in-degree (and therefore they are not \emph{dense}).

\begin{theorem}\label{thm:adapt-realized-upper-bound}
	Let $G=(V,E)$ be a directed graph instance. Let $V_1,V_2,\dots,V_m$ be a legal partition of $V$ with $U_1$ and $U_2$ where $U_2\neq \emptyset$. Let $U_1^{S}\subseteq U_1$ be the indices of those singleton sets containing an arm with a self-loop and $U_1^{\ol S}=U_1\setminus U_1^S$. Then for sufficiently large $T>0$, any loss sequence $\ell^{(1)},\dots,\ell^{(T)}$ and any arm $a^*=(\bar k^*,j^*)$ in $V$, the regret of \Cref{alg:osmd-decomp} with respect to $a^*$ satisifies
	\begin{equation*}
		R_{(\bar k^*,j^*)}(T)\leq \left\{
		\begin{aligned}
			&3\cdot \tp{2\sum_{\ol k\in U_2}\sqrt{n_{\ol k}}}^{\frac{1}{3}}n_{\ol k^*}^{\frac{1}{6}}T^{\frac{2}{3}} +3\cdot 2^{\frac{2}{3}}\tp{|U_2|\sum_{\bar{k}\in U_2} (\delta^*_{\bar{k}})^2}^{\frac{1}{6}} T^{\frac{2}{3}} +4\sqrt{|U_1^S|}T^{\frac{1}{2}},& U_1^{\ol S}= \emptyset;\\
			& 6\cdot 2^{\frac{1}{3}}\tp{|U_2|\sum_{\bar{k}\in U_2} (\delta^*_{\bar{k}})^2}^{\frac{1}{6}} T^{\frac{2}{3}} + 3\cdot \tp{2\sum_{\ol k\in U_2}\sqrt{n_{\ol k}}}^{\frac{1}{3}}n_{\ol k^*}^{\frac{1}{6}}T^{\frac{2}{3}} + 4\sqrt{6|U_1^S|}T^{\frac{1}{2}} &\\ 
			&\quad\quad+2\sqrt{10\log(|U_1^{\ol S}|+1)} T^{\frac{1}{2}} + \frac{4T^{\frac{1}{3}} |U_2|^{\frac{5}{6}}}{2^{\frac{1}{3}}\tp{\sum_{\bar{k}\in U_2} (\delta^*_{\bar{k}})^2}^{\frac{1}{6}}} + \frac{\sqrt{6}}{3}T^{\frac{1}{2}},& U_1^{\ol S}\neq \emptyset.
		\end{aligned}
			\right.
	\end{equation*}
\end{theorem}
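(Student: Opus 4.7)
The plan is to instantiate Theorem~\ref{thm:general regret} with the same separable potential $\Psi$ on the projection instance as in Lemma~\ref{lem:realized-regretY}, but to replace the restriction-instance potential $\Phi_{\bar k^*}$ by the Tsallis entropy $\Phi_{\bar k}(\*x) = \sum_j -\sqrt{\*x(j)}$ and the global exploration rate by the adaptive rate $\gamma^{(t)}((\bar k,j)) = \frac{x^*_{\bar k,j}}{\ol\delta^*}\,\beta \sum_{(\bar k,i)\in \Nout((\bar k,j))} \sqrt{X^{(t)}_{\bar k}(i)}$ used in Lemma~\ref{lem:adapt-realized-regretX}. The local exploration rate is kept as $\gamma^{(t)}_{\bar k}(j) = \frac{x^*_{\bar k,j}}{\delta^*_{\bar k}}\alpha$ exactly as in Lemma~\ref{lem:realized-regretY}. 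Theorem~\ref{thm:general regret} splits the total regret into four pieces: the projection-instance OSMD regret, the restriction-instance OSMD regret for $\bar k^*$, the global exploration penalty $\sum_t \sum_{\bar k,j} \gamma^{(t)}((\bar k,j))$, and the local exploration penalty, which equals $\alpha T$ when $\bar k^* \in U_2$ and vanishes when $\bar k^* \in U_1$.

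The projection-instance piece together with the strongly observable contributions is dispatched verbatim by Lemma~\ref{lem:realized-regretY}, so the only new work concerns the restriction instance and the global exploration penalty. The restriction-instance regret is bounded by Lemma~\ref{lem:adapt-realized-regretX} as $\frac{\sqrt{n_{\bar k^*}}}{\eta_{\bar k^*}} + 2 \eta_{\bar k^*} T \frac{\ol\delta^*}{\beta}$. To control the global penalty, I would swap the order of summation and invoke the LP dual-like inequalities $\sum_{i \in \Nin((\bar k,l)) \cap V_{\bar k}} x^*_{\bar k,i} \le \delta^*_{\bar k}$ (trivial upper bound) and $\sum_l \sqrt{X^{(t)}_{\bar k}(l)} \le \sqrt{n_{\bar k}}$ by Cauchy--Schwarz, yielding $\sum_{\bar k \in U_2}\sum_j \gamma^{(t)}((\bar k,j)) \le \frac{\beta}{\ol\delta^*} \sum_{\bar k \in U_2} \delta^*_{\bar k} \sqrt{n_{\bar k}} \le \beta \sum_{\bar k \in U_2} \sqrt{n_{\bar k}}$, where the last step uses $\delta^*_{\bar k}/\ol\delta^* \le 1$. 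Summing over $t$ yields the penalty $\beta T \sum_{\bar k \in U_2} \sqrt{n_{\bar k}}$.

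Next I would apply a three-term AM--GM to $\frac{\sqrt{n_{\bar k^*}}}{\eta_{\bar k^*}} + \frac{2 \eta_{\bar k^*} \ol\delta^*}{\beta} T + \beta T \sum_{\bar k \in U_2} \sqrt{n_{\bar k}}$; the variables $\eta_{\bar k^*}$ and $\beta$ cancel inside the cube root, producing the advertised $3 (2 \sum_{\bar k} \sqrt{n_{\bar k}})^{1/3} n_{\bar k^*}^{1/6} T^{2/3}$ term (after absorbing the residual $\ol\delta^*$ into one of the $\sum_{\bar k} \sqrt{n_{\bar k}}$ factors using $\delta^*_{\bar k}/\ol\delta^* \le 1$). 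The remaining contributions---$(|U_2| \sum (\delta^*_{\bar k})^2)^{1/6} T^{2/3}$, $\sqrt{|U_1^S|}\,T^{1/2}$, and in the non-empty $U_1^{\ol S}$ case $\sqrt{\log |U_1^{\ol S}|}\, T^{1/2}$ together with $T^{1/3}|U_2|^{5/6}/(\sum (\delta^*_{\bar k})^2)^{1/6}$---come from balancing the terms in Lemma~\ref{lem:realized-regretY} against the local exploration $\alpha T$ by tuning $\alpha, \eta, \eta_S, \eta_{\ol S}$ with standard two-term AM--GM. The final statement is then obtained by a clean case split on whether $U_1^{\ol S}$ is empty, mirroring the case split already present in Lemma~\ref{lem:realized-regretY}.

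The main obstacle is the parameter bookkeeping. The side conditions $1-\ol\gamma^{(t)}\ge \tfrac12$ and the stability condition $\min_i (\hat L^{(t)})'(i)\cdot\max\{\eta,\eta_S,\eta_{\ol S}\}\ge -\tfrac14$ required by Lemmas~\ref{lem:realized-regretY} and~\ref{lem:adapt-realized-regretX} must be preserved under the adaptive global exploration. Since $\ol\gamma^{(t)}$ now depends on $X^{(t)}$ in every round, verifying these conditions requires a uniform upper bound on $\beta \sum_{\bar k}\sqrt{n_{\bar k}}$, which in turn puts a ceiling on $\beta$ (and hence a floor on $T$); this is where the ``sufficiently large $T$'' hypothesis enters. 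A second subtlety is making sure that $\ol\delta^*$ actually cancels rather than accumulates inside the cube root, which relies on the specific structure of the adaptive exploration through the LP-based argument above.
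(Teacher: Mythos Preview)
Your overall plan is right and matches the paper's: instantiate Theorem~\ref{thm:general regret} with Lemma~\ref{lem:realized-regretY} for the projection instance, Lemma~\ref{lem:adapt-realized-regretX} for the restriction instance, bound the global exploration penalty by swapping the order of summation, then optimise $\eta_{\bar k^*},\beta,\alpha,\eta,\eta_S,\eta_{\ol S}$ and split on whether $U_1^{\ol S}=\emptyset$.

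There is, however, a real gap at the point you yourself flag as ``the second subtlety.'' Your bound on the global exploration penalty is
\[
\sum_{\bar k\in U_2}\sum_j \gamma^{(t)}((\bar k,j))
\;\le\;\frac{\beta}{\ol\delta^*}\sum_{\bar k\in U_2}\Bigl(\sum_{i}\sqrt{X^{(t)}_{\bar k}(i)}\Bigr)\sum_{j\in\Nin((\bar k,i))}x^*_{\bar k,j}
\;\le\;\frac{\beta}{\ol\delta^*}\sum_{\bar k\in U_2}\delta^*_{\bar k}\sqrt{n_{\bar k}},
\]
after which you relax via $\delta^*_{\bar k}/\ol\delta^*\le 1$ to get $\beta\sum_{\bar k}\sqrt{n_{\bar k}}$. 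But that relaxation throws away the $1/\ol\delta^*$ you need. In the three-term AM--GM the product becomes
\[
\frac{\sqrt{n_{\bar k^*}}}{\eta_{\bar k^*}}\cdot \frac{2\eta_{\bar k^*}T\ol\delta^*}{\beta}\cdot \beta T\sum_{\bar k}\sqrt{n_{\bar k}}
\;=\;2T^2\,\ol\delta^*\,\sqrt{n_{\bar k^*}}\sum_{\bar k}\sqrt{n_{\bar k}},
\]
so the cube root carries an extra factor $(\ol\delta^*)^{1/3}$ that does not appear in the theorem. If instead you keep the intermediate bound $\frac{\beta}{\ol\delta^*}\sum_{\bar k}\delta^*_{\bar k}\sqrt{n_{\bar k}}$, the $\ol\delta^*$ does cancel but you are left with $\bigl(\sum_{\bar k}\delta^*_{\bar k}\sqrt{n_{\bar k}}\bigr)^{1/3}$, which again is not the claimed $\bigl(\sum_{\bar k}\sqrt{n_{\bar k}}\bigr)^{1/3}$. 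In neither case does ``$\delta^*_{\bar k}/\ol\delta^*\le 1$'' absorb the residual as you assert.

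The paper closes this gap by a preprocessing step you omit: before anything else it deletes edges inside every block $G_{\bar k}$ with $\bar k\in U_2$ so that each vertex has in-degree exactly $1$ there; this can only increase the regret, so proving the bound on the sparsified graph suffices. After this reduction, $\sum_{j\in\Nin((\bar k,i))}x^*_{\bar k,j}$ has a single term bounded by $1$, and the penalty becomes $\frac{\beta T}{\ol\delta^*}\sum_{\bar k}\sqrt{n_{\bar k}}$ with the $\ol\delta^*$ retained in the denominator. Now the $\ol\delta^*$ from Lemma~\ref{lem:adapt-realized-regretX} cancels exactly, and the AM--GM yields the advertised $3\bigl(2\sum_{\bar k}\sqrt{n_{\bar k}}\bigr)^{1/3}n_{\bar k^*}^{1/6}T^{2/3}$. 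The rest of your proposal (parameter tuning, the $U_1^{\ol S}$ case split, and the verification of the stability condition for large $T$) is in line with the paper.
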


The proof of this theorem is in \Cref{sec:proof-adapt}. 


\subsection{Remark on Realization}\label{sec:remark-on-realization}

\Cref{lem:realized-regretX} and \Cref{lem:adapt-realized-regretX} correspond to two different algorithms for restriction instances and the bounds are in general not comparable. As we explained before, the parameters chosen in  \Cref{lem:realized-regretX} performs well on dense instances while those in \Cref{lem:adapt-realized-regretX} prefer sparse instances. In fact, our framework analyzed in \Cref{thm:general regret} allows each block to use their own prefered realization. Therefore, if in a given partition those weakly observable blocks are hybrid of dense ones and sparse ones, we can choose for each block either the algorithm in \Cref{lem:realized-regretX}, or the algorithm in \Cref{lem:adapt-realized-regretX}, depending on which is better.

A legal partition must be given as an input for our algorithm. A natural question is how to find a good partition beforehand. A direct solution is to regard bounds in \Cref{thm:realized-upper-bound} and \Cref{thm:adapt-realized-upper-bound} (or hybrid of them as discussed in the last paragraph) as the optimization object to find a best partition. Of course, the dependency of the regret bounds and the graph structure is complicated, and therefore the optimization problem is in general intractable. We will see in next section some natural choices of the partition already yields improved and optimal bounds. However, it is still a very interesting problem to devise an efficient way to find a good partition based on the current regret bounds in the most general setting.

\section{Applications}\label{sec:application}

We discuss applications of \Cref{thm:realized-upper-bound} and \Cref{thm:adapt-realized-upper-bound} in this section. We design optimal algorithms for $C$-corrupted strongly observable graphs (\Cref{sec:corrupted}) and graphs of bounded out-degree (\Cref{sec:directed}). We give improved algorithms when $G$ is the disjoint union of dense graphs in \Cref{sec:union}. We also formalize a conjecture regarding the lower bounds for the mini-max regret when $G$ is the disjoint union of small graphs in \Cref{sec:union}. In \Cref{sec:hypercube}, we give an improved regret bound for hypercubes by designing a non-trivial partition of the graph. 

\subsection{$C$-corrupted Strongly Observable Graphs}\label{sec:corrupted}

We say a graph is $C$-corrupted strongly observable if at most $C$ vertices in $V$ are not strongly observable. \Cref{fig:c-corrupted} illustrates a corrupted MAB and a corrupted full feedback graph.

\begin{theorem}\label{thm:c-corrupted}
If $G$ is $C$-corrupted strongly observable, then for sufficiently large $T$, any loss sequence $\ell^{(1)},\dots,\ell^{(T)}$ and any $a^*\in V$, we have
\[
    R_{a^*}(T) \leq 9\cdot \tp{4C}^{\frac{1}{3}}\cdot T^{\frac{2}{3}}.
\]
\end{theorem}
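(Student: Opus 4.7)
The plan is to apply the adaptive realization \Cref{thm:adapt-realized-upper-bound} to a carefully chosen legal partition that places the at most $C$ corrupted vertices, together with a small set of their in-neighbors, into a single block of $U_2$ while keeping every strongly observable vertex in its own singleton block in $U_1$.

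Let $W\subseteq V$ denote the set of weakly observable vertices, so $|W|\le C$. Since $G$ is observable, every $w\in W$ admits some in-neighbor $u_w\in V$; fix one such $u_w$ for each $w$ and set $D=\{u_w:w\in W\}$ and $V^*=W\cup D$. Because a corrupted vertex cannot carry a self-loop, $u_w\neq w$, so $2\le |V^*|\le 2C$. The partition is $\{V^*\}\cup \{\{v\}:v\in V\setminus V^*\}$, with $U_2=\{m\}$ indexing $V^*$ and $U_1$ collecting the strongly observable singletons. Legality reduces to checking that $G[V^*]$ is observable: every $w\in W$ is observed by $u_w\in V^*$, and every $d\in D\setminus W$ is strongly observable in $G$, so either carries a self-loop (covered) or satisfies $\Nin(d)=V\setminus\{d\}$, in which case any other vertex of $V^*$ lies in $\Nin(d)\cap V^*$.

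Next I would bound $\delta^*_m$. Setting $x_v=1$ for every $v\in D$, plus (if necessary) $x_w=1$ for one $w\in W$ to cover any self-loop-free $d\in D\setminus W$ in the degenerate case $|D|=1$, gives a feasible solution to the LP $\+P$ for $G_m$ of cost at most $|D|+1\le C+1$. Plugging $|U_2|=1$, $n_m\le 2C$ and $\delta^*_m\le C+1$ into \Cref{thm:adapt-realized-upper-bound}, the two $T^{2/3}$ terms evaluate to
\[
3\cdot(2\sqrt{2C})^{1/3}(2C)^{1/6}\,T^{2/3}+6\cdot 2^{1/3}\bigl((C+1)^2\bigr)^{1/6}\,T^{2/3}
\;\le\;
\bigl(3\cdot 2^{2/3}+6\cdot 2^{2/3}\bigr)C^{1/3}T^{2/3}
=9\cdot 4^{1/3}\,C^{1/3}T^{2/3},
\]
using $(C+1)^{1/3}\le 2^{1/3}C^{1/3}$ for $C\ge 1$. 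The residual contributions in \Cref{thm:adapt-realized-upper-bound} are $O(\sqrt{|U_1^S|}\,T^{1/2})$, $O(\sqrt{\log(|U_1^{\bar S}|+1)}\,T^{1/2})$ and $O(|U_2|^{5/6}T^{1/3})$; these depend only polylogarithmically on $|V|$ and are swallowed into the leading $T^{2/3}$ term once $T$ is large enough. Handling the easier subcase $U_1^{\bar S}=\emptyset$ is analogous but with strictly smaller leading constants.

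The main obstacle I anticipate is simply the constant bookkeeping: the bound $9\cdot(4C)^{1/3}$ is essentially saturated by the two leading $T^{2/3}$ contributions of \Cref{thm:adapt-realized-upper-bound}, so one must verify carefully that the slight slack obtained from $(C+1)^{1/3}\le 2^{1/3}C^{1/3}$, together with ``sufficiently large $T$'', really does absorb all of the lower-order terms uniformly in $|V|$. No conceptually new ingredient beyond the legal partition above and a direct invocation of \Cref{thm:adapt-realized-upper-bound} is needed; in particular, the upper bound contains no term in $|V|$ precisely because only $V^*$ (of size $\le 2C$) participates in the $U_2$ component of the decomposition.
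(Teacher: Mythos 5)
Your proposal follows essentially the same route as the paper: isolate the at most $C$ non-strongly-observable vertices together with one in-neighbor each into a single $U_2$ block of size at most $2C$, keep all remaining (strongly observable) vertices as $U_1$ singletons, and invoke \Cref{thm:adapt-realized-upper-bound}; the paper's proof is exactly this partition (it only augments $U$ by in-neighbors when $G[U]$ fails to be observable, an immaterial difference). Your constant bookkeeping, LP feasibility argument for $\delta^*_m\le C+1$, and absorption of the lower-order terms for large $T$ are all consistent with (indeed more detailed than) what the paper records.
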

\begin{proof}
	We now define a partition of the graph and apply \Cref{thm:adapt-realized-upper-bound} to finish the proof. First let $U\subseteq V$ be the set of all the vertices that are not strongly observable. If $G[U]$ is observable, then we simply let $V\setminus U$ be the strongly observable part and let $U$ be another part. Otherwise, for every $u\in U$ that is not observable in $G[U]$, since it is weakly observable in $V$, we can pick a strongly observable vertex $v\in V\cap\Nin(u)$ and add $v$ to $U$. After this operation, $G[U]$ is weakly observable and satisfies $\abs{U}\le 2C$. Then we let $V\setminus U$ be the strongly observable part and $U$ be another part.
	
	The theorem follows from \Cref{thm:adapt-realized-upper-bound} with this partition.
\end{proof}

Note that the bound in \Cref{thm:c-corrupted} contains no $N$ factor and it is clearly optimal for constant $C$. 

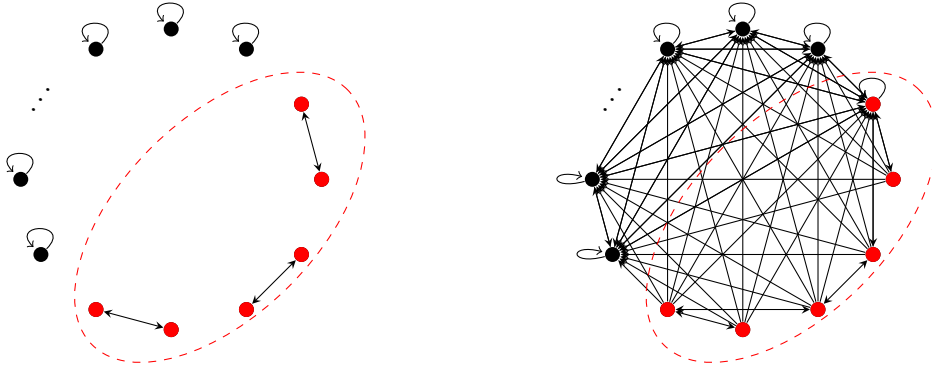
\begin{figure}[h]
	\centering
        \begin{minipage}[t]{0.45\textwidth}
            \centering
            \begin{tikzpicture}
                \tikzset{mynode/.style=fill, circle, inner sep=2pt, minimum size=3pt}
                \foreach \j in {0,1,2,...,10}{
                    \pgfmathparse{cos(30*(\j)};
                    \tikzmath{\x =\pgfmathresult;}
                    \pgfmathparse{sin(30*(\j)};
                    \tikzmath{\y =\pgfmathresult;}
                    \node[mynode] (MAB\j) at (3-2*\x,3-2*\y){};
                }
                \pgfmathparse{cos(30*11};
                \tikzmath{\x =\pgfmathresult;}
                \pgfmathparse{sin(30*11};
                \tikzmath{\y =\pgfmathresult;}
                \node[color=black] at (3-1.8*\x,3-1.8*\y){\begin{rotate}{95}$\ddots$\end{rotate}};
                \foreach \j in {0,1,8,9,10}{
                    \path[-stealth] (MAB\j) edge [loop] (MAB\j);
                }
                \foreach \j in {2,3,4,5,6,7}{
                    \pgfmathparse{cos(30*(\j)};
                    \tikzmath{\x1 =\pgfmathresult;}
                    \pgfmathparse{sin(30*(\j)};
                    \tikzmath{\y1 =\pgfmathresult;}
                    \node[mynode,red] (MAB\j) at (3-2*\x1,3-2*\y1){};
                }
                \path[stealth-stealth] (MAB2) edge (MAB3);
                \path[stealth-stealth] (MAB4) edge (MAB5);
                \path[stealth-stealth] (MAB6) edge (MAB7);
                \draw[dashed, red, rotate around={45:(4,2)}] (4.1,2.6) ellipse (2.4 and 1.3 );
            \end{tikzpicture}
        \end{minipage}
        \begin{minipage}[t]{0.45\textwidth}
            \centering
            \begin{tikzpicture}
                \tikzset{mynode/.style=fill, circle, inner sep=2pt, minimum size=3pt}
                \foreach \j in {0,1,2,...,10}{
                    \pgfmathparse{cos(30*(\j)};
                    \tikzmath{\x =\pgfmathresult;}
                    \pgfmathparse{sin(30*(\j)};
                    \tikzmath{\y =\pgfmathresult;}
                    \node[mynode] (MAB\j) at (3-2*\x,3-2*\y){};
                }
                \pgfmathparse{cos(30*11};
                \tikzmath{\x =\pgfmathresult;}
                \pgfmathparse{sin(30*11};
                \tikzmath{\y =\pgfmathresult;}
                \node[color=black] at (3-1.8*\x,3-1.8*\y){\begin{rotate}{95}$\ddots$\end{rotate}};
                \foreach \j in {7,8,9,10}{
                    \path[-stealth] (MAB\j) edge [loop] (MAB\j);
                }
                \foreach \j in {0,1}{
                    \path[-stealth] (MAB\j) edge [loop left] (MAB\j);
                }
                \foreach \i in {0,1,2,3,4,5,6,7,9,10}{
                    \path[-stealth] (MAB\i) edge (MAB8);
                }
                \foreach \i in {0,1,2,3,4,5,6,7,8,10}{
                    \path[-stealth] (MAB\i) edge (MAB9);
                }
                \foreach \i in {0,1,2,3,4,5,6,7,8,9}{
                    \path[-stealth] (MAB\i) edge (MAB10);
                }
                \foreach \i in {1,2,3,4,5,6,7,8,9,10}{
                    \path[-stealth] (MAB\i) edge (MAB0);
                }
                \foreach \i in {0,2,3,4,5,6,7,8,9,10}{
                    \path[-stealth] (MAB\i) edge (MAB1);
                }
                \foreach \i in {0,1,2,3,4,5,6,8,9,10}{
                    \path[-stealth] (MAB\i) edge (MAB7);
                }
                \foreach \j in {2,3,4,5,6,7}{
                    \pgfmathparse{cos(30*(\j)};
                    \tikzmath{\x =\pgfmathresult;}
                    \pgfmathparse{sin(30*(\j)};
                    \tikzmath{\y =\pgfmathresult;}
                    \node[mynode,red] (MAB\j) at (3-2*\x,3-2*\y){};
                }
                \path[-stealth] (MAB7) edge (MAB6);
                \path[-stealth] (MAB7) edge (MAB5);
                \path[stealth-stealth] (MAB4) edge (MAB5);
                \path[stealth-stealth] (MAB2) edge (MAB3);
                \path[stealth-stealth] (MAB2) edge (MAB4);

                \draw[dashed, red, rotate around={45:(4,2)}] (4.1,2.6) ellipse (2.4 and 1.3 );
            \end{tikzpicture}
        \end{minipage}
    \caption{Two examples of $C$-corrupted strongly observable graphs}
    \label{fig:c-corrupted}
\end{figure}

\subsection{Union of Dense Graphs}\label{sec:union}

In this section, we examine \Cref{thm:realized-upper-bound} when $G$ is the disjoint union of special graphs. We are especially interested in cases when each $G_{\bar k}$ is dense so that negative entropy is locally a good choice. These examples demonstrate that our two-stage algorithm is essential to capture the structure of these instances.

\subsubsection{Disjoint Union of Loop-less Cliques} 

Let $m\ge 2$. Assume the graph $G=(V,E)$ is the disjoint union of $G_1,\dots,G_m$ where each $G_{\bar k}=(V_{\bar k},E_{\bar k})$ is a $n_k$ loop-less clique ($E_{\bar k}=\set{(i,j)\mid i,j\in V_k, i\ne j}$). We index vertices in $V$ using $(\bar k,j)$ for $\bar k\in [m]$ and $j\in [n_{\bar k}]$ as usual. Let $N=\sum_{\bar k\in [m]} n_{\bar k}$ be the number of vertices in $G$. 
 Using the partition $V=\bigcup_{\bar k=1}^{m}V_{\bar k}$, \Cref{thm:realized-upper-bound} yields

\begin{theorem}\label{thm:union-of-cliques}
	If the weakly observable graph $G=(V,E)$ is the disjoint union of $G_1,\dots,G_m$ where each $G_{\bar k}=(V_{\bar k},E_{\bar k})$ is a $n_k$ loop-less clique. For any sufficiently large $T$, any loss vector sequence $\ell^{(1)},\dots,\ell^{(T)}$, the regret of our algorithm is 
	\[
		R(T) = O\Big(\Big(\sum_{\bar k=1}^m \log n_{\bar k}\Big)^{\frac{1}{3}}\cdot T^{\frac{2}{3}}\Big).
	\]
\end{theorem}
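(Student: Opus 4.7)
The plan is to invoke \Cref{thm:realized-upper-bound} with the most natural partition, namely $V_{\bar k}$ is exactly the vertex set of the $\bar k^{\!{th}}$ clique $G_{\bar k}$, and to verify that the partition falls into a single clean case of the case analysis.

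First I would check that this partition is legal. Since $G$ is observable, every vertex must have some in-neighbour; in a loop-less clique this forces $n_{\bar k}\ge 2$ for each $\bar k$, so each block is non-singleton and observable. Consequently we have $U_1=\emptyset$ (hence also $U_1^S=U_1^{\ol S}=\emptyset$) and $U_2=[m]$, placing us in the regime ``$U_2\neq\emptyset$ and $U_1^{\ol S}=\emptyset$'' of \Cref{thm:realized-upper-bound}. Next I would compute the local fractional weak domination number $\delta^*_{\bar k}$ for the loop-less clique $G_{\bar k}$: the LP forces $\sum_{v\ne u}x_v\ge 1$ for each $u$, and by symmetry $x_v=1/(n_{\bar k}-1)$ is feasible, giving $\delta^*_{\bar k}=n_{\bar k}/(n_{\bar k}-1)\le 2$ for all $\bar k$.

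Plugging these numbers into the relevant case of \Cref{thm:realized-upper-bound}, the bound becomes
\begin{align*}
R_{(\bar k^*,j^*)}(T)
&\le 3\cdot 2^{\frac{2}{3}}\Bigl(m\sum_{\bar k\in [m]}(\delta^*_{\bar k})^2\Bigr)^{\frac{1}{6}} T^{\frac{2}{3}}
+\frac{3}{2^{\frac{1}{3}}}\Bigl(\sum_{\bar k\in [m]}\delta^*_{\bar k}\log n_{\bar k}\Bigr)^{\frac{1}{3}} T^{\frac{2}{3}} \\
&\le O\!\bigl(m^{\frac{1}{3}}\bigr)\,T^{\frac{2}{3}}+O\!\Bigl(\Bigl(\textstyle\sum_{\bar k} \log n_{\bar k}\Bigr)^{\frac{1}{3}}\Bigr)T^{\frac{2}{3}},
\end{align*}
using $(\delta^*_{\bar k})^2\le 4$ and $\delta^*_{\bar k}\le 2$ in the two terms respectively.

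Finally I would absorb the $m^{1/3}$ term into the $(\sum_{\bar k}\log n_{\bar k})^{1/3}$ term: since every $n_{\bar k}\ge 2$, we have $\sum_{\bar k\in [m]}\log n_{\bar k}\ge m\log 2$, so $m^{1/3}=O\bigl((\sum_{\bar k}\log n_{\bar k})^{1/3}\bigr)$. Combining yields the desired bound $R(T)=O\bigl((\sum_{\bar k}\log n_{\bar k})^{1/3}\,T^{2/3}\bigr)$. There is no real obstacle beyond bookkeeping; the only subtle point is the observation $n_{\bar k}\ge 2$ that rules out $m^{1/3}$ being the dominant term. I would emphasize the contrast with the previous best bound $O((N\log N)^{1/3}T^{2/3})$: since $\log n_{\bar k}$ can be $o(n_{\bar k})$, the decomposition genuinely exploits the independence of the cliques, which is exactly what \Cref{thm:general regret} was designed to do.
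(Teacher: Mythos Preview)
Your proposal is correct and follows exactly the route the paper takes: apply \Cref{thm:realized-upper-bound} with the obvious partition $V=\bigcup_{\bar k}V_{\bar k}$, land in the case $U_2\neq\emptyset$, $U_1^{\ol S}=\emptyset$ (indeed $U_1=\emptyset$), use $\delta^*_{\bar k}\le 2$, and absorb the $m^{1/3}$ term via $n_{\bar k}\ge 2$. The paper's own proof is in fact just the one sentence ``Using the partition $V=\bigcup_{\bar k=1}^m V_{\bar k}$, \Cref{thm:realized-upper-bound} yields'', so you have actually written out the details it omits.
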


Note that the fractional domination number of $G$ is $2m$ and therefore previous best algorithm in~\cite{ACBDK15,CHLZ21} has regret $O\Big(\tp{m\log N}^{\frac{1}{3}}\cdot T^{\frac{2}{3}}\Big)$.

It is instructive to compare the two bounds. We can rewrite the two upper bounds respectively as
\[
O\Big(\Big(\log\prod_{\bar k=1}^m n_{\bar k}\Big)^{\frac{1}{3}}\cdot T^{\frac{2}{3}}\Big)\quad\mbox{and}\quad O\Big(\Big(\log N^m\Big)^{\frac{1}{3}}\cdot T^{\frac{2}{3}}\Big).
\]

The algorithm in~\cite{ACBDK15,CHLZ21} is simply OSMD with negative entropy and is good when the graph is dense. Therefore, when $m$ is small and each loop-less clique is of similar size (for example, when $m=2$ and $n_{1}=n_2$), their bound is close to ours . In this case, the regret contributed by restriction instances dominates, since the incidence graph $H$ is of constant size.

On the other hand, if $m$ is large, previous algorithm is much worse than ours. Suppose each $n_{\bar k}=2$, then $G$ consists of $m$ disjoint isolated edges, which is topologically close to the MAB instance\footnote{Although unlike MAB, it is weakly observable here. }. In this case, the regret of the projection instance dominates and our realization in \Cref{thm:realized-upper-bound} essentially use Tsallis entropy as the potential function, which is believed to be optimal.  For those intermediate $m$ and arbitrary value $n_{\bar k}$, our algorithm perfectly interpolates between the two extremes.

We conjecture that the bound in \Cref{thm:union-of-cliques} is optimal.

\subsubsection{Disjoint Union of Complete Bipartite Graphs}

Similarly, if $G$ is the disjoint union of $G_1,\dots, G_m$ and each $G_{\bar k}=(V_{\bar k},E_{\bar k})$ is a $\frac{n_{\bar k}}{2}+\frac{n_{\bar k}}{2}$ complete bipartite graph, then we can use the straightforward partition $V=\bigcup_{\bar k\in [m]} V_{\bar k}$ and apply \Cref{thm:realized-upper-bound} to obtain an algorithm with regret
	\[
	R(T) = O\Big(\Big(\sum_{\bar k=1}^m \log n_{\bar k}\Big)^{\frac{1}{3}}\cdot T^{\frac{2}{3}}\Big).
	\]
We know each $G_{\bar k}$ contains a $\frac{n_k}{2}$-packing independent set of size $\frac{n_k}{2}$ and therefore each $G_{\bar k}$ has regret lower bound $\Omega\tp{\tp{\log n_{\bar k}}^{\frac{1}{3}}\cdot T^{\frac{2}{3}}}$. What is the lower bound for $G$? We make the following conjecture regarding the additive property of the lower bound of this form. 

\begin{conjecture}
	If $G$ is the disjoint union of $G_1,\dots,G_{m}$ weakly observable graphs and each $G_{\bar k}$ contains an $t_{\bar k}$-packing independent set $S_{\bar k}$. Then for any algorithm, for any sufficiently large $T>0$, there exists a loss vector $\ell^{(1)},\dots,\ell^{(T)}$ yielding regret at least $\Omega\Big(\tp{\sum_{\bar k=1}^m\max\set{\log\abs{S_{\bar k}},\frac{\abs{S_{\bar k}}}{t_{\bar k}}}}^{\frac{1}{3}}\cdot T^{\frac{2}{3}}\Big)$.
\end{conjecture}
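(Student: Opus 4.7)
The natural approach is a distributional lower bound via Yao's principle that combines the per-subgraph construction underlying \Cref{prop:packing-lb}. For each $\bar k\in[m]$, let $\+D_{\bar k}$ be the hard loss distribution on $G_{\bar k}$ witnessing the regret lower bound $\max\set{\log\abs{S_{\bar k}},\abs{S_{\bar k}}/t_{\bar k}}^{\frac{1}{3}}T^{\frac{2}{3}}$: a latent random parameter $\theta_{\bar k}$ (a uniformly chosen distinguished arm $I_{\bar k}^*\in S_{\bar k}$ in the $\log\abs{S_{\bar k}}$ regime, or a uniformly chosen subset of $S_{\bar k}$ in the $\abs{S_{\bar k}}/t_{\bar k}$ regime) that tilts the mean loss of certain arms in $V_{\bar k}$ downward by a common amplitude $\eps$. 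Form the joint hard distribution on $G$ by drawing $\theta_1,\dots,\theta_m$ independently and, in parallel, playing the $\theta_{\bar k}$-tilted copy on each block $V_{\bar k}$ with all base losses centered at $1/2$.

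With this construction the expected regret decomposes additively. Let $N_{\bar k}$ denote the random number of rounds in which the algorithm plays an arm of $V_{\bar k}$, so that $\sum_{\bar k}N_{\bar k}=T$. Since every ``locally good'' arm has the same expected per-round loss $1/2-\eps$, a direct computation gives $\E{R(T)} = \sum_{\bar k=1}^m \E{R_{\bar k}}$, where $R_{\bar k}=\eps\bigl(N_{\bar k}-(\text{plays of good arms inside }V_{\bar k})\bigr)$ is a per-block local regret. Because $V_1,\dots,V_m$ are pairwise disconnected, pulling any arm in $V_{\bar k'}$ with $\bar k'\ne\bar k$ returns no feedback on vertices of $V_{\bar k}$ and therefore leaves the posterior of $\theta_{\bar k}$ unchanged. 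One may then replay the KL/Pinsker argument behind \Cref{prop:packing-lb} inside each block, with the global horizon $T$ replaced by the local (expected) horizon $\E{N_{\bar k}}$, yielding $\E{R_{\bar k}}\ge c\cdot\alpha_{\bar k}^{1/3}\E{N_{\bar k}}^{2/3}$, where $\alpha_{\bar k}\defeq\max\set{\log\abs{S_{\bar k}},\abs{S_{\bar k}}/t_{\bar k}}$ and $c>0$ is an absolute constant.

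Summing and optimizing completes the argument. Subject to $\sum_{\bar k}\E{N_{\bar k}}=T$, Lagrange multipliers applied to $\sum_{\bar k}\alpha_{\bar k}^{1/3}\E{N_{\bar k}}^{2/3}$ give the minimizer $\E{N_{\bar k}}\propto\alpha_{\bar k}$, at which the sum equals $T^{\frac{2}{3}}\bigl(\sum_{\bar k}\alpha_{\bar k}\bigr)^{\frac{1}{3}}$, matching the conjectured bound.

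The expected main obstacle is synchronizing a single amplitude $\eps$ across the $m$ blocks, because the per-block argument of \Cref{prop:packing-lb} is tight only when $\eps$ is tuned both to $\alpha_{\bar k}$ and to the local horizon $\E{N_{\bar k}}$, and the latter is adaptive and algorithm-dependent. A promising workaround is to replace the common-$\eps$ construction by a two-level mixture: first sample an ``active'' index $\bar k^{\star}$ from a prior $\pi_{\bar k}\propto\alpha_{\bar k}$, then play $\+D_{\bar k^{\star}}$ on $V_{\bar k^{\star}}$ with amplitude $\eps_{\bar k^{\star}}$ dictated by the Lagrange-optimal allocation, while all other blocks stay at uniform loss $1/2$. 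Showing that this still yields an additive lower bound (rather than the trivial averaged bound $\tfrac{1}{m}\sum_{\bar k}R_{\bar k}$) reduces to the claim that, with disjoint feedback, the algorithm cannot concentrate its plays on the correct $\bar k^{\star}$ without first exploring each block enough to distinguish it from the null---a statement provable by a Fano-type inequality on the posterior of $\bar k^{\star}$. Making this exploration cost add up to $\alpha_{\bar k}$ on every block, and not only on the active one, is the delicate step that I expect will absorb most of the technical effort.
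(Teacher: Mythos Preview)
The statement you are addressing is presented in the paper as a \emph{conjecture}; the authors do not supply a proof, and indeed they motivate it precisely because their upper bounds in \Cref{sec:union} would be optimal if it held. So there is no ``paper's own proof'' to compare against, and what you have written should be understood as an attack on an open problem rather than a reconstruction.

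That said, your outline has a real gap exactly where you flag it, and it is worth being explicit that the earlier part of the sketch does \emph{not} already establish the bound. The displayed inequality $\E{R_{\bar k}}\ge c\,\alpha_{\bar k}^{1/3}\E{N_{\bar k}}^{2/3}$ does not follow from the common-$\eps$ construction you describe: the KL/Pinsker step in \Cref{prop:packing-lb} yields only $\E{R_{\bar k}}\gtrsim \eps\,\E{N_{\bar k}}$ provided $\eps$ is below a block-specific threshold depending on $\alpha_{\bar k}$ and $\E{N_{\bar k}}$, and otherwise gives nothing. With a single $\eps$ the algorithm can concentrate all $T$ plays in the block with the smallest $\alpha_{\bar k}$ and learn that block's good arm; the resulting lower bound degrades to $\alpha_{\min}^{1/3}T^{2/3}$ rather than $\bigl(\sum_{\bar k}\alpha_{\bar k}\bigr)^{1/3}T^{2/3}$. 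Conversely, with block-specific amplitudes $\eps_{\bar k}$ the global best arm lives in the block with the largest $\eps_{\bar k}$, the additive decomposition of the regret collapses, and again only a single-block term survives. So the Lagrange computation, while correct as an optimization, is being applied to a per-block inequality that you have not actually obtained.

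Your two-level mixture is a natural attempt to sidestep this, but the difficulty you name at the end is the entire problem. When only one block is active, the regret conditioned on $\bar k^\star=\bar k$ is $\eps_{\bar k}(T-G_{\bar k})$, and averaging over the prior $\pi_{\bar k}\propto\alpha_{\bar k}$ gives a convex combination, not a sum. To beat the ``trivial averaged bound'' you would need to show that detecting which block is active already costs the algorithm $\Omega(\alpha_{\bar k})$-worth of exploration \emph{in every block}, not only in the active one; but the null blocks carry no signal about the hidden good arm, so a Fano argument on $\bar k^\star$ alone controls identification of the block index, not the per-block exploration cost. Turning that into the additive bound is, as far as I can tell, genuinely open and is the reason the paper records the statement as a conjecture rather than a theorem.
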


\subsection{Graphs with Bounded Degree} \label{sec:directed}

In this section, we establish the following theorem, which is \Cref{thm:out-bounded-theta} in the introduction.


\begin{theorem}\label{thm:directed-bounded-degree}
		Let $G=(V,E)$ be a weakly observable directed graph of bounded out-degree with $N$ vertices. Then for sufficiently large $T$, its mini-max regret satisfies
	\[
	R^*(T) = \Theta\tp{N^{\frac{1}{3}}\cdot T^{\frac{2}{3}}}.
	\]
\end{theorem}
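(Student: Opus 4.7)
The theorem has two halves: an upper bound and a matching lower bound, which I would handle separately.

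The upper bound $R^*(T) = O(N^{\frac{1}{3}} T^{\frac{2}{3}})$ is a direct consequence of the universal upper bound discussed in the paragraph after \Cref{thm:bounded-degree}: any weakly observable graph admits a spanning subgraph of bounded in-degree (keep, for each vertex, a small number of in-edges that preserves the observability structure), and since removing edges never decreases the mini-max regret, applying \Cref{thm:bounded-degree} to this sparsified subgraph already bounds the regret of the original $G$ by $O(N^{\frac{1}{3}} T^{\frac{2}{3}})$. Since a weakly observable graph of bounded out-degree is in particular a weakly observable graph, this half follows with no new work.

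For the matching lower bound $R^*(T) = \Omega(N^{\frac{1}{3}} T^{\frac{2}{3}})$, the plan is to invoke \Cref{prop:packing-lb}. The key observation is that, if every vertex has out-degree at most a constant $d$, then any independent set $S$ of $G$ automatically satisfies $|\Nout(u) \cap S| \le |\Nout(u)| \le d$ for every $u \in V$, so $S$ is a $d$-packing independent set. Consequently it suffices to exhibit an independent set of size $\Omega(N/d) = \Omega(N)$; then \Cref{prop:packing-lb} delivers a lower bound of $\Omega\tp{(|S|/d)^{\frac{1}{3}} T^{\frac{2}{3}}} = \Omega\tp{N^{\frac{1}{3}} T^{\frac{2}{3}}}$ as desired.

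To construct such an independent set, I would form the underlying undirected graph $\tilde G$ on the non-self-loop vertices of $V$ with $u \sim v$ iff $(u,v)\in E$ or $(v,u)\in E$. The bounded out-degree hypothesis implies that $G$ has at most $Nd$ non-self-loop edges, hence total $\tilde G$-degree at most $2Nd$, and by Markov's inequality at least half the vertices have $\tilde G$-degree $\le 4d$. Restricting to such vertices gives an induced subgraph of maximum degree $\le 4d$, from which a standard greedy procedure extracts an independent set of size $\Omega(N/d)$ that is self-loop free and independent in $G$, and hence $d$-packing by the observation above.

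The main obstacle I anticipate is verifying that a constant fraction of $V$ consists of vertices without self-loops, since only such vertices can appear in the greedy construction. I expect this to follow from combining the weakly observable hypothesis with the out-degree bound: a strongly observable vertex without a self-loop must be pointed to by every other vertex, and accommodating many such vertices would quickly exhaust the out-degree budget, while a graph where nearly all vertices carry self-loops cannot remain weakly observable on a non-trivial fraction of its arms. Quantifying this trade-off, and if necessary treating separately those graphs whose strongly observable part is large by applying the reduction to their weakly observable subpart, is the only non-routine step of the argument.
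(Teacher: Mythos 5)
Your overall route is the same as the paper's: the upper bound is exactly the universal $O(N^{\frac{1}{3}}T^{\frac{2}{3}})$ bound (the paper applies \Cref{thm:adapt-realized-upper-bound} to the trivial one-block partition, whose proof itself begins by pruning edges down to in-degree $1$), and the lower bound is \Cref{prop:packing-lb} applied to an $O(1)$-packing independent set of size $\Omega(N)$ produced greedily. Your observation that bounded out-degree $d$ makes \emph{every} independent set automatically $d$-packing, and your Markov-plus-greedy extraction of an independent set of size $\Omega(N/d)$ among the low-degree vertices of the underlying undirected graph, are both fine and are essentially what the paper means by ``the straightforward greedy strategy.''

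The obstacle you flag at the end, however, is a genuine gap, and the resolution you hope for does not exist. Weak observability of $G$ only guarantees that \emph{at least one} vertex is weakly observable; it imposes no constraint forcing a constant fraction of vertices to lack self-loops. Concretely, take $N-1$ isolated vertices each carrying a self-loop, plus one vertex $v$ with no self-loop whose only in-neighbor is one of them. This graph is weakly observable, has out-degree at most $2$, yet its largest independent set is the singleton $\set{v}$; indeed it is $1$-corrupted strongly observable, so by \Cref{thm:c-corrupted} its mini-max regret is $O(T^{\frac{2}{3}})$ with no dependence on $N$, and the claimed $\Omega(N^{\frac{1}{3}}T^{\frac{2}{3}})$ lower bound fails for it. So the trade-off you hoped to quantify (``nearly all self-loops forces strong observability on most arms'') is not the issue --- the statement simply needs the additional hypothesis that $\Omega(N)$ vertices are free of self-loops (e.g.\ $G$ loopless, as in the undirected-cycle application), under which your construction goes through verbatim. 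To be fair, the paper's own one-line appeal to ``the straightforward greedy strategy'' glosses over exactly the same point, so you have in fact located a gap in the paper rather than introduced one; but as written neither argument establishes the lower bound for all weakly observable bounded-out-degree graphs.
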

\begin{proof}
	For the upper bound, we simply regard the whole graph as one block and apply \Cref{alg:osmd-decomp} with the realization in \Cref{sec:adaptive realization} on this partition. The regret of our algorithm is $O\Big(N^{\frac{1}{3}}\cdot T^{\frac{2}{3}}\Big)$ according to \Cref{thm:adapt-realized-upper-bound}.

	For the lower bound, since the out-degree of each vertex is bounded, we can find a $O(1)$-packing independent set $S$ with $\abs{S}=\Omega(N)$ in $G$ using the straightforward greedy strategy. It then follows from Proposition~\ref{prop:packing-lb} that for any algorithm, there exists some loss vectors sequence $\ell^{(1)},\dots,\ell^{(T)}$ such that the regret is $\Omega\Big(N^{\frac{1}{3}}\cdot T^{\frac{2}{3}}\Big)$. 
\end{proof}

Following the same argument above for the upper bound, we can prove \Cref{thm:bounded-degree} in the introduction. 
 In fact, the proof of this theorem implies a universal mini-max regret upper bound $O\Big(N^{\frac{1}{3}}\cdot T^{\frac{2}{3}}\Big)$ for \emph{any} weakly observable graph $G$ since one can always obtain a subgraph of $G$ with maximum in-degree $1$ by deleting edges. The operation never decrease the mini-max regret. The bound improves previous best universal upper bound $O\Big((N\log N)^{\frac{1}{3}}\cdot T^{\frac{2}{3}}\Big)$ in~\cite{ACBDK15,CHLZ21}. 

Then we have the following corollary since the in-degree and out-degree of an undirected graph are identical. This closes an open problem in~\cite{CHLZ21} where they asked for the optimal algorithm for undirected cycles.
\begin{corollary}\label{cor:undirected-bounded-degree}
		If a weakly observable graph $G=(V,E)$ with $\abs{V}=N$ is undirected and the degree of each vertex is bounded by a constant, then for sufficiently large $T$, its mini-max regret satisfies
	\[
		R^*(T) = \Theta\Big(N^{\frac{1}{3}}\cdot T^{\frac{2}{3}}\Big).
	\]
\end{corollary}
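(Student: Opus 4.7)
The plan is to derive this corollary as an immediate consequence of \Cref{thm:directed-bounded-degree} (or equivalently, \Cref{thm:out-bounded-theta} and \Cref{thm:bounded-degree}) by reinterpreting the undirected graph $G$ as a directed graph in the standard way: each undirected edge $\{u,v\}$ is treated as the two directed edges $(u,v)$ and $(v,u)$. The crucial observation is that under this convention, for every vertex $v\in V$, the in-degree and out-degree in the resulting directed graph both equal the undirected degree of $v$. Hence, if $G$ has bounded undirected degree, the corresponding directed graph has both bounded in-degree and bounded out-degree.

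For the upper bound, I would invoke the remark following the proof of \Cref{thm:directed-bounded-degree}, which yields \Cref{thm:bounded-degree}: since the directed version of $G$ has bounded in-degree and $N$ vertices, applying \Cref{alg:osmd-decomp} with the trivial one-block partition and the adaptive realization of \Cref{sec:adaptive realization} gives regret $O\bigl(N^{\frac{1}{3}}\cdot T^{\frac{2}{3}}\bigr)$ via \Cref{thm:adapt-realized-upper-bound}. For the lower bound, I would use the bounded out-degree side: a greedy construction yields an $O(1)$-packing independent set $S$ of size $\Omega(N)$ in $G$, and then \Cref{prop:packing-lb} forces any algorithm to suffer regret $\Omega\bigl(\max\{\log|S|,|S|/O(1)\}^{\frac{1}{3}}\cdot T^{\frac{2}{3}}\bigr)=\Omega\bigl(N^{\frac{1}{3}}\cdot T^{\frac{2}{3}}\bigr)$ on some loss sequence. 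Matching the two bounds completes the proof.

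There is essentially no genuine obstacle here once \Cref{thm:directed-bounded-degree} is in hand; the only thing to verify is that a weakly observable undirected graph remains weakly observable under the directed interpretation (which is immediate from the definition, since $\Nin(v)$ and $\Nout(v)$ coincide with the undirected neighborhood) and that the greedy construction of the packing independent set goes through. For the latter, one repeatedly selects a vertex $v$, adds it to $S$, and removes $v$ together with its out-neighborhood (equivalently, its undirected neighborhood), which removes at most $O(1)$ vertices per iteration; this produces $|S|=\Omega(N)$ and the $O(1)$-packing property follows because every vertex $u\in V$ has at most $O(1)$ out-neighbors, hence at most $O(1)$ of them can lie in $S$.
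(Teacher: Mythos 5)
Your proposal is correct and follows essentially the same route as the paper: the paper derives this corollary from \Cref{thm:directed-bounded-degree} in one line by noting that in an undirected graph (viewed as a directed graph with each edge doubled) the in-degree and out-degree coincide with the undirected degree, so the bounded in-degree side gives the $O(N^{1/3}T^{2/3})$ upper bound via the adaptive realization and the bounded out-degree side gives the $\Omega(N^{1/3}T^{2/3})$ lower bound via the greedy $O(1)$-packing independent set and \Cref{prop:packing-lb}. You simply spell out the details that the paper leaves implicit.
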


\subsection{Hypercubes}\label{sec:hypercube}

In all applications mentioned so far, the regret bounds obtained by our realizations are either provably optimal or at least we conjectured to be optimal. These algorithms are achieved by natural partition of the graph. In this section, we demonstrate that a good partition is non-trivial to find. 

A hypercube, denoted by $Q_n=(V_n,E_n)$, is an undirected graph where $V_n=\set{0,1}^n$ and two vertices are adjacent if and only if their Hamming distance is exactly $1$. We use \Cref{thm:realized-upper-bound} to prove that a hypercube $Q_n$ has regret $O\tp{\tp{\frac{N}{n}\log n}^{\frac{1}{3}}T^{\frac{2}{3}}}$ using \Cref{alg:osmd-decomp} against any $\ell^{(1)},\dots,\ell^{(T)}$ where $N=2^n$ is the number of total vertices. Note that the algorithm in~\cite{ACBDK15,CHLZ21} has regret upper bound $O\tp{N^{\frac{1}{3}}\cdot T^{\frac{2}{3}}}$ if one takes the trivial bound $\delta^*=O\tp{\frac{N}{n}}$.

\begin{theorem} \label{thm:hypercube}
	If $Q_n=(V_n,E_n)$ is a hypercube with $|V_n|=2^n=N$, then for every $T>0$, every loss vector sequence $\ell^{(1)},\dots,\ell^{(T)}$, our realization satisfies
	\[
		R(T)= O\Big(\Big(\frac{N}{n}\log n\Big)^{\frac{1}{3}}T^{\frac{2}{3}}\Big).
	\]
\end{theorem}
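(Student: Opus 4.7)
The plan is to exhibit a legal partition of $Q_n$ into star-shaped blocks coming from a radius-$1$ covering code, and then invoke \Cref{thm:realized-upper-bound}. Observe first that $Q_n$ is loop-free and every vertex has degree exactly $n < N-1$, so no vertex is strongly observable in $G$. Consequently any legal partition must have $U_1 = \emptyset$ and $U_2 = [m]$, and our only freedom is in choosing the blocks. Looking at the two dominant terms in \Cref{thm:realized-upper-bound} for the case $U_1^{\ol S} = U_1^S = \emptyset$, namely $\tp{|U_2|\sum_{\bar k}(\delta^*_{\bar k})^2}^{1/6}T^{2/3}$ and $\tp{\sum_{\bar k}\delta^*_{\bar k}\log n_{\bar k}}^{1/3}T^{2/3}$, we want every block to be as dense as possible (to keep $\delta^*_{\bar k} = O(1)$), while the number of blocks $m$ should be as small as possible (to keep $m^{1/3}$ small). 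A Hamming ball of radius $1$ in $Q_n$ is exactly such a block: it has $n+1$ vertices, it induces a star centred at the codeword, and its local fractional domination number equals $1$.

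Concretely, I would fix a radius-$1$ covering code $D \subseteq V_n$ of $Q_n$; such a code of size $|D| = O(N/n)$ always exists (when $n = 2^k - 1$ a perfect Hamming code gives $|D| = N/(n+1)$ exactly, and for general $n$ one has the classical bound $K(n,1) = \Theta(N/n)$). For each non-codeword $v \in V_n \setminus D$, choose one codeword $d(v) \in D$ at Hamming distance $1$ from $v$, set $d(d) = d$ for $d \in D$, and define $V_d \defeq \set{v \in V_n : d(v) = d}$. Since any two vertices at Hamming distance $1$ from a common centre $d$ differ from each other in exactly two bits, $G[V_d]$ is a star centred at $d$, hence observable, with $n_{\bar k} = |V_d| \le n+1$ and $\delta^*_{\bar k} \le 1$. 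A codeword $d$ to which no non-codeword has been assigned would leave $V_d = \set{d\}$, a singleton that cannot be placed in $U_1$ (since $d$ is not strongly observable in $Q_n$); such ``orphan'' codewords can simply be merged into any adjacent block $V_{d'}$, raising that block's size by $1$ and its local domination number by at most $1$, which does not affect the asymptotics.

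Now apply \Cref{thm:realized-upper-bound} to this partition, with $|U_2| = m = O(N/n)$, $\delta^*_{\bar k} = O(1)$, and $n_{\bar k} = O(n)$. The projection term becomes
\[
\tp{|U_2|\sum_{\bar k \in U_2}(\delta^*_{\bar k})^2}^{\frac{1}{6}} T^{\frac{2}{3}} = O\tp{\tp{m^2}^{\frac{1}{6}} T^{\frac{2}{3}}} = O\tp{(N/n)^{\frac{1}{3}} T^{\frac{2}{3}}},
\]
and the restriction term becomes
\[
\tp{\sum_{\bar k \in U_2}\delta^*_{\bar k}\log n_{\bar k}}^{\frac{1}{3}} T^{\frac{2}{3}} = O\tp{\tp{m \log n}^{\frac{1}{3}} T^{\frac{2}{3}}} = O\tp{\tp{(N/n)\log n}^{\frac{1}{3}} T^{\frac{2}{3}}}.
\]
The second term dominates and yields the claimed bound. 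The main obstacle is conceptual rather than computational: one has to recognise that the ``right'' partition of the hypercube is into Hamming balls of radius $1$, which simultaneously makes every restriction instance a star (so $\delta^*_{\bar k}=1$) while keeping the number of blocks as small as the covering number $K(n,1)$ allows. Once this partition is identified, the theorem follows by direct substitution into \Cref{thm:realized-upper-bound}.
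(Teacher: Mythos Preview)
Your approach is correct and parallel to the paper's: both partition $Q_n$ into $\Theta(N/n)$ blocks of size $O(n)$ centred on a small dominating set, each with $\delta^*_{\bar k}=O(1)$, and then invoke \Cref{thm:realized-upper-bound}. One slip: a loop-free star has local fractional domination number $2$, not $1$ --- the centre needs a leaf to observe it and the leaves need the centre --- but this only shifts constants. The paper differs in how it builds the dominating set: rather than citing covering codes as a black box, it constructs $D_n$ explicitly (via the MCMIS decomposition of \Cref{lem: hypercube ind set} for $n=2^k-1$, then by induction on $n$) so that $Q_n[D_n]$ contains a perfect matching; each block is then seeded by an adjacent \emph{pair} from $D_n$, which automatically guarantees $|V_{\bar k}|\ge 2$ and $\delta^*_{\bar k}\le 2$ with no orphan-merging needed. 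Your covering-code argument is cleaner and more conceptual, but the orphan step deserves slightly more care than you give it: you must ensure orphans can be absorbed without any single block's $\delta^*_{\bar k}$ or $n_{\bar k}$ leaving the $O(1)$ and $O(n)$ ranges (e.g., by distributing them one-per-block, or by restricting to $n=2^k-1$ with a perfect Hamming code where no orphans occur). The paper's matching device sidesteps this entirely.
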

We use the following lemma to define a legal partition and then apply \Cref{thm:realized-upper-bound} to prove the theorem.  
	\begin{lemma}[\cite{JP90}] \label{lem: hypercube ind set}
		Let $n=2^k-1$, $k\geq 1$. Then there is a partition of $V_n$ into $n+1$ sets $S_n^{(0)}, \dots, S_n^{(n)}$ of cardinality $\frac{2^n}{n+1}$ each such that for every $0\le i \le n$, $S_n^{(i)}$ is an minimum cardinality maximal independent set (MCMIS) of $Q_n$.
	\end{lemma}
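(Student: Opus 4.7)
The plan is to realize the partition via the classical \emph{Hamming code} $C \subseteq \mathbb{F}_2^n$ of length $n = 2^k - 1$. Recall that $C$ is a linear $[n,n-k,3]$-code with $2^{n-k} = \frac{2^n}{n+1}$ codewords, minimum distance $3$, and covering radius $1$; equivalently, $C$ is a \emph{perfect} $1$-error-correcting code, so the Hamming balls of radius $1$ centered at the codewords partition $V_n = \mathbb{F}_2^n$. Since $C$ is a linear subspace, the quotient $\mathbb{F}_2^n / C$ has $2^n / 2^{n-k} = 2^k = n+1$ cosets, each of cardinality $\frac{2^n}{n+1}$. Fixing any system of coset representatives $v_0, v_1, \dots, v_n$, I define $S_n^{(i)} \defeq v_i + C$ for $0 \le i \le n$; these sets are pairwise disjoint and cover $V_n$ by construction.

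It remains to check that each coset $S_n^{(i)}$ is a maximal independent set of $Q_n$ of minimum cardinality. For \textbf{independence}, pick any two distinct $x, y \in S_n^{(i)}$; then $x - y \in C \setminus \{0\}$, so $d_H(x, y) = \mathrm{wt}(x - y) \ge 3$, which means $x, y$ are non-adjacent in $Q_n$. For \textbf{maximality}, the covering radius of $C$ being $1$ means every $u \in \mathbb{F}_2^n$ satisfies $d_H(u, C) \le 1$; translating by $v_i$, every $u \in \mathbb{F}_2^n$ satisfies $d_H(u, S_n^{(i)}) \le 1$, so any $u \notin S_n^{(i)}$ has a neighbor in $S_n^{(i)}$, and hence cannot be added while preserving independence.

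For the \textbf{minimum cardinality} claim, I will apply the standard domination lower bound for maximal independent sets in regular graphs. A maximal independent set $S$ in $Q_n$ is an independent dominating set, so every vertex of $V_n \setminus S$ has at least one neighbor in $S$. Since $Q_n$ is $n$-regular and $S$ is independent, double-counting the edges between $S$ and $V_n \setminus S$ yields
\[
|S| \cdot n \;\ge\; |V_n \setminus S| \;=\; 2^n - |S|,
\]
hence $|S| \ge \frac{2^n}{n+1}$. Each coset $S_n^{(i)}$ attains this bound with equality, so it is an MCMIS, concluding the proof.

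The conceptual core of the argument is that the \emph{perfectness} of the Hamming code simultaneously delivers both maximality (via covering radius $1$) and minimum cardinality (via the packing bound), and that the cosets of a linear perfect code inherit all its metric properties. The only step that requires a moment of thought is the lower bound for maximal independent sets, but the regularity and bipartiteness of $Q_n$ make the counting immediate; no delicate structural combinatorics beyond the well-known properties of the Hamming code is needed.
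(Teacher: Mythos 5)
Your proof is correct: realizing the partition as the cosets of the binary Hamming code of length $n=2^k-1$ is the standard construction, the perfectness of the code gives both independence (minimum distance $3$) and maximality (covering radius $1$), and your counting bound $n\abs{S}\ge 2^n-\abs{S}$ for an independent dominating set in the $n$-regular $Q_n$ correctly certifies minimum cardinality. The paper itself supplies no proof of this lemma---it is imported directly from \cite{JP90}---so there is nothing to compare against; your argument is the classical one, and the only superfluous remark is the appeal to bipartiteness of $Q_n$, which is never actually used (regularity and independence suffice for the double count).
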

\begin{proof}
	First we construct a set $D_n\subseteq V_n$ with the following properties:
	\begin{itemize}
		\item The set $D_n$ is a dominating set of $Q_n$;
		\item The set $D_n$ can be divided into $\frac{|D_n|}{2}$ pairs of vertices where each pair of vertices are neighbors in $Q_n$ (In other words, $Q_n[D_n]$ contains a perfect matching).
	\end{itemize}
	If $n=2^k-1$ for a positive integer $k$, let $D_n =S_n^{(0)}\cup S_n^{(1)}$ where $\set{S_n^{(0)}, \dots, S_n^{(n)}}$ be the partition in Lemma~\ref{lem: hypercube ind set}. We now prove that such $D_n$ satisfies above properties. For both $S_n^{(0)}$ and $S_n^{(1)}$ are maximal independent sets, every vertex in $V_n$ is connected to some vertices in $D_n$. Thus $D_n$ satisfies the first dominating property. Obviously, $|D_n|=\frac{2^{n+1}}{2^k}$ is even. For every vertex in $S_n^{(i)}$, $i\in \set{0}\cup [n]$, it has at least one neighbor in every other blocks. Note that every vertex in $Q_n$ has $n$ neighbors. Thus, each vertex in $S_n^{(0)}$ is connected with exactly one vertex in $S_n^{(1)}$ and vice versa. So $D_n$ satisfies the second pairing property.

	Then we construct such $D_n$ for general $n\geq 1$ by induction. Assume that we have such a $D_n$ for $Q_n$ where $2^k-1\leq n< 2^{k+1}-2$ and $k$ is a positive integer. We denote a binary string ending with $1$ in $Q_n$ by $\sim 1$ and similarly define $\sim 0$. We extend $\sim 1$ to $\sim 01$ and $\sim 10$,  $\sim 0$ to $\sim 00$ and $\sim 11$ to get $Q_{n+1}$. We form $D_{n+1}$ by extending $D_n$ in this way. Note that the two extensions of each string in $V_n\setminus D_n$ can be dominated by some vertices in $D_{n+1}$. For each pair in $D_n$, the four extended strings can form two pairs. Thus $D_{n+1}$ satisfies the two properties as well.
	
	It follows from above analysis that each $D_n$ has $\frac{2^n}{2^k}$ pairs of vertices. Then we construct a partition of $Q_n$ to feed \Cref{thm:realized-upper-bound}: We prepare $\frac{2^n}{2^k}$ empty blocks and put each pair of vertices in $D_n$ into each block without repetition. For each vertex $v\in V_n\setminus D_n$, there must be one vertex $u\in D_n$ which is adjacent to $v$ (if there exists more than one such vertex, choose any one of them). Then we put $v$ into the block containing $u$. We know that every vertex can be put in one block, and each block contains at most $2n$ vertices. This yields that there are at least $\frac{2^n}{2^{k+1}}$ blocks with not less than $n$ vertices since otherwise the total vertex number would be less that $2^{n}$. The fractional domination number of each block is at most $2$ for a partition constructed in the above way. We can then apply \Cref{alg:osmd-decomp} on $Q_n$ with this partition. By \Cref{thm:realized-upper-bound}, we have that $R(T)=O\Big(\tp{\frac{N}{n}\log n}^{\frac{1}{3}}T^{\frac{2}{3}}\Big)$ where $N=|V_n|=2^n$.
\end{proof}

\section{Conclusions and Future Work}

In this article, we introduced a new two-level algorithmic framework for solving bandit with graph feedback. Conceptually, we demonstrated that the hierarchical view of the graph structure is essential towards an optimal algorithm. Technically, we proved a regret decomposition theorem characterizing the interplay between the parts of the graph in terms of their contributed regrets.  Moreover, we further introduced sophisticated realizations of the framework which yields improved and optimal regret in many cases. The technique developed in these realizations might find applications in other problems. 

A few interesting problems regarding the performance of the framework remain. Our algorithm relies on a partition of the graph and it is quite challenging to determine the best partition for a given graph. As discussed in \Cref{sec:remark-on-realization}, finding the best partition achieving minimum regret in \Cref{thm:realized-upper-bound} and \Cref{thm:adapt-realized-upper-bound} in general is already a computational heavy task. It is still possible that an efficient \emph{approximation algorithm} for a certain relaxation of the optimization problem exists. 

Another interesting problem is to confirm the optimality of some regret bounds achieved in the article, especially those discussed in \Cref{sec:union}.

\bibliography{ref_graph_decomp}
\bibliographystyle{alpha}


\appendix
\section{Proof of Lemma~\ref{lem:regret}, Lemma~\ref{lem:regretY} and Lemma~\ref{lem:regretX}}\label{sec:proof}

\subsection{Proof of Lemma~\ref{lem:regret}}\label{proof:lem regret}
\begin{proof}
	It is routine to have
	\begin{align*}
		R_{(\bar k^*,j^*)}(T)& = \E{\sum_{t=1}^T \tp{\ell^{(t)}(A_t) - \ell^{(t)}(a^*)}}= \sum_{t=1}^T\E{\E[t-1]{\ell^{(t)}(A_t) - \ell^{(t)}(a^*)}}\\
		&= \sum_{t=1}^T\E{\inner{\ell^{(t)}}{Z^{(t)}-\*e^{[N]}_{a^*}}}=\sum_{t=1}^T\E{\E[t-1]{\inner{\ell^{(t)}}{Z^{(t)}-\*e^{[N]}_{a^*}}}}\\
		&=\sum_{t=1}^T\E{\E[t-1]{\inner{\hat\ell^{(t)}}{Z^{(t)}-\*e^{[N]}_{a^*}}}}=\sum_{t=1}^T\E{\inner{\hat\ell^{(t)}}{Z^{(t)}-\*e^{[N]}_{a^*}}}.
	\end{align*}
	So it suffices to bound $\E{\inner{\hat\ell^{(t)}}{Z^{(t)}-\*e^{[N]}_{a^*}}}$. We now show that it can be decomposed into four parts. We have for every $t\in [T]$,
	\begin{align*}
		&\phantom{{}={}}\E{\inner{\hat\ell^{(t)}}{Z^{(t)}-\*e^{[N]}_{a^*}}}\\
		&=\E{\sum_{\bar k\in[m]}\sum_{j\in[n_{\bar k}]} \hat\ell^{(t)}((\bar k,j))\cdot Z^{(t)}((\bar k,j)) - \hat\ell^{(t)}(a^*)}\\
		&=\E{\sum_{\bar k\in U_2}\sum_{j\in[n_{\bar k}]} \hat\ell^{(t)}((\bar k,j))\cdot \tp{(1-\bar {\gamma}^{(t)})\cdot Y^{(t)}(\bar k)\cdot \tilde X^{(t)}_{\bar k}(j) + \gamma^{(t)}((\bar k,j))}}\\ 
		&\quad\quad+ \E{\sum_{\bar k\in U_1}\hat\ell^{(t)}((\bar k,1))\cdot \tp{(1-\bar{\gamma}^{(t)})\cdot Y^{(t)}(\bar k) + \gamma^{(t)}((\bar k,1))} - \hat\ell^{(t)}(a^*)}\\
		&\le\E{\sum_{\bar k\in U_2}Y^{(t)}(\bar k)\sum_{j\in[n_{\bar k}]}\tilde X^{(t)}_{\bar k}(j)\cdot \hat\ell^{(t)}((\bar k,j)) + \sum_{\bar k\in U_1}Y^{(t)}(\bar k) \cdot \hat\ell^{(t)}((\bar k,1))} \\ 
		&\quad\quad +\sum_{\bar k\in [m]}\sum_{j\in[n_{\bar k}]}\gamma^{(t)}((\bar k,j))-\E{\hat\ell^{(t)}(a^*)}\\
		&= \E{\sum_{\bar k\in[m]}Y^{(t)}(\bar k)\cdot \wh L^{(t)}(\bar k)}+\sum_{\bar k\in [m]}\sum_{j\in[n_{\bar k}]}\gamma^{(t)}((\bar k,j))-\E{\hat\ell^{(t)}_{\bar k^*}(j^*)}\\
		&= \E{\inner{\wh L^{(t)}}{Y^{(t)}-\*e^{[m]}_{\bar k^*}}+\inner{\wh L^{(t)}}{\*e^{[m]}_{\bar k^*}}} + \sum_{\bar k\in [m]}\sum_{j\in[n_{\bar k}]}\gamma^{(t)}((\bar k,j))-\E{\inner{\hat\ell^{(t)}_{\bar k^*}}{\*e^{[n_{\bar k^*}]}_{j^*}}}.\\
	\end{align*}
	The lemma follows by observing that
	\begin{itemize}
	\item If $\bar k\in U_2$, then
	\begin{align*}
		\E{\inner{\wh L^{(t)}}{\*e^{[m]}_{\bar k^*}}} &= \E{\wh L^{(t)}(\bar k^*)} = \E{\sum_{j\in [n_{\bar k^*}]} \tilde X^{(t)}_{\bar k^*}(j)\cdot \hat\ell^{(t)}_{\bar k}(j)} = \E{\inner{\hat\ell^{(t)}_{\bar k^*}}{\tilde X^{(t)}_{\bar k^*}}}
		 \le \E{\inner{\hat\ell^{(t)}_{\bar k^*}}{X^{(t)}_{\bar k^*}}} + \sum_{j\in [n_{\bar k^*}]}\gamma^{(t)}_{\bar k^*}(j).
	\end{align*}
	\item If $\bar k\in U_1$, then
	\[
	\E{\inner{\wh L^{(t)}}{\*e^{[m]}_{\bar k^*}}} = \E{\wh L^{(t)}(\bar k^*)} = \E{\hat\ell^{(t)}_{\bar k^*}(1)}=\E{\inner{\hat\ell^{(t)}_{\bar k^*}}{\*e^{[n_{\bar k^*}]}_{\bar j^*}}}.
	\]
	\end{itemize}
\end{proof}
\subsection{Proof of Lemma~\ref{lem:regretY}} \label{proof:lem regretY}
\begin{proof}
	First note that $\sum_{t\in [T]} \E{\inner{\wh L^{(t)}}{Y^{(t)}-\*e^{[m]}_{\bar k^*}}} 
	 =\sum_{t\in {T}} \E{{\inner{\wh L^{(t)}-c^{(t)}\cdot \*1^{[m]}}{Y^{(t)}-\*e^{[m]}_{\bar k^*}}}}$ since $c^{(t)}\cdot \*1^{[m]}$ is constant vector. Therefore, our updates on $Y^{(t)}$ in \Cref{alg:osmd-decomp} are equivalent to applying OSMD with loss vector $(\wh L^{(t)})' = \wh L^{(t)}-c^{(t)}\cdot \*1^{[m]}$ and potential function $\Psi$. Therefore, it follows from Proposition~\ref{prop:osmd} (by taking $\eta=1$) that
	 \[
	 \sum_{t\in [T]} \E{\inner{\wh L^{(t)}}{Y^{(t)}-\*e^{[m]}_{\bar k^*}}}\le D_{\Psi}(\Delta_{m-1})+ \frac{1}{2}\sum_{t=1}^T\E{\sup_{\*y\in [W^{(t)}, Y^{(t)}]}\|(\wh L^{(t)})'\|_{\grad^{-2}\Psi(\*y)}}.
	 \]
\end{proof}
We remark that in the proof above the choice $\eta=1$ is without loss of optimality since we essentially hide the choice of ``learning rate'' in the potential function $\Psi$.
\subsection{Proof of Lemma~\ref{lem:regretX}} \label{proof:lem regretX}
\begin{proof}
	Similarly our updating of $X^{(t)}_{\bar k^*}$  in \Cref{alg:osmd-decomp} is equivalent to applying OSMD on the restricted instance $G[V_{\bar k^*}]$ with loss vectors $\ell^{(1)}_{\bar k^*}, \ell^{(2)}_{\bar k^*},\dots,\ell^{(T)}_{\bar k^*}$. For every $t\in [T]$, the vector $\hat\ell^{(t)}_{\bar k^*}$ is an unbiased estimator of $\ell^{(t)}_{\bar k^*}$. With this observation, the lemma directly follows from Proposition~\ref{prop:osmd}.
\end{proof}

\section{Proof of Lemma~\ref{lem:realized-regretY} and Lemma~\ref{lem:realized-regretX}}
\begin{lemma}\label{lem: monotone}
	Let $\Psi(\*y) = \sum_{\bar k\in U_2} \frac{-\sqrt{\*y(\bar k)}}{\eta} + \sum_{\bar k\in U_1^S} \frac{-\sqrt{\*y(\bar k)}}{\eta_S} + \sum_{\bar k\in U_1^{\ol S}} \frac{\*y(\bar k)\log(\*y(\bar k))}{\eta_{\ol S}}$ and $W=\arg\min_{\*a\in \^R^m} \inner{\*a}{L'}+B_{\Psi}(\*a,Y)$. If $L'(i)\cdot\max\set{\eta,\eta_S,\eta_{\bar S}}\geq -\frac{1}{4}$, then $W(i)\leq 4Y(i)$ for each $i\in[m]$. 
\end{lemma}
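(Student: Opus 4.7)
The plan is to exploit the separability of $\Psi$. Since $\Psi(\*y) = \sum_{i\in[m]} \Psi_i(\*y(i))$, the Bregman divergence $B_\Psi$ decouples coordinatewise and so does the unconstrained minimum defining $W$. The first-order optimality condition therefore reduces to
\[
\Psi_i'(W(i)) = \Psi_i'(Y(i)) - L'(i), \qquad i\in[m],
\]
and it suffices to verify $W(i)\le 4Y(i)$ one coordinate at a time, splitting on whether $i$ lies in the square-root components ($i\in U_2 \cup U_1^S$) or the entropy components ($i\in U_1^{\ol S}$).

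For the square-root case, take $i\in U_2$; the argument for $i\in U_1^S$ is identical after replacing $\eta$ by $\eta_S$. With $\Psi_i(y)=-\sqrt{y}/\eta$, the optimality condition rearranges to
\[
\frac{1}{\sqrt{W(i)}} = \frac{1}{\sqrt{Y(i)}} + 2\eta L'(i),
\]
so that the desired inequality $W(i)\le 4Y(i)$, equivalent to $\tfrac{1}{\sqrt{W(i)}}\ge \tfrac{1}{2\sqrt{Y(i)}}$, becomes $\eta L'(i)\ge -\tfrac{1}{4\sqrt{Y(i)}}$. Since $Y(i)\le 1$, the right-hand side is at most $-\tfrac{1}{4}$, and the hypothesis provides $\eta L'(i)\ge -\tfrac{1}{4}$ (when $L'(i)<0$, multiplying it by the larger positive constant $\max\set{\eta,\eta_S,\eta_{\ol S}}$ only makes the product more negative, so the global bound forces the coordinatewise one). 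For the entropy case $\Psi_i(y)=y\log y/\eta_{\ol S}$, the same optimality condition gives the explicit formula $W(i)=Y(i)\cdot\exp\bigl(-\eta_{\ol S}L'(i)\bigr)$, and $W(i)\le 4Y(i)$ is equivalent to $\eta_{\ol S}L'(i)\ge -\log 4$, which is trivially implied by $\eta_{\ol S}L'(i)\ge -\tfrac{1}{4}$.

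The main thing to be careful about is that the optimality condition is only meaningful if $W(i)$ actually lies in the interior of $\-{dom}(\Psi)$, i.e., if the right-hand side of the rearranged equation is positive. For the square-root components this follows from $\tfrac{1}{\sqrt{Y(i)}}\ge 1$ together with $2\eta L'(i)\ge -\tfrac{1}{2}$, yielding a right-hand side at least $\tfrac{1}{2}$; for the entropy components the explicit formula is automatically positive. Beyond this routine sanity check, no substantial obstacle is anticipated: the hypothesis is engineered precisely so that the blow-up factor from $Y(i)$ to $W(i)$ is at most $4$, and the square-root analysis combined with $Y(i)\le 1$ matches this constant exactly.
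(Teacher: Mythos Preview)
Your proof is correct and takes essentially the same approach as the paper: separability reduces the problem to a coordinate-wise first-order condition, after which the Tsallis and entropy coordinates are handled by the same explicit formulas you derive. The only minor difference is that the paper dispatches the boundary case $Y(i)=0$ separately before invoking those formulas, whereas you work implicitly in the interior (and, like the paper, rely on $Y(i)\le 1$ from $Y\in\Delta_{m-1}$ without stating it as a hypothesis).
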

\begin{proof}
	Since $\Psi(\*y)$ is coordinate-wise separable, we can consider each coordinate independently, that is, 
	\begin{equation}
		W(i)=\arg\min_{x\in \^R} L'(i)\cdot x+B_{\Psi_i}(x,Y(i)).\label{eq:monotone}
	\end{equation} Here $\Psi_i$ can be negative entropy or Tsallis depending on the type of vertex $i$. Compute the derivation of the RHS of \Cref{eq:monotone}, we have 
	\begin{equation}
		L'(i) + \grad \Psi_i(W(i)) -\grad \Psi_i(Y(i))=0. \label{eq:monotone2}
	\end{equation}
	Let $\eta_0=\eta$ if $i\in U_2$, $\eta_0=\eta_S$ if $i\in U_1^S$ and $\eta_0=\eta_{\bar S}$ if $i\in U_1^{\bar S}$. When $Y(i)=0$, we can verify that $W(i)=0$. Since $Y(i)=0$, $\grad \Psi_i(Y(i))=-\infty$. If $W(i)\neq 0$, then the left hand side of \Cref{eq:monotone2} is $-\infty$. This is in contradiction with the fact that the left hand side of \Cref{eq:monotone2} equals to $0$. In this case, it is trivial to have $W(i)\leq 4Y(i)$. Then we consider the situation that $Y(i)\neq 0$.
	\begin{itemize}
		\item If $\Psi_i(x)=\frac{-\sqrt{x}}{\eta_0}$, \Cref{eq:monotone2} is equivalent to $2\eta_0L'(i) -\frac{1}{\sqrt{W_i}}+ \frac{1}{\sqrt{Y_i}}=0$. That is 
		$$W(i)=\frac{Y(i)}{\tp{2\eta_0L'(i)Y(i) +1 }^2}. $$
		\item If $\Psi_i(x)=\frac{x\log(x)}{\eta_0}$, \Cref{eq:monotone2} is equivalent to $2\eta_0L'(i)+\log\frac{W(i)}{Y(i)}=0$. That is, $$W(i)=Y(i)\exp{-2\eta_0L'(i)}.$$
	\end{itemize}
	Since $\eta_0L'(i)\geq -\frac{1}{4}$, we have $W(i)\leq 4Y(i)$ for $\Psi_i(x)=\frac{-\sqrt{x}}{\eta_0}$ and $W(i)\leq \sqrt{e}Y(i)\leq 4Y(i)$ for $\Psi_i(x)=\frac{x\log(x))}{\eta_0}$. 
\end{proof}
\subsection{Proof of Lemma~\ref{lem:realized-regretY}} \label{proof:lem realized-regretY}
\begin{proof}
	First we prove the lemma when $U_1^{\ol S}\neq \emptyset$. 
	Note that $Z^{(t)}$ and $Y^{(t)}$ are $\+F_{t-1}$-measurable. Lemma~\ref{lem: monotone} shows $W^{(t)}(i)\le 4Y^{(t)}(i)$ for every $i\in [m]$. Therefore, we have for every $t\in [T]$, 
	\begin{align}
		&\phantom{{}={}}\E{\sup_{\*y\in [W^{(t)},Y^{(t)}]} \| (\wh L^{(t)})' \|^2_{\grad^{-2}\Psi(\*y)}}  \notag\\
		&=\E{\E[t-1]{\sup_{\*y\in[W^{(t)}, Y^{(t)}]}\tp{\sum_{\bar k\in U_2}(\wh L^{(t)})'(\bar k)^2\cdot 4 \eta \*y(\bar k)^{\frac{3}{2}} + \sum_{\bar k\in U_1^S}(\wh L^{(t)})'(\bar k)^2\cdot 4\eta_{S} \*y(\bar k)^{\frac{3}{2}} 		
		+ \sum_{\bar k\in U_1^{\ol S}}(\wh L^{(t)})'(\bar k)^2\cdot \eta_{\ol S}\*y(\bar k) }}}\notag \\
		& \leq 4\E{\sum_{\bar k\in U_2}\E[t-1]{(\wh L^{(t)})'(\bar k)^2}\cdot 4 \eta Y^{(t)}(\bar k)^{\frac{3}{2}} + \sum_{\bar k\in U_1^S}\E[t-1]{(\wh L^{(t)})'(\bar k)^2}\cdot 4  \eta_S Y^{(t)}(\bar k)^{\frac{3}{2}} + \sum_{\bar k\in U_1^{\ol S}}\E[t-1]{(\wh L^{(t)})'(\bar k)^2}\cdot \eta_{\ol S}Y^{(t)}(\bar k) }. \label{eqn:regretY1}
	\end{align}
	By direct calculation we have
	\begin{align}
		\sum_{\bar k\in U_2}\E[t-1]{(\wh L^{(t)})'(\bar k)^2}\cdot Y^{(t)}(\bar k)^{\frac{3}{2}} &=\sum_{\bar k\in U_2}\E[t-1]{\tp{\wh L^{(t)}(\bar k)-c^{(t)}}^2}\cdot  Y^{(t)}(\bar k)^{\frac{3}{2}}\leq \sum_{\bar k\in U_2 }\E[t-1]{\wh L^{(t)}(\bar k)^2+\tp{c^{(t)}}^2}\cdot Y^{(t)}(\bar k)^{\frac{3}{2}}. \label{eqn:regretY2}
	\end{align}
	By the definition of $\wh L^{(t)}$ and $c^{(t)}$, we have
	\begin{align}
		&\phantom{{}={}}\sum_{\bar k\in U_2} \E[t-1]{\tp{c^{(t)}}^2}\cdot Y^{(t)}(\bar k)^{\frac{3}{2}}\notag\\
		& =  \sum_{\bar k\in U_2 } \E[t-1]{\tp{\sum_{\bar i\in U_1^{\ol S}} \wh L^{(t)}(\bar i) \cdot Y^{(t)}(\bar i)}^2}\cdot Y^{(t)}(\bar k)^{\frac{3}{2}} \notag\\
		&\leq  \sum_{\bar k\in U_2} \E[t-1]{\sum_{\bar i\in U_1^{\ol S}} (\wh L^{(t)}(\bar i))^2 \cdot Y^{(t)}(\bar i)}\cdot Y^{(t)}(\bar k)^{\frac{3}{2}}\notag\\
		&= \sum_{\bar k\in U_2} \E[t-1]{\sum_{\bar i\in U_1^{\ol S}} \frac{\*1[(\bar i,1)\in \Nout(A_t)]}{\tp{\sum_{a\in \Nin((\bar i,1))}Z^{(t)}(a)}^2} \cdot Y^{(t)}(\bar i)}\cdot Y^{(t)}(\bar k)^{\frac{3}{2}} \notag\\
		&=  \sum_{\bar k\in U_2} \sum_{\bar i\in U_1^{\ol S}} \frac{1}{1-(1-\bar{\gamma})Y^{(t)}(\bar i)} \cdot Y^{(t)}(\bar i) \cdot Y^{(t)}(\bar k)^{\frac{3}{2}} \notag \\
		&\leq  \sum_{\bar k\in U_2}Y^{(t)}(\bar k)^{\frac{1}{2}} \sum_{\bar i\in U_1^{\ol S}} \frac{1}{1-Y^{(t)}(\bar i)} \cdot Y^{(t)}(\bar i) \cdot Y^{(t)}(\bar k) \notag \\
		&\leq  \sum_{\bar k\in U_2}Y^{(t)}(\bar k)^{\frac{1}{2}} \sum_{\bar i\in U_1^{\ol S}}   Y^{(t)}(\bar i) 	\leq  \sqrt{|U_2 |}. \label{eqn:regretY3}
	\end{align}
	Similarly we have
	\begin{equation}
		\sum_{\bar k\in U_1^S}\E[t-1]{(\wh L^{(t)})'(\bar k)^2}\cdot Y^{(t)}(\bar k)^{\frac{3}{2}}\leq \sqrt{|U_1^S |} + \sum_{\bar k\in U_1^S }\E[t-1]{\wh L^{(t)}(\bar k)^2}\cdot Y^{(t)}(\bar k)^{\frac{3}{2}}. \label{eqn:regretY4}
	\end{equation}
	Note that for every $\bar k\in U_2$, $X^{(t)}_{\bar k}$ is $\+F_{t-1}$-measurable, we have
	\begin{align}
	&\phantom{{}={}}\sum_{\bar k\in U_2}\E[t-1]{\wh L^{(t)}(\bar k)^2}\cdot Y^{(t)}(\bar k)^{\frac{3}{2}} \notag\\
	&=\E[t-1]{\sum_{\bar k\in U_2}\tp{\sum_{j\in[n_{\bar k}]}\tilde{X}^{(t)}_{\bar k}(j)\cdot\hat\ell^{(t)}_{\bar k}(j)}^2\cdot Y^{(t)}(\bar k)^{\frac{3}{2}}}\notag \\
	&= \sum_{\bar k\in U_2} Y^{(t)}(\bar k)^{\frac{3}{2}}\cdot \E[t-1]{\tp{\sum_{j\in[n_{\bar k}]}\tilde{X}^{(t)}_{\bar k}(j)\cdot\hat\ell^{(t)}_{\bar k}(j)}^2}\notag \\
	&\le \sum_{\bar k\in U_2} Y^{(t)}(\bar k)^{\frac{3}{2}}\cdot\E[t-1]{\sum_{j\in[n_{\bar k}]}\tilde{X}^{(t)}_{\bar k}(j)\cdot \hat\ell^{(t)}_{\bar k}(j)^2}\notag\\
	&= \sum_{\bar k\in U_2}Y^{(t)}(\bar k)^{\frac{3}{2}}\sum_{j\in [n_{\bar k}]}\tilde{X}^{(t)}_{\bar k}(j)\cdot \E[t-1]{\frac{\*1[(\bar k,j)\in \Nout(A_t)]}{\tp{\sum_{a\in \Nin((\bar k,j))}Z^{(t)}(a)}^2}}\notag \\
	&= \sum_{\bar k\in U_2}Y^{(t)}(\bar k)^{\frac{3}{2}}\sum_{j\in [n_{\bar k}]}\frac{\tilde{X}^{(t)}_{\bar k}(j)}{\sum_{a\in \Nin((\bar k,j))}Z^{(t)}(a)}\notag \\
	&\le \sum_{\bar k\in U_2}Y^{(t)}(\bar k)^{\frac{3}{2}}\sum_{j\in [n_{\bar k}]}\frac{2\tilde{X}^{(t)}_{\bar k}(j)}{\sum_{(\bar k,j')\in \Nin((\bar k,j))}Y^{(t)}(\bar k)\cdot \gamma_{\bar k}(j')}\notag \\	
	&\le 2\sum_{\bar k\in U_2}Y^{(t)}(\bar k)^{\frac{1}{2}}\frac{\delta^*_{\bar k}}{\alpha}\notag \\	
	&\le \frac{2\sqrt{\sum_{\bar k\in U_2}(\delta^*_{\bar k})^2}}{\alpha}. \label{eqn:regretY5}
	\end{align}
	For vertices in $U_1^S $, we have
	\begin{align}
		&\phantom{{}={}}\sum_{\bar k\in U_1^S}\E[t-1]{\wh L^{(t)}(\bar k)^2}\cdot Y^{(t)}(\bar k)^{\frac{3}{2}}
		=\sum_{\bar k\in U_1^S}\E[t-1]{\hat\ell^{(t)}_{\bar k}(1)^2}\cdot Y^{(t)}(\bar k)^{\frac{3}{2}}\notag \\
		&= \sum_{\bar k\in U_1^S}\E[t-1]{\frac{\*1[(\bar k,1)\in \Nout(A_t)]}{\tp{\sum_{a\in \Nin((\bar k,1))}Z^{(t)}(a)}^2}}\cdot Y^{(t)}(\bar k)^{\frac{3}{2}} 
		= \sum_{\bar k\in U_1^S} Y^{(t)}(\bar k)^{\frac{3}{2}} \cdot \frac{1}{\sum_{a\in \Nin((\bar k,1))}Z^{(t)}(a)} \notag\\
		&\leq \sum_{\bar k\in U_1^S} Y^{(t)}(\bar k)^{\frac{3}{2}} \cdot \frac{1}{(1-\bar{\gamma})Y^{(t)}(\bar k)}
		=  \frac{1}{1-\bar {\gamma}}\sum_{\bar k\in U_1^S} Y^{(t)}(\bar k)^{\frac{1}{2}} 
		\leq  \frac{1}{1-\bar {\gamma}}\sqrt{|U_1^S|}.\label{eqn:regretY6}
	\end{align}
	For vertices in $U_1^{\ol S}$, we have
	\begin{align}
	    &\phantom{{}={}}\sum_{\bar k\in U_1^{\ol S}}\E[t-1]{(\wh L^{(t)})'(\bar k)^2}\cdot Y^{(t)}(\bar k) \notag\\
	    &=  \sum_{\bar k\in U_1^{\ol S}}\E[t-1]{\tp{\wh L^{(t)}(\bar k) - c^{(t)}}^2}\cdot Y^{(t)}(\bar k)\notag \\
		&=\E[t-1]{\sum_{\bar k\in U_1^{\ol S}} \tp{Y^{(t)}(\bar k)\cdot \wh L^{(t)}(\bar k)^2 +  Y^{(t)}(\bar k)\cdot \tp{c^{(t)}}^2 -2Y^{(t)}(\bar k)\cdot \wh L^{(t)}(\bar k)\cdot c^{(t)}}} \notag \\
		&\leq \E[t-1]{\sum_{\bar k\in U_1^{\ol S}} Y^{(t)}(\bar k)\cdot \wh L^{(t)}(\bar k)^2} + \E[t-1]{\tp{c^{(t)}}^2} -2\E[t-1]{\tp{c^{(t)}}^2} \notag \\
		&=  \E[t-1]{\sum_{\bar k\in U_1^{\ol S}}Y^{(t)}(\bar k)\wh L^{(t)}(\bar k)^2 - \tp{c^{(t)}}^2} \notag\\
		&\leq  \E[t-1]{\sum_{\bar k\in U_1^{\ol S}}Y^{(t)}(\bar k)\wh L^{(t)}(\bar k)^2 - \sum_{\bar k\in U_1^{\ol S}}Y^{(t)}(\bar k)^2\wh L^{(t)}(\bar k)^2  } \notag \\
		&=  \E[t-1]{\sum_{\bar k\in U_1^{\ol S}}Y^{(t)}(\bar k)\tp{1-Y^{(t)}(\bar k)}\wh L^{(t)}(\bar k)^2 }\notag \\
		&= \sum_{\bar k\in U_1^{\ol S}}\E[t-1]{\hat\ell^{(t)}_{\bar k}(1)^2}\cdot Y^{(t)}(\bar k)\tp{1-Y^{(t)}(\bar k)} \notag \\
		&= \sum_{\bar k\in U_1^{\ol S}}  Y^{(t)}(\bar k)\tp{1-Y^{(t)}(\bar k)} \cdot \E[t-1]{\frac{\*1[(\bar k,1)\in \Nout(A_t)]}{\tp{\sum_{a\in \Nin((\bar k,1))}Z^{(t)}(a)}^2}} \notag \\
		&= \sum_{\bar k\in U_1^{\ol S}}  Y^{(t)}(\bar k)\tp{1-Y^{(t)}(\bar k)} \cdot \frac{1}{\sum_{a\in \Nin((\bar k,1))}Z^{(t)}(a)} \notag\\
		&\leq  \sum_{\bar k\in U_1^{\ol S}}  Y^{(t)}(\bar k)\tp{1-Y^{(t)}(\bar k)} \cdot \frac{1}{1-Y^{(t)}(\bar k)}
		\leq  1.\label{eqn:regretY7}
	\end{align}
	Combining \Cref{eqn:regretY1}, \Cref{eqn:regretY2}, \Cref{eqn:regretY3}, \Cref{eqn:regretY4},\Cref{eqn:regretY5}, \Cref{eqn:regretY6}, \Cref{eqn:regretY7}, we have
	\begin{align*}
		 &\phantom{{}={}}\frac{1}{2}\E{ \sup_{\*y\in [W^{(t)},Y^{(t)}]} \|(\wh L^{(t)})' \|^2_{\grad^{-2}\Psi(\*y)}} \\
		 &\leq {2}\E{\sum_{\bar k\in U_2}\E[t-1]{\wh L^{(t)}(\bar k)^2+\tp{c^{(t)}}^2}\cdot 4\eta Y^{(t)}(\bar k)^{\frac{3}{2}}} \\
		 &\quad\quad  + {2}\E{\sum_{\bar k\in U_1^S}\E[t-1]{\wh L^{(t)}(\bar k)^2+\tp{c^{(t)}}^2}\cdot 4\eta_S Y^{(t)}(\bar k)^{\frac{3}{2}}} \\
		 &\quad\quad +{2}\E{\sum_{\bar k\in U_1^{\ol S}}\E[t-1]{(\wh L^{(t)})'(\bar k)^2}\cdot \eta_{\ol S} Y^{(t)}(\bar k)} \\
		 &\leq {2} \E{\sum_{\bar k\in U_2}\E[t-1]{(\wh L^{(t)}(\bar k))^2}\cdot 4\eta  Y^{(t)}(\bar k)^{\frac{3}{2}}}
		  + {2}\E{\sum_{\bar k\in U_1^S}\E[t-1]{(\wh L^{(t)}(\bar k))^2}\cdot 4\eta_S Y^{(t)}(\bar k)^{\frac{3}{2}}} \\
		 &\quad\quad + {2}\E{\sum_{\bar k\in U_1^{\ol S}}\E[t-1]{(\wh L^{(t)})'(\bar k)^2}\cdot \eta_{\ol S}Y^{(t)}(\bar k)} + 8\eta\sqrt{|U_2 |} + 8\eta_S \sqrt{|U_1^S |}\\
		 &\leq \eta \frac{16\sqrt{\sum_{\bar k\in U_2}(\delta^*_{\bar k})^2}}{\alpha} + 8\tp{1+ \frac{1}{1-\bar {\gamma}}}\eta_S \sqrt{|U_1^S|}+ {2}\eta_{\ol S} + 8\eta\sqrt{|U_2 |}.
	\end{align*}

	On the other hand, we have that for any $\*y\in \Delta_{m-1}$, $\Psi(\*y)\leq 0$. Thus, $D_\Psi(\Delta_{m-1}) \le \max_{\*y\in \Delta_{m-1}} \abs{\Psi(\*y)} \leq \frac{\sqrt{|U_2|}}{\eta} + \frac{\sqrt{|U_1^S|}}{\eta_S} + \frac{\log(|U_1^{\ol S}|+1)}{\eta_{\ol S}}$. Then we obtain
	\begin{align*}
		&\phantom{{}={}} D_{\Psi}(\Delta_{m-1}) + \frac{1}{2}\sum_{t=1}^T\E{\sup_{\*y\in [W^{(t)}, Y^{(t)}]}\tp{\|(\wh L^{(t)})'\|_{\grad^{-2}\Psi(\*y)} }} \\ 
		& \le \frac{\sqrt{|U_2|}}{\eta} + \frac{\sqrt{|U_1^S|}}{\eta_S} + \frac{\log(|U_1^{\ol S}|+1)}{\eta_{\ol S}}+  16\eta T \frac{\sqrt{\sum_{\bar k\in U_2}(\delta^*_{\bar k})^2}}{\alpha} \\
		&\quad\quad + 8\tp{1+\frac{1}{1-\bar {\gamma}}}\eta_S T \sqrt{|U_1^S|}+ {2}\eta_{\ol S}T  + 8\eta T \sqrt{|U_2 |}.
	\end{align*}
	The lemma for $U_1^{\ol S}= \emptyset$ is proved by similar analysis except that $c^{(t)}=0$ which yields $\sum_{\bar k\in U_1^S }$ $\E[t-1]{(\wh L^{(t)})'(\bar k)^2}$ $\cdot Y^{(t)}(\bar k)^{\frac{3}{2}} = \sum_{\bar k\in U_1^S}\E[t-1]{\wh L^{(t)}(\bar k)^2} \cdot Y^{(t)}(\bar k)^{\frac{3}{2}}$ and $\sum_{\bar k\in U_2}$ $\E[t-1]{(\wh L^{(t)})'(\bar k)^2}$ $\cdot Y^{(t)}(\bar k)^{\frac{3}{2}} = \sum_{\bar k\in U_2}\E[t-1]{\wh L^{(t)}(\bar k)^2}\cdot Y^{(t)}(\bar k)^{\frac{3}{2}}$ in this situation.
\end{proof}

\subsection{Proof of Lemma~\ref{lem:realized-regretX}} \label{proof:lem realized-regretX}
\begin{proof}
We write $\gamma^{(t)}(({\bar{k},j}))$ as $\gamma(({\bar{k},j}))$ and write $\ol \gamma^{(t)}$ as $\ol \gamma$ in the proof as they are invariant over time. With similar analysis in Lemma~\ref{lem:realized-regretY}, we have
	\begin{align}
		\E{\sup_{\*z\in [Q^{(t)}_{\bar{k}^*}, X^{(t)}_{\bar{k}^*}]} \|\hat{\ell}^{(t)}_{\bar{k}^*} \|_{\nabla^{-2}\Phi_{\bar{k}^*}(\*z)}} &\leq  \E{\sum_{j=1}^{n_{\bar{k}^*}} \frac{X^{(t)}_{\bar{k}^*}(j) \*1[(\bar{k}^*,j)\in N_{out}(A_t)]}{\tp{\sum_{a\in \Nin((\bar{k}^*,j))}Z^{(t)}(a)}^2}} \notag \\
        &= \E{\sum_{j=1}^{n_{\bar{k}^*}} \frac{X^{(t)}_{\bar{k}^*}(j) \E[t-1]{1[(\bar{k}^*,j)\in N_{out}(A_t)]}}{\tp{\sum_{a\in \Nin((\bar{k}^*,j))}Z^{(t)}(a)}^2}} \notag \\
        &= \E{\sum_{j=1}^{n_{\bar{k}^*}} \frac{X^{(t)}_{\bar{k}^*}(j)}{\sum_{a\in \Nin((\bar{k}^*,j))}Z^{(t)}(a)}}. \label{equation: lemma inside block 2}
	\end{align}

	It remains to give a lower bound to the denominator $\sum_{a\in \Nin((\bar{k}^*,j))}Z^{(t)}(a)$ which is the probability that $(\bar{k}^*,j)$ is observed in round $t$:
    \begin{align}
        \sum_{j=1}^{n_{\bar{k}^*}} \frac{X^{(t)}_{\bar{k}^*}(j)}{\sum_{a\in \Nin((\bar{k}^*,j))}Z^{(t)}(a)} &\leq
        \sum_{j=1}^{n_{\bar{k}^*}} \frac{X^{(t)}_{\bar{k}^*}(j)}{\sum_{a\in \Nin((\bar{k}^*,j))\cap V_{\bar{k}^*}}Z^{(t)}(a)} \notag \\
         & = \sum_{j=1}^{n_{\bar{k}^*}} \frac{X^{(t)}_{\bar{k}^*}(j)}{\sum_{(\bar{k}^*, s)\in \Nin((\bar{k}^*,j))} (1-\bar{\gamma})Y^{(t)}_{\bar{k}^*}\tilde{X}^{(t)}_{\bar{k}^*}(s) + \gamma(({\bar{k}^*,s}))}  \notag \\
        &\leq \sum_{j=1}^{n_{\bar{k}^*}}\frac{X^{(t)}_{\bar{k}^*}(j)}{\frac{\log{n_{\bar{k}^*}}}{\ol\delta^*}\beta} 
        = \frac{\ol\delta^*}{\beta\log{n_{\bar{k}^*}}}. \label{equation: lemma inside block 3}
    \end{align}
	
	Plugging \Cref{equation: lemma inside block 2}, \Cref{equation: lemma inside block 3} into Lemma~\ref{lem:regretX}. Note that for any $\*x\in \Delta_{n_{\bar k}-1}$, $\Phi_{\bar k}(\*x)\leq 0$. Thus, $D_{\Phi_{\bar k^*}}(\Delta_{n_{\bar k^*-1}}) \leq \max_{\*x\in \Delta_{n_{\bar k}-1}} \abs{\Phi_{\bar k}(\*x)} \leq \log n_{\bar k^*}$. Then we obtain
	\begin{equation*}
		\frac{D_{\Phi_{\bar k^*}(\Delta_{n_{\bar k^*-1}})}}{\eta_{\bar k^*}}+\frac{\eta_{\bar k^*}}{2}\cdot\sum_{t=1}^T \E{\sup_{\*x\in [Q^{(t)}_{\bar{k}^*}, [X^{(t)}_{\bar k^*}]} \|\hat\ell^{(t)}_{\bar k^*}\|_{\grad^{-2}\Phi_{\bar k^*}(\*x)}}\le \frac{\log{n_{\bar{k}^*}}}{\eta_{\bar{k}^*}} + \frac{\eta_{\bar{k}^*}\ol\delta^*}{2\beta \log{n_{\bar{k}^*}}} T.
	\end{equation*}
\end{proof}

\section{Proof of \Cref{thm:realized-upper-bound}}\label{sec:proof-main}
\begin{proof}[Proof of \Cref{thm:realized-upper-bound}]
	Assume that the values of $\eta, \eta_S$ and $\eta_{\bar S}$ satisfy $\min_{i\in[m]}(\hat L^{(t)})'(i)\cdot \max\set{\eta, \eta_S, \eta_{\bar S}}\geq -\frac{1}{4}$ for all $t\in[T]$ (it will be verified later that the values we take indeed satisfy this condition for sufficiently large $T$). If $U_1^{\bar S}\neq \emptyset$: choose $\gamma((\bar k,j))=\frac{4\eta_S}{|U_1^S|}$ for $\bar k\in U_1^S$ and $j=1$; choose $\gamma((\bar k,j))=\frac{4\eta_{\bar S}}{|U_1^{\bar S}|-1}$if $\abs{U_1^{\bar S}}>1$ and if $U_1^{\bar S}=1$, let $\gamma((\bar k,j))=0$ for $\bar k\in U_1^{\bar S}$ and $j=1$. If $U_1^{\bar S}= \emptyset$, let $\gamma((\bar k,j))=0$ for $\bar k\in U_1^{S}$ and $j=1$. Here we omit the superscript $(t)$ since these parameters are time-invariant.

	Plugging Lemma~\ref{lem:realized-regretY} and Lemma~\ref{lem:realized-regretX} into \Cref{thm:general regret}, we obtain
	\begin{align*}
		R_{(\bar k^*,j^*)}(T) &\leq \sum_{t=1}^T\E{\inner{\wh L^{(t)}}{Y^{(t)}-\*e^{[m]}_{\bar k^*}}+\sum_{\bar k\in [m]}\sum_{j\in [n_{\bar k}]}\gamma((\bar k,j)) \right. \\ 
		&\left. \quad\quad+\tp{\inner{\hat\ell^{(t)}_{\bar k^*}}{X^{(t)}_{\bar k^*}-\*e^{[n_{\bar k^*}]}_{j^*}}+\sum_{j\in [n_{\bar k^*}]}\gamma_{\bar k^*}(j)}\*1[\bar k^*\in U_2]} \\
		&\leq \tp{\frac{\log{n_{\bar{k}^*}}}{\eta_{\bar{k}^*}} + \frac{\eta_{\bar{k}^*}\ol\delta^*}{2\beta \log{n_{\bar{k}^*}}} T + \alpha T}\*1[\bar k^*\in U_2]+  \frac{\sqrt{|U_2|}}{\eta}\\
		&\quad\quad  + \frac{\sqrt{|U_1^S|}}{\eta_S}+  \eta T \frac{4\sqrt{\sum_{\bar k\in U_2}(\delta^*_{\bar k})^2}}{\alpha} + \frac{2}{1-\bar {\gamma}}\eta_S T \sqrt{|U_1^S|} + \sum_{\bar k \in U_2}\frac{\delta^*_{\bar k}\beta\log n_{\bar k}}{\ol\delta^*} T\\
		&\quad\quad + \tp{{10}\eta_{\ol S}T + \frac{\log(|U_1^{\ol S}|+1)}{\eta_{\ol S}} + 8\eta T \sqrt{|U_2 |} + 8\eta_S T \sqrt{|U_1^S |} + 4\eta_ST \right. \\
		&\left. \quad\quad+ \eta T \frac{12\sqrt{\sum_{\bar k\in U_2}(\delta^*_{\bar k})^2}}{\alpha} + \frac{6}{1-\bar {\gamma}}\eta_S T \sqrt{|U_1^S|}}\*1[U_1^{\ol S}\neq \emptyset].
	\end{align*}
	Choosing $\eta_{\bar k}=\frac{\sqrt{2\beta}\log n_{\bar k}}{\sqrt{\ol\delta^* T}}$ for $\bar k\in U_2$ 
	 and $\eta_{\ol S}=\tp{\frac{\log (|U_1^{\ol S}|+1)}{10T}}^{\frac{1}{2}}$ if $U_1^{\ol S} \neq \emptyset$, we have
	\begin{align}
		R_{(\bar k^*,j^*)}(T)&\leq \tp{\sqrt{\frac{2T\ol\delta^*}{\beta}} +\alpha T}\*1[k^*\in U_2] + \frac{\sqrt{|U_2|}}{\eta} + \eta T \frac{4\sqrt{\sum_{\bar k\in U_2}(\delta^*_{\bar k})^2}}{\alpha}+ \frac{\sqrt{|U_1^S|}}{\eta_S}
		 \notag \\
		&\quad\quad + \sum_{\bar k \in U_2}\frac{\delta^*_{\bar k}\beta\log n_{\bar k}}{\ol\delta^*} T + \frac{2}{1-\bar {\gamma}}\eta_S T \sqrt{|U_1^S|} + \tp{8\eta T \sqrt{|U_2 |} + 8\eta_S T \sqrt{|U_1^S |} + 4\eta_ST \right. \notag \\
		&\quad\quad \left.+\eta T \frac{12\sqrt{\sum_{\bar k\in U_2}(\delta^*_{\bar k})^2}}{\alpha} + \frac{6}{1-\bar {\gamma}}\eta_S T \sqrt{|U_1^S|}+2\sqrt{10\log(|U_1^{\ol S}|+1) T}}\*1[U_1^{\ol S}\neq \emptyset] \notag \\
		&\leq \tp{\sqrt{\frac{2T\ol\delta^*}{\beta}} +\alpha T}\*1[U_2\neq \emptyset]+ \frac{\sqrt{|U_2|}}{\eta} + \eta T \frac{4\sqrt{\sum_{\bar k\in U_2}(\delta^*_{\bar k})^2}}{\alpha}+ \frac{\sqrt{|U_1^S|}}{\eta_S}
		 \notag \\
		&\quad\quad + \sum_{\bar k \in U_2}\frac{\delta^*_{\bar k}\beta\log n_{\bar k}}{\ol\delta^*} T + \frac{2}{1-\bar {\gamma}}\eta_S T \sqrt{|U_1^S|} + \tp{8\eta T \sqrt{|U_2 |} + 8\eta_S T \sqrt{|U_1^S |}  + 4\eta_ST \right. \notag \\
		&\quad\quad \left.+\eta T \frac{12\sqrt{\sum_{\bar k\in U_2}(\delta^*_{\bar k})^2}}{\alpha} + \frac{6}{1-\bar {\gamma}}\eta_S T \sqrt{|U_1^S|}+2\sqrt{10\log(|U_1^{\ol S}|+1) T}}\*1[U_1^{\ol S}\neq \emptyset].
	   \label{eqn: general regret 1}
	\end{align}
	Now we distinguish between the following cases:
	\begin{enumerate}
		\item $U_2=\emptyset$. In this case, the graph is strongly observable and \Cref{eqn: general regret 1} equals to \begin{align*}
			\frac{\sqrt{|U_1^S|}}{\eta_S} + \frac{2}{1-\ol \gamma}\eta_S T \sqrt{|U_1^S|}+ \tp{2\sqrt{10\log(|U_1^{\ol S}|+1) T}+ 20\eta_S T \sqrt{|U_1^S |} + 4\eta_ST}\*1[U_1^{\ol S}\neq \emptyset].
		\end{align*}
		Choosing $\eta_{S}=\sqrt{\frac{1}{\tp{\frac{2}{1-\ol \gamma}+20\cdot\*1[U_1^{\ol S}\neq \emptyset]}T}}$, we have $R_{(\bar k^*,j^*)}(T)\leq 2\sqrt{2|U_1^S|}T^{\frac{1}{2}}$ if $U_1^{\ol S}= \emptyset$ and $R_{(\bar k^*,j^*)}(T)\leq 4\sqrt{6|U_1^S|}T^{\frac{1}{2}} + 2\sqrt{10\log(|U_1^{\ol S}|+1)} T^{\frac{1}{2}} + T^{\frac{1}{2}}$ if $U_1^{\ol S}\neq \emptyset$.
		\item $U_2\neq \emptyset$ and $U_1^{\ol S}=\emptyset$. In this case the graph is weakly observable possibly with strongly observable parts and if so, all strongly observable arms have self-loops. Since $1-\bar{\gamma}\geq \frac{1}{2}$, choose $\eta=\frac{1}{2}\tp{\frac{|U_2|}{\sum_{\bar{k}\in U_2} (\delta^*_{\bar{k}})^2}}^{\frac{1}{4}}\tp{\frac{\alpha}{T}}^{\frac{1}{2}}$, \Cref{eqn: general regret 1} is at most \begin{align*} 
			&\sqrt{\frac{2T\ol\delta^*}{\beta}} +\alpha T + 4\tp{\frac{T}{\alpha}}^{\frac{1}{2}}\tp{\abs{U_2}\sum_{\bar k\in U_2}(\delta^*_{\bar k})^2}^{\frac{1}{4}} + \sum_{\bar k \in U_2}\frac{\delta^*_{\bar k}\beta\log n_{\bar k}}{\ol\delta^*} T + \frac{\sqrt{|U_1^S|}}{\eta_S} + 4\eta_S T \sqrt{|U_1^S|}.
		\end{align*}
		Choosing $\eta_S=\frac{1}{\sqrt{4T}}$, $\alpha=2^{\frac{2}{3}}\frac{\tp{|U_2|\sum_{\bar{k}\in U_2} (\delta^*_{\bar{k}})^2}^{\frac{1}{6}}}{T^{\frac{1}{3}}}$ and $\beta=\frac{\ol\delta^*}{\tp{2T}^{\frac{1}{3}} \tp{\sum_{\bar{k}\in U_2} \delta_{\bar{k}}^* \log{n_{\bar{k}}}}^{\frac{2}{3}}}$, we have \begin{equation*}
			R_{(\bar k^*,j^*)}(T)\leq 3\cdot 2^{\frac{2}{3}}\tp{|U_2|\sum_{\bar{k}\in U_2} (\delta^*_{\bar{k}})^2}^{\frac{1}{6}} T^{\frac{2}{3}} + \frac{3}{2^{\frac{1}{3}}} \cdot \tp{\sum_{\bar{k}\in U_2} \delta_{\bar{k}}^* \log{n_{\bar{k}}}}^{\frac{1}{3}} T^{\frac{2}{3}} + 4\sqrt{|U_1^S|}T^{\frac{1}{2}}.
		\end{equation*}
		\item $U_2\neq \emptyset$ and $U_1^{\ol S}\neq \emptyset$. The graph is a hybrid of weakly and strongly observable parts and some arms in the strongly observable parts have no self-loops. In this case, since $1-\bar{\gamma}\geq \frac{1}{2}$, \Cref{eqn: general regret 1} equals to \begin{align*}
			&\sqrt{\frac{2T\ol\delta^*}{\beta}} +\alpha T+ \frac{\sqrt{|U_2|}}{\eta} + \eta T \frac{16\sqrt{\sum_{\bar k\in U_2}(\delta^*_{\bar k})^2}}{\alpha}+ \frac{\sqrt{|U_1^S|}}{\eta_S}+ \sum_{\bar k \in U_2}\frac{\delta^*_{\bar k}\beta\log n_{\bar k}}{\ol\delta^*} T
		 	\notag \\
			&\quad\quad   +  8\eta T \sqrt{|U_2 |}+ 24\eta_S T \sqrt{|U_1^S |} +2\sqrt{10\log(|U_1^{\ol S}|+1) T} + 4\eta_ST.
		\end{align*}
		Choosing $\eta=\frac{1}{4}\tp{\frac{|U_2|}{\sum_{\bar{k}\in U_2} (\delta^*_{\bar{k}})^2}}^{\frac{1}{4}}\tp{\frac{\alpha}{T}}^{\frac{1}{2}}$, $\alpha=2^{\frac{4}{3}}\frac{\tp{|U_2|\sum_{\bar{k}\in U_2} (\delta^*_{\bar{k}})^2}^{\frac{1}{6}}}{T^{\frac{1}{3}}}$, $\beta=\frac{\ol\delta^*}{\tp{2T}^{\frac{1}{3}} \tp{\sum_{\bar{k}\in U_2} \delta_{\bar{k}}^* \log{n_{\bar{k}}}}^{\frac{2}{3}}}$ and $\eta_S=\frac{1}{2\sqrt{6T}}$, we have \begin{align*}
			R_{(\bar k^*,j^*)}(T)&\leq 6\cdot 2^{\frac{1}{3}}\tp{|U_2|\sum_{\bar{k}\in U_2} (\delta^*_{\bar{k}})^2}^{\frac{1}{6}} T^{\frac{2}{3}} + \frac{3}{2^{\frac{1}{3}}} \cdot \tp{\sum_{\bar{k}\in U_2} \delta_{\bar{k}}^* \log{n_{\bar{k}}}}^{\frac{1}{3}} T^{\frac{2}{3}} 
			 + 4\sqrt{6|U_1^S|}T^{\frac{1}{2}} \\ 
			 &\quad\quad+2\sqrt{10\log(|U_1^{\ol S}|+1)} T^{\frac{1}{2}} + \frac{4T^{\frac{1}{3}} |U_2|^{\frac{5}{6}}}{2^{\frac{1}{3}}\tp{\sum_{\bar{k}\in U_2} (\delta^*_{\bar{k}})^2}^{\frac{1}{6}}} + \frac{\sqrt{6}}{3}T^{\frac{1}{2}}.
		\end{align*}
	\end{enumerate}
	Then we verify $\min_{i\in[m]}(\hat L^{(t)})'(i)\cdot \max\set{\eta, \eta_S, \eta_{\bar S}}\geq -\frac{1}{4}$ for all $t\in[T]$ when $T$ is sufficiently large. Note that when $U_1^{\bar S}=\emptyset$, $(\hat L^{(t)})'(i)\geq 0$ and it is trivial to have that inequality. Then we consider the situation that $U_1^{\bar S}\neq \emptyset$. When  $U_1^{\bar S}\neq \emptyset$, for $i\in[m]$,
	\begin{align*}
		(\hat L^{(t)})'(i)\geq-c^{(t)}=-\sum_{\bar k\in U_1^{\bar S}}\hat L^{(t)}(\bar k)\cdot Y^{(t)}(\bar k)&\geq -\tp{\max_{\ol k\in U_1^{\ol S}} \hat L^{(t)}(\bar k)}\cdot \sum_{\ol k\in U_1^{\ol S}}Y^{(t)}(\bar k) \\
	 	&\geq -\max_{\ol k\in U_1^{\ol S}} \hat \ell^{(t)}_{\bar k}(1)\\
		 &\geq -\frac{1}{\min_{\bar k\in U_1^{\bar S}} \Pr{\mbox{observe }(\bar k, 1) \mbox{\ in round\ }t}}.
	\end{align*}
	Let $\ol k^{(t)}_{\ol S}\triangleq \argmin_{\bar k\in U_1^{\bar S}} \Pr{\mbox{observe }(\bar k, 1) \mbox{\ in round\ }t}$. Note that 
	\begin{align*}
		\min_{\bar k\in U_1^{\bar S}} \Pr{\mbox{observe }(\bar k, 1) \mbox{\ in round\ }t}&\geq \sum_{\ol k\in [m]}\sum_{j\in[n_{\ol k}]}\gamma^{(t)}((\ol k, j))-\gamma^{(t)}((\ol k^{(t)}_{\ol S}, 1))
		=\bar \gamma^{(t)}-\frac{4\eta_{\bar S}}{{|U_1^{\bar S}|-1}}\*1[|U_1^{\ol S}|>1].
	\end{align*}
	Then by direct calculation, when $T$ is sufficiently large, $\abs{\eta\cdot (\hat L^{(t)})'(i)}=O\tp{{\frac{1}{T^{\frac{1}{6}}}}}$, $\abs{\eta_S\cdot (\hat L^{(t)})'(i)}\leq \frac{1}{4}$ and $\abs{\eta_{\bar S}\cdot (\hat L^{(t)})'(i)}\leq \frac{1}{4}$. Thus, $\min_{i\in[m]}(\hat L^{(t)})'(i)\cdot \max\set{\eta, \eta_S, \eta_{\bar S}}\geq -\frac{1}{4}$.
\end{proof}

\section{Proof of \Cref{thm:adapt-realized-upper-bound}}\label{sec:proof-adapt}
\begin{proof}[Proof of \Cref{thm:adapt-realized-upper-bound}]
	Without loss of generality, we assume each node in the weakly observable part of $G$ has in-degree $1$. If not, for each $\ol k\in U_2$, we cut the edges in $G_{\ol k}$ until the in-degree of every node in $G_{\ol k}$ is $1$. We claim that this operation is applicable since it will only increase the mini-max regret. Thus, the upper bound of this spanning subgraph is always larger that the regret of the original graph.

	The remaining proof is similar with the proof of \Cref{thm:realized-upper-bound}. We choose the same $\eta, \eta_S$ and $\eta_{\bar S}$ as we do in \Cref{sec:proof-main}. For $\bar k\in U_1$, we choose the same global exploration factor in \Cref{thm:realized-upper-bound}.
	
	Then plugging Lemma~\ref{lem:realized-regretY} and Lemma~\ref{lem:adapt-realized-regretX} into \Cref{thm:realized-upper-bound}, we obtain
	\begin{align*}
		R_{(\bar k^*,j^*)}(T) &\leq \tp{\frac{\sqrt{n_{\bar{k}^*}}}{\eta_{\bar{k}^*}} + 2\eta_{\bar{k}^*}T\frac{\ol\delta^*}{\beta } + \alpha T}\*1[\bar k^*\in U_2]+  \frac{\sqrt{|U_2|}}{\eta}\\
		&\quad\quad  + \frac{\sqrt{|U_1^S|}}{\eta_S}+  \eta T \frac{4\sqrt{\sum_{\bar k\in U_2}(\delta^*_{\bar k})^2}}{\alpha} + \frac{2}{1-\bar {\gamma}}\eta_S T \sqrt{|U_1^S|} + \sum_{t=1}^T\sum_{\bar k \in U_2}\sum_{j\in[n_{\bar k}]}\gamma^{(t)}(({\bar{k},j}))\\
		&\quad\quad + \tp{{10}\eta_{\ol S}T + \frac{\log(|U_1^{\ol S}|+1)}{\eta_{\ol S}} + 8\eta T \sqrt{|U_2 |} + 8\eta_S T \sqrt{|U_1^S |} + 4\eta_ST \right. \\
		&\left. \quad\quad+ \eta T \frac{12\sqrt{\sum_{\bar k\in U_2}(\delta^*_{\bar k})^2}}{\alpha} + \frac{6}{1-\bar {\gamma}}\eta_S T \sqrt{|U_1^S|}}\*1[U_1^{\ol S}\neq \emptyset].
	\end{align*}
	Note that 
	\begin{align*}
		\sum_{t=1}^T\sum_{\bar k \in U_2}\sum_{j\in[n_{\bar k}]}\gamma^{(t)}(({\bar{k},j}))&=\sum_{t=1}^T\sum_{\bar k \in U_2}\sum_{j\in[n_{\bar k}]}\frac{x_{\bar{k},j}^*}{\ol \delta^*}\cdot \beta\sum_{(\bar k,i)\in N_{\-{out}}((\bar k,j))} \sqrt{X^{(t)}_{\ol k}(i)}\\
		&=\frac{\beta}{\ol\delta^*} \sum_{t=1}^T\sum_{\bar k \in U_2}\sum_{i\in[n_{\bar k}]}\sqrt{X^{(t)}_{\ol k}(i)}\sum_{(\bar k,j)\in N_{\-{in}}((\bar k,i))}x_{\bar{k},j}^*
		\leq \frac{\beta T\tp{\sum_{\ol k\in U_2}\sqrt{n_{\ol k}}}}{\ol \delta ^*}.
	\end{align*}
	Choose $\eta_{\ol k}=\tp{\frac{\beta\sqrt{n_{\ol k}}}{2T\ol \delta^*}}^{\frac{1}{2}}$ for each $\ol k\in U_2$ and $\eta_{\ol S}=\tp{\frac{\log (|U_1^{\ol S}|+1)}{10T}}^{\frac{1}{2}}$ if $U_1^{\ol S} \neq \emptyset$. Then we have
	\begin{align}
		R_{(\bar k^*,j^*)}(T) &\leq \sqrt{\frac{8T\ol\delta^*\sqrt{n_{\ol k^*}}}{\beta}} +\alpha T + \frac{\sqrt{|U_2|}}{\eta} + \eta T \frac{4\sqrt{\sum_{\bar k\in U_2}(\delta^*_{\bar k})^2}}{\alpha}+ \frac{\sqrt{|U_1^S|}}{\eta_S} \notag \\
		&\quad\quad + \frac{\beta T\tp{\sum_{\ol k\in U_2}\sqrt{n_{\ol k}}}}{\ol \delta ^*} + \frac{2}{1-\bar {\gamma}}\eta_S T \sqrt{|U_1^S|} + \tp{8\eta T \sqrt{|U_2 |} + 8\eta_S T \sqrt{|U_1^S |} + 4\eta_ST \right. \notag \\
		&\quad\quad \left.+\eta T \frac{12\sqrt{\sum_{\bar k\in U_2}(\delta^*_{\bar k})^2}}{\alpha} + \frac{6}{1-\bar {\gamma}}\eta_S T \sqrt{|U_1^S|}+2\sqrt{10\log(|U_1^{\ol S}|+1) T}}\*1[U_1^{\ol S}\neq \emptyset] .\label{eqn: adapt general regret 1}
	\end{align}
	Now we distinguish between the following cases:
	\begin{enumerate}
		\item $U_1^{\ol S}=\emptyset$. In this case the graph is weakly observable possibly with strongly observable parts and if so, all strongly observable arms have self-loops. Since $1-\bar{\gamma}\geq \frac{1}{2}$, choose $\eta=\frac{1}{2}\tp{\frac{|U_2|}{\sum_{\bar{k}\in U_2} (\delta^*_{\bar{k}})^2}}^{\frac{1}{4}}\tp{\frac{\alpha}{T}}^{\frac{1}{2}}$, \Cref{eqn: adapt general regret 1} is at most \begin{align*} 
			&\sqrt{\frac{8T\ol\delta^*\sqrt{n_{\ol k^*}}}{\beta}} +\alpha T + 4\tp{\frac{T}{\alpha}}^{\frac{1}{2}}\tp{\abs{U_2}\sum_{\bar k\in U_2}(\delta^*_{\bar k})^2}^{\frac{1}{4}} 
			 + \frac{\beta T\tp{\sum_{\ol k\in U_2}\sqrt{n_{\ol k}}}}{\ol \delta ^*} + \frac{\sqrt{|U_1^S|}}{\eta_S} + 4\eta_S T \sqrt{|U_1^S|}.
		\end{align*}
		Choosing $\eta_S=\frac{1}{\sqrt{4T}}$, $\alpha=2^{\frac{2}{3}}\frac{\tp{|U_2|\sum_{\bar{k}\in U_2} (\delta^*_{\bar{k}})^2}^{\frac{1}{6}}}{T^{\frac{1}{3}}}$ and $\beta=\frac{2^{\frac{1}{3}}\ol\delta^*n_{\ol k^*}^{\frac{1}{6}}}{T^{\frac{1}{3}}\tp{\sum_{\ol k\in U_2}\sqrt{n_{\ol k}}}^{\frac{2}{3}}}$, we have\begin{equation*}
			R_{(\bar k^*,j^*)}(T)\leq 3\cdot \tp{2\sum_{\ol k\in U_2}\sqrt{n_{\ol k}}}^{\frac{1}{3}}n_{\ol k^*}^{\frac{1}{6}}T^{\frac{2}{3}} +3\cdot 2^{\frac{2}{3}}\tp{|U_2|\sum_{\bar{k}\in U_2} (\delta^*_{\bar{k}})^2}^{\frac{1}{6}} T^{\frac{2}{3}} +4\sqrt{|U_1^S|}T^{\frac{1}{2}}.
		\end{equation*}
	\item $U_1^{\ol S}\neq \emptyset$. The graph is a hybrid of weakly and strongly observable parts and some arms in the strongly observable parts have no self-loops. In this case, since $1-\bar{\gamma}\geq \frac{1}{2}$, \Cref{eqn: adapt general regret 1} equals to \begin{align*}
		&\sqrt{\frac{8T\ol\delta^*\sqrt{n_{\ol k^*}}}{\beta}} +\alpha T+ \frac{\sqrt{|U_2|}}{\eta} + \eta T \frac{16\sqrt{\sum_{\bar k\in U_2}(\delta^*_{\bar k})^2}}{\alpha}+ \frac{\sqrt{|U_1^S|}}{\eta_S}+ \frac{\beta T\tp{\sum_{\ol k\in U_2}\sqrt{n_{\ol k}}}}{\ol \delta ^*} \\
		&\quad\quad   +  8\eta T \sqrt{|U_2 |}+ 24\eta_S T \sqrt{|U_1^S |} +2\sqrt{10\log(|U_1^{\ol S}|+1) T} + 4\eta_ST.
	\end{align*}
	Choosing $\eta=\frac{1}{4}\tp{\frac{|U_2|}{\sum_{\bar{k}\in U_2} (\delta^*_{\bar{k}})^2}}^{\frac{1}{4}}\tp{\frac{\alpha}{T}}^{\frac{1}{2}}$, $\alpha=2^{\frac{4}{3}}\frac{\tp{|U_2|\sum_{\bar{k}\in U_2} (\delta^*_{\bar{k}})^2}^{\frac{1}{6}}}{T^{\frac{1}{3}}}$, $\beta=\frac{2^{\frac{1}{3}}\ol\delta^*n_{\ol k^*}^{\frac{1}{6}}}{T^{\frac{1}{3}}\tp{\sum_{\ol k\in U_2}\sqrt{n_{\ol k}}}^{\frac{2}{3}}}$ and $\eta_S=\frac{1}{2\sqrt{6T}}$, we have \begin{align*}
		R_{(\bar k^*,j^*)}(T)&\leq 6\cdot 2^{\frac{1}{3}}\tp{|U_2|\sum_{\bar{k}\in U_2} (\delta^*_{\bar{k}})^2}^{\frac{1}{6}} T^{\frac{2}{3}} + 3\cdot \tp{2\sum_{\ol k\in U_2}\sqrt{n_{\ol k}}}^{\frac{1}{3}}n_{\ol k^*}^{\frac{1}{6}}T^{\frac{2}{3}} + 4\sqrt{6|U_1^S|}T^{\frac{1}{2}} \\ 
		&\quad\quad+2\sqrt{10\log(|U_1^{\ol S}|+1)} T^{\frac{1}{2}} + \frac{4T^{\frac{1}{3}} |U_2|^{\frac{5}{6}}}{2^{\frac{1}{3}}\tp{\sum_{\bar{k}\in U_2} (\delta^*_{\bar{k}})^2}^{\frac{1}{6}}} + \frac{\sqrt{6}}{3}T^{\frac{1}{2}}.
	\end{align*}
	\end{enumerate}
\end{proof}

\end{document}